\documentclass[11pt]{article}

\usepackage{amsmath, amsfonts, amssymb, amsthm,  amsbsy, graphicx, dsfont, mathtools, enumerate, enumitem}
\usepackage{multirow, float, algorithm, algpseudocode, changepage}
\usepackage{footnote}

\usepackage[blocks, affil-it]{authblk}

\usepackage[numbers, square]{natbib}
\usepackage[CJKbookmarks=true,
            bookmarksnumbered=true,
			bookmarksopen=true,
			colorlinks=true,
			citecolor=red,
			linkcolor=blue,
			anchorcolor=red,
			urlcolor=blue]{hyperref}
\usepackage[usenames]{color}

\usepackage[letterpaper, left=1.2truein, right=1.2truein, top = 1.2truein, bottom = 1.2truein]{geometry}

\usepackage{prettyref,soul}
\usepackage[font=small,labelfont=it]{caption}

\newtheorem{lemma}{Lemma}[section]
\newtheorem{proposition}{Proposition}[section]
\newtheorem{thm}{Theorem}[section]
\newtheorem{definition}{Definition}[section]

\newtheorem{condition}{Condition}[section]

\newrefformat{eq}{(\ref{#1})}
\newrefformat{chap}{Chapter~\ref{#1}}
\newrefformat{sec}{Section~\ref{#1}}
\newrefformat{algo}{Algorithm~\ref{#1}}
\newrefformat{fig}{Fig.~\ref{#1}}
\newrefformat{tab}{Table~\ref{#1}}
\newrefformat{rmk}{Remark~\ref{#1}}
\newrefformat{clm}{Claim~\ref{#1}}
\newrefformat{def}{Definition~\ref{#1}}
\newrefformat{cor}{Corollary~\ref{#1}}
\newrefformat{lmm}{Lemma~\ref{#1}}
\newrefformat{lemma}{Lemma~\ref{#1}}
\newrefformat{prop}{Proposition~\ref{#1}}
\newrefformat{app}{Appendix~\ref{#1}}
\newrefformat{ex}{Example~\ref{#1}}
\newrefformat{exer}{Exercise~\ref{#1}}
\newrefformat{soln}{Solution~\ref{#1}}
\newrefformat{cond}{Condition~\ref{#1}}



\def\text#1{\mbox{\rm #1}}

\newcommand{\argmin}{\mathop{\rm argmin}}
\newcommand{\argmax}{\mathop{\rm argmax}}

\newcommand{\norm}[1]{\|{#1} \|}

\newcommand{\wh}{\widehat}
\newcommand{\wt}{\widetilde}

\newcommand{\opnorm}[1]{\|#1\|_{\rm op}}

\newcommand{\Expect}{\mathbb{E}}

\newcommand{\TV}{{\sf TV}}
\newcommand{\JS}{{\sf JS}}
\newcommand{\sig}{{\sf sigmoid}}
\newcommand{\relu}{{\sf ReLU}}
\newcommand{\ramp}{{\sf ramp}}



\newcommand{\floor}[1]{{\left\lfloor {#1}\right\rfloor}}

\title{Generative Adversarial Nets for Robust Scatter Estimation:\\
A Proper Scoring Rule Perspective
}
\author{Chao Gao$^1$, Yuan Yao$^2$ and Weizhi Zhu$^2$\\
~\\
$^1$University of Chicago and $^2$Hong Kong University of Science and Technology
}


\begin{document}
\maketitle

\begin{abstract}

Robust scatter estimation is a fundamental task in statistics. The recent discovery on the connection between robust estimation and generative adversarial nets (GANs) by \cite{gao2018robust} suggests that it is possible to compute depth-like robust estimators using similar techniques that optimize GANs. In this paper, we introduce a general \textit{learning via classification} framework based on the notion of proper scoring rules. This framework allows us to understand both matrix depth function and various  GANs through the lens of variational approximations of $f$-divergences induced by proper scoring rules. We then propose a new class of robust scatter estimators in this framework by carefully constructing discriminators with appropriate neural network structures. These estimators are proved to achieve the minimax rate of scatter estimation under Huber's contamination model. Our numerical results demonstrate its good performance under various settings against competitors in the literature.

\smallskip

\textbf{Keywords:} robust statistics, neural networks, minimax rate, data depth, contamination model, GAN.
\end{abstract}


\section{Introduction}

We study robust covariance matrix estimation under Huber's contamination model \citep{huber1964robust,huber1965robust}. In this setting, one has observations $X_1,...,X_n\stackrel{iid}{\sim} (1-\epsilon)N(0,\Sigma) + \epsilon Q$ in $\mathbb{R}^p$, and the goal is to estimate the covariance matrix $\Sigma$ using contaminated data without any assumption on the contamination distribution $Q$. Even though many robust covariance matrix estimators have been proposed and analyzed in the literature \citep{maronna1976robust,tyler1987distribution,zuo2000nonparametric,han2013optimal,mitra2014multivariate,wegkamp2016adaptive}, the problem of optimal covariance estimation under the contamination model has not been investigated until the recent work by \cite{chen2018robust}. It was shown in \cite{chen2018robust} that the minimax rate with respect to the squared operator norm $\opnorm{\wh{\Sigma}-\Sigma}^2$ is $\frac{p}{n}\vee\epsilon^2$. An important feature of the minimax rate is its \textit{dimension-free} dependence on the contamination proportion $\epsilon$ through the second term $\epsilon^2$. An estimator that can achieve the minimax rate is given by the maximizer of the covariance matrix depth function \citep{zhang2002some,chen2018robust,paindaveine2018halfspace}.

Despite its statistical optimality, the robust covariance matrix estimator that maximizes the depth function cannot be efficiently computed unless the dimension of the data is extremely low. This is the same weakness that is also shared by Tukey's halfspace depth \citep{tukey1975mathematics} and Rousseeuw and Hubert's regression depth \citep{rousseeuw1999regression}. In fact, even an approximate algorithm that computes these depth functions takes $O(e^{Cp})$ in time \citep{rousseeuw1998computing,amenta2000regression,chan2004optimal,chen2018robust}.

On the other hand, a recent connection between depth functions and \textit{Generative Adversarial Nets} (GANs) was discovered by \cite{gao2018robust}. The GAN \citep{goodfellow2014generative} is a very popular technique in deep learning to learn complex distributions such as the generating process of images. In the formulation of GAN, there is a \textit{generator} and a $\textit{discriminator}$. The generator, modeled by a neural network, is trying to learn a distribution as close to the data as possible, while the discriminator, modeled by another neural network, is trying to distinguish samples from the generator and data. This two-player game will reach its equilibrium when the discriminator cannot tell the difference between samples from the generator and the data, and that means the generator has successfully learned the underlying distribution of the data. Since GAN can be written as a minimax optimization problem, this suggests a mathematical resemblance to the robust estimators that are maximizers of depth functions, which are maximin optimization problems. Indeed, under the framework of $f$-Learning, it was shown by \cite{gao2018robust} that both procedures are minimizers of variational lower bounds of $f$-divergence functions. While GAN minimizes the Jensen-Shannon divergence, the robust estimators induced by depth functions all minimize the total variation distance. An alternative perturbation view on the connection between GAN and robust estimation was later discussed by \cite{zhu2019deconstructing}.

The connection between GAN, or more generally, $f$-GAN \citep{nowozin2016f}, and robust estimation opens a door of approximating these hard-to-compute depth functions by neural networks, and then standard techniques used to train GANs on a daily routine can be applied to compute various robust estimators. Appropriate choices of neural network structures have been discussed in \cite{gao2018robust}, but only for optimal robust location estimation.

In this paper, our goal is to understand appropriate network structures for optimal robust covariance or scatter matrix estimation under the framework of \textit{learning via classification}. Our main result shows that the network structures for optimal robust location estimation may not have sufficient discriminative power for optimal covariance matrix estimation. Therefore, we propose necessary modifications of the network structures so that optimal covariance matrix estimation under Huber's contamination model can be achieved.

The idea of \textit{learning via classification} has longstanding roots in statistics and machine learning \citep{diggle1984monte,freund1996game,buja2005loss,hyvarinen2005estimation,gneiting2007strictly,mohamed2016learning,dawid2007geometry,devroye2012combinatorial,sutherland2016generative,arjovsky2017wasserstein,gutmann2018likelihood,binkowski2018demystifying,baraud2018rho}. In this paper, we further expand this scope of statistcal learning by building a general estimation framework using classification with cost functions derived from proper scoring rules \citep{buja2005loss,gneiting2007strictly,dawid2007geometry}. Our framework is partly inspired by the discussion in \cite{mohamed2016learning}. Using Savage representation \citep{savage1971elicitation}, we identify a class of smooth objective functions that can be used for training optimal robust procedures under Huber's contamination model. The variational lower bounds of these objective functions cover important special cases including GANs and depth functions. 

\paragraph{Main Contributions.}

We summarize our main contributions of the paper as follows.

\begin{itemize}
\item We formulate a general framework of learning via classification using the concept of proper scoring rules. We show that the class of learning procedures under this framework has a one-to-one correspondence to a class of symmetric $f$-divergences, thanks to the Savage representation of the proper scoring rules. As a result, it leads to various forms of GANs and depth functions that are suitable for robust estimation under Huber's contamination model. 
\item We propose appropriate neural network structures for the classifiers in the proper scoring rules in the context of optimal robust covariance matrix estimation. We show that depending on whether the intercept node is included or not, the neural network is required to have at least one or two hidden layers for the robust covariance estimation task.
\item We also study robust scatter matrix estimation under general elliptical distributions. We show that in such a \textit{semiparametric learning} setting, one does not need to use a more complicated \textit{discriminator}, and only the \textit{generator} of the GAN needs to be modified.
\end{itemize}

\paragraph{Connections to the Literature.}

Our work is closely related to the recent developments on the statistical properties of GANs and the literature of robust covariance estimation under Huber's contamination model. For example, generalization bounds of GANs were established by \cite{zhang2017discrimination}. Nonparametric density estimation using GANs was studied by \cite{liang2017well}. Provable guarantees of learning Gaussian distributions with quadratic discriminators were established by \cite{feizi2017understanding}. Theoretical guarantees of learning Gaussian mixtures, exponential families and injective neural network generators were obtained by \cite{bai2018approximability}. The connection between GANs and robust estimation was established by \cite{gao2018robust} and also studied by \cite{zhu2019deconstructing}.
Polynomial-time algorithms for robust covariance matrix estimation under Huber's contamination model have been considered by \cite{lai2016agnostic,diakonikolas2016robust,du2017computationally,diakonikolas2017statistical} among others in the literature, with the main focus on error bounds in Frobenius norm and total variation loss.

\paragraph{Paper Organization.}

The rest of the paper is organized as follows. We develop an estimation framework of proper scoring rules in Section \ref{sec:psr}. In Section \ref{sec:nn}, we discuss robust covariance matrix estimation under this framework, and propose appropriate neural network classes for this task. An extension to simultaneous mean and covariance estimation is considered in Section \ref{sec:mean-cov}. In Section \ref{sect:elliptical}, we consider general robust scatter matrix estimation under elliptical distributions. Our numerical results are given in Section \ref{sec:num}. Section \ref{sec:proof} collects all the proofs in the paper.

\paragraph{Notation.}

We close this section by introducing the notation used in the paper. For $a,b\in\mathbb{R}$, let $a\vee b=\max(a,b)$ and $a\wedge b=\min(a,b)$. For an integer $m$, $[m]$ denotes the set $\{1,2,...,m\}$. Given a set $S$, $|S|$ denotes its cardinality, and $\mathbb{I}_S$ is the associated indicator function. For two positive sequences $\{a_n\}$ and $\{b_n\}$, the relation $a_n\lesssim b_n$ means that $a_n\leq Cb_n$ for some constant $C>0$, and $a_n\asymp b_n$ if both $a_n\lesssim b_n$ and $b_n\lesssim a_n$ hold. For a vector $v\in\mathbb{R}^p$, $\norm{v}$ denotes the $\ell_2$ norm, $\|v\|_{\infty}$ the $\ell_{\infty}$ norm, and $\|v\|_1$ the $\ell_1$ norm. For a matrix $A\in\mathbb{R}^{d_1\times d_2}$, we use $\opnorm{A}$ to denote its operator norm, which is its largest singular value. We use $\mathbb{P}$ and $\mathbb{E}$ to denote generic probability and expectation whose distribution is determined from the context. For two probability distributions $P_1$ and $P_2$, their total variation distance is $\TV(P_1,P_2)=\sup_B|P_1(B)-P_2(B)|$.
The sigmoid function, the ramp function, and the rectified linear unit function (ReLU) are denoted by $\sig(x)=\frac{1}{1+e^{-x}}$, $\ramp(x)=\max(\min(x+1/2,1),0)$, and $\relu(x)=\max(x,0)$.

\section{An Estimation Framework of Proper Scoring Rules}\label{sec:psr}

The idea of learning via classification can be formulated as a two-player game. Given a probability distribution $P$, two players act with strategies $T\in\mathcal{T}$ and $Q\in\mathcal{Q}$ to optimize the some cost function $D_T(P,Q)$. The first player chooses a classification rule $T$ in the class $\mathcal{T}$ to distinguish samples generated by $P$ from samples generated by $Q$. The second player then chooses a probability distribution $Q$ in the class $\mathcal{Q}$ so that samples generated by $Q$ cannot be distinguished from samples generated by $P$ even when the first player uses the optimal classification rule. This minimax game can be formulated as
\begin{equation}
\min_{Q\in\mathcal{Q}}\max_{T\in\mathcal{T}}D_T(P,Q). \label{eq:minimax-game}
\end{equation}
The minimax strategy for the second player $\argmin_{Q\in\mathcal{Q}}\max_{T\in\mathcal{T}}D_T(P,Q)$ can then be used to learn the distribution $P$. This is the principle behind the idea of GANs \citep{goodfellow2014generative,nowozin2016f} and many other statistical learning procedures in the literature \citep{diggle1984monte,freund1996game,buja2005loss,hyvarinen2005estimation,gneiting2007strictly,mohamed2016learning,dawid2007geometry,devroye2012combinatorial,sutherland2016generative,arjovsky2017wasserstein,gutmann2018likelihood,binkowski2018demystifying,baraud2018rho}. In this section, we discuss a class of cost functions $D_T(P,Q)$ induced by proper scoring rules. We show that the divergence function $D_{\mathcal{T}}(P,Q)=\max_{T\in\mathcal{T}}D_T(P,Q)$ can be viewed as a variational lower bound of some $f$-divergence, and the minimax strategy can be used as a robust estimator under Huber's contamination model.

\subsection{Proper Scoring Rules}

Consider a binary event space $\Omega=\{0,1\}$. A probabilistic forecast is a quoted probability $t\in[0,1]$ for either $0$ or $1$ to occur. A scoring rule $S$ is defined as a pair of functions $S(\cdot,1)$ and $S(\cdot,0)$. To be specific, $S(t,1)$ is the forecaster's reward if he or she quotes $t$ when the event $1$ occurs, and $S(t,0)$ is the reward when the event $0$ occurs.

Suppose the event occurs with probability $p$. Then, the expected reward for the forecaster is given by the formula
$$S(t;p)=pS(t,1) + (1-p)S(t,0).$$
It is called a proper scoring rule if
$$S(p;p)\geq S(t;p),\quad\text{for any }t\in[0,1],$$
or equivalently $p\in\argmax_{t\in[0,1]}S(t;p)$. The scoring rule is strictly proper when the equality above holds if and only if $t=p$. In this paper, we restrict our discussion to binary proper scoring rules. Readers interested in more general definitions are referred to \cite{buja2005loss,gneiting2007strictly,dawid2007geometry}.

\subsection{Savage Representation}

A scoring rule $S$ is \textit{regular} if both $S(\cdot,0)$ and $S(\cdot,1)$ are real-valued, except possibly that $S(0,1)=-\infty$ or $S(1,0)=-\infty$. The celebrated Savage representation \citep{savage1971elicitation} asserts that a regular scoring rule $S$ is proper if and only if there is a convex function $G(\cdot)$, such that
\begin{equation}
\begin{cases}
S(t,1) = G(t) + (1-t)G'(t), \\
S(t,0) = G(t) - tG'(t).
\end{cases}\label{eq:savage-rep}
\end{equation}
Here, $G'(t)$ is a subgradient of $G$ at the point $t$. Moreover, the statement also holds for strictly proper scoring rules when convex is replaced by strictly convex.

For any regular scoring rule, the convex function $G(\cdot)$ can be determined by
$$G(t)=S(t;t)=tS(t,1)+(1-t)S(t,0),$$
and Savage representation simply says that $S(t;t)$ is a convex function in $t$.

\subsection{Relation to $f$-Divergence}

Given two probability distributions $P$ and $Q$, a divergence function $D(P,Q)$ measures the difference between $P$ and $Q$. It satisfies the following two properties:
\begin{enumerate}
\item For any $P$ and $Q$, $D(P,Q)\geq 0$.
\item Whenever $P=Q$, $D(P,Q)=0$.
\end{enumerate}
Following the principle outlined in \cite{mohamed2016learning}, we show that a general class of divergence functions can be induced from proper scoring rules. To motivate the derivation, we consider a classification problem by introducing a binary latent variable $y\in\{0,1\}$. The conditional distribution of $X$ given $y$ is specified as $X|(y=1)\sim P$ and $X|(y=0)\sim Q$. We also assume that $\mathbb{P}(y=1)=\frac{1}{2}$ so that the joint distribution $(X,y)$ is fully specified. The classification problem is to find a function $T(X)\in[0,1]$ that forecasts the probability of $y=1$ given $X$. With a proper scoring rule $\{S(\cdot,1), S(\cdot,0)\}$, it is natural to consider the following cost function for the task,
\begin{eqnarray*}
&& \mathbb{E}\left[yS(T(X),1) + (1-y)S(T(X),0)\right] \\
&=& \frac{1}{2}\mathbb{E}_{X\sim P}S(T(X),1) + \frac{1}{2}\mathbb{E}_{X\sim Q}S(T(X),0).
\end{eqnarray*}
Then, one can find a good classification rule $T(\cdot)$ by maximizing the above objective over $T\in\mathcal{T}$. This leads to the following definition of a divergence function,
\begin{equation}
D_{\mathcal{T}}(P,Q)=\max_{T\in\mathcal{T}}\left[\frac{1}{2}\mathbb{E}_{X\sim P}S(T(X),1) + \frac{1}{2}\mathbb{E}_{X\sim Q}S(T(X),0)\right]-G(1/2),\label{eq:divergence-def}
\end{equation}
where $G(\cdot)$ is the convex function in the Savage representation of the proper scoring rule.

The definition (\ref{eq:divergence-def}) can be understood as a variational lower bound of some $f$-divergence. Given a convex function $f(\cdot)$ that satisfies $f(1)=0$, recall that the definition of the $f$-divergence between $P$ and $Q$ is given by
$$D_f(P\|Q)=\int f\left(\frac{dP}{dQ}\right)dQ.$$
\begin{proposition}\label{prop:div-PSR}
Given any regular proper scoring rule $\{S(\cdot,1), S(\cdot,0)\}$ and any class $\mathcal{T}\ni\left\{\frac{1}{2}\right\}$, $D_{\mathcal{T}}(P,Q)$ is a divergence function, and
\begin{equation}
D_{\mathcal{T}}(P,Q)\leq 
 D_f\Big(P\Big\|\frac{1}{2}P+\frac{1}{2}Q\Big),\label{variational-f}
\end{equation}
 where $f(t)=G(t/2)-G(1/2)$. Moreover, whenever $\mathcal{T}\ni \frac{dP}{dP+dQ}$, the inequality above becomes an equality.
\end{proposition}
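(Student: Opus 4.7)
The plan is to verify the two divergence axioms directly using the properness of the scoring rule, then identify the pointwise maximizer of the objective to establish the variational bound. Throughout, I will work with the measure $M=\tfrac{1}{2}(P+Q)$ and the Bayes posterior $r(x)=\frac{dP}{dP+dQ}(x)$, noting that $\frac{dP}{dM}=2r$.

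First I would show $D_{\mathcal{T}}(P,Q)\geq 0$ by plugging in the constant classifier $T\equiv \tfrac{1}{2}\in\mathcal{T}$: the bracket in \eqref{eq:divergence-def} becomes $\tfrac{1}{2}S(1/2,1)+\tfrac{1}{2}S(1/2,0)=S(1/2;1/2)=G(1/2)$, by the defining relation $G(t)=S(t;t)$. The constant subtraction $G(1/2)$ is therefore exactly calibrated to make the divergence nonnegative. Next, when $P=Q$, I would rewrite the objective as $\mathbb{E}_{X\sim P}\bigl[\tfrac{1}{2}S(T(X),1)+\tfrac{1}{2}S(T(X),0)\bigr]=\mathbb{E}_{X\sim P}S(T(X);1/2)$, and invoke properness of $S$ to conclude $S(T(X);1/2)\leq S(1/2;1/2)=G(1/2)$ pointwise, so the max equals $G(1/2)$ and $D_{\mathcal{T}}(P,P)=0$.

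For the variational inequality, I would rewrite the inner objective for an arbitrary $T\in\mathcal{T}$ by changing measure to $M$:
\begin{equation*}
\frac{1}{2}\mathbb{E}_{X\sim P}S(T(X),1)+\frac{1}{2}\mathbb{E}_{X\sim Q}S(T(X),0)=\int\bigl[r\,S(T,1)+(1-r)\,S(T,0)\bigr]dM=\int S(T;r)\,dM.
\end{equation*}
Since $S$ is proper, $S(t;r)\leq S(r;r)=G(r)$ for every $t\in[0,1]$, so pointwise $S(T(X);r(X))\leq G(r(X))$, and taking the maximum over $T\in\mathcal{T}$ preserves the inequality. Thus
\begin{equation*}
D_{\mathcal{T}}(P,Q)\leq \int G(r)\,dM-G(1/2)=\int\Bigl[G\Bigl(\tfrac{1}{2}\tfrac{dP}{dM}\Bigr)-G(1/2)\Bigr]dM=\int f\Bigl(\tfrac{dP}{dM}\Bigr)dM,
\end{equation*}
which is precisely $D_f\bigl(P\|\tfrac{1}{2}P+\tfrac{1}{2}Q\bigr)$ with $f(t)=G(t/2)-G(1/2)$. (I would briefly note that $f(1)=0$ and convexity of $f$ follows from that of $G$, so this is a legitimate $f$-divergence.) Finally, when $r\in\mathcal{T}$ the pointwise bound is attained by choosing $T=r$, upgrading the inequality to equality.

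There is no real obstacle here beyond bookkeeping; the only subtle point is being careful with the constant offset $G(1/2)$, which serves a dual role of enforcing $D_{\mathcal{T}}(P,P)=0$ and converting the Bayes-risk expression $\int G(r)dM$ into an $f$-divergence against the mixture $M$. Writing $dP/dM=2r$ cleanly bridges the variational form (indexed by $r$) and the $f$-divergence form (indexed by $dP/dM$), which is the main conceptual step to articulate clearly.
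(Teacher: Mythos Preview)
Your proof is correct and follows essentially the same approach as the paper: both verify the divergence axioms by evaluating at $T\equiv 1/2$ and invoking properness, then obtain the variational bound by maximizing $S(T;r)$ pointwise at $T=r$. Your change-of-measure presentation via $M=\tfrac{1}{2}(P+Q)$ is slightly cleaner than the paper's direct computation, and your explicit check that $f$ is convex with $f(1)=0$ is a nice addition the paper omits.
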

\begin{proof}
Suppose $\mathcal{T}\ni\left\{\frac{1}{2}\right\}$, then $D_{\mathcal{T}}(P,Q)\geq \frac{1}{2}S(1/2,1)+\frac{1}{2}S(1/2,0)-G(1/2)=0$. When $P=Q$, we have $D_{\mathcal{T}}(P,Q)\leq \max_{t\in[0,1]}\left[G(t)-G(1/2)-(t-1/2)G'(t)\right]\leq 0$ by the convexity of $G(\cdot)$, and therefore $D_{\mathcal{T}}(P,Q)=0$, which implies it is a divergence function. Since $\{S(\cdot,1), S(\cdot,0)\}$ is a proper scoring rule, $p(x)S(T(x),1)+q(x)S(T(x),0)$ is maximized at $T(x)=\frac{p(x)}{p(x)+q(x)}$. Thus,
\begin{eqnarray*}
D_{\mathcal{T}}(P,Q) &\leq& \frac{1}{2}\mathbb{E}_{X\sim P}S\left(\frac{dP}{dP+dQ}(X),1\right) + \frac{1}{2}\mathbb{E}_{X\sim Q}S\left(\frac{dP}{dP+dQ}(X),0\right)-G(1/2) \\
&=& \frac{1}{2}\mathbb{E}_{X\sim P}G\left(\frac{dP}{dP+dQ}(X)\right) + \frac{1}{2}\mathbb{E}_{X\sim Q}G\left(\frac{dP}{dP+dQ}(X)\right)-G(1/2)\\
&=& D_f\Big(P\Big\|\frac{1}{2}P+\frac{1}{2}Q\Big),
\end{eqnarray*}
and obviously the inequality above becomes an equality when $\mathcal{T}\ni \frac{dP}{dP+dQ}$.
\end{proof}

It is worth noting that $D_f\Big(P\Big\|\frac{1}{2}P+\frac{1}{2}Q\Big)$ is in general not symmetric with respect to $P$ and $Q$. However, when the regular proper scoring rule is symmetric in the sense that $S(t,1)=S(1-t,0)$, we have $G(t)=G(1-t)$, or equivalently, $f(t)=f(2-t)$, in which case the corresponding $f$-divergence satisfies
$$D_f\Big(P\Big\|\frac{1}{2}P+\frac{1}{2}Q\Big)=D_f\Big(Q\Big\|\frac{1}{2}P+\frac{1}{2}Q\Big),$$
and is symmetric.

\subsection{Variational Lower Bounds and GANs}

The variational form of the divergence function makes it easy to define a sample version of (\ref{eq:divergence-def}). Replacing $\mathbb{E}_{X\sim P}$ in (\ref{eq:divergence-def}) by the empirical measure, we have a divergence function between $\frac{1}{n}\sum_{i=1}^n\delta_{X_i}$ and $Q$, which is a useful objective function for statistical estimation. Given a class of probability measures $\mathcal{Q}$, the induced estimator of $P$ is defined by
\begin{equation}
\wh{P} = \argmin_{Q\in\mathcal{Q}}\max_{T\in\mathcal{T}}\left[\frac{1}{n}\sum_{i=1}^n S(T(X_i),1) + \mathbb{E}_{X\sim Q}S(T(X),0)\right]. \label{eq:GAN}
\end{equation}
We drop the term $-G(1/2)$ in (\ref{eq:divergence-def}) because it is a constant that does not affect the definition of (\ref{eq:GAN}).
The formula (\ref{eq:GAN}) has an interpretation of a minimax game between two players. The goal of the first player is to find the best discriminator $T$ that learns whether a sample is from the empirical distribution or the model distribution $Q$. The second player is to find a model distribution $Q$ as close to the empirical distribution as possible so that the first player cannot tell the difference. In the context of deep learning, both the discriminator class $\mathcal{T}$ and the generator class $\mathcal{Q}$ are modeled by neural networks, and (\ref{eq:GAN}) is recognized as the technique of \textit{Generative Adversarial Nets} proposed by \cite{goodfellow2014generative}. The relation between GANs and proper scoring rules was discussed by \cite{mohamed2016learning} in the context of learning implicit models.

\subsection{Examples}\label{subsect:sc-example}

\begin{enumerate}
\item \textit{Log Score.} The log score is perhaps the most commonly used rule because of its various intriguing properties \citep{jiao2015justification}. The scoring rule with $S(t,1)=\log t$ and $S(t,0)=\log(1-t)$ is regular and strictly proper. Its Savage representation is given by the convex function $G(t)=t\log t+(1-t)\log(1-t)$, which is interpreted as the negative Shannon entropy of $\text{Bernoulli}(t)$. The corresponding divergence function $D_{\mathcal{T}}(P,Q)$, according to Proposition \ref{prop:div-PSR}, is a variational lower bound of the Jensen-Shannon divergence
$$\JS(P,Q)=\frac{1}{2}\int\log\left(\frac{dP}{dP+dQ}\right)dP+\frac{1}{2}\int\log\left(\frac{dQ}{dP+dQ}\right)dQ+\log 2.$$
Its sample version (\ref{eq:GAN}) is the original GAN proposed by \cite{goodfellow2014generative} that is widely used in learning distributions of images.
\item \textit{Zero-One Score.} The zero-one score $S(t,1)=2\mathbb{I}\{t\geq 1/2\}$ and $S(t,0)=2\mathbb{I}\{t<1/2\}$ is also known as the misclassification loss. This is a regular proper scoring rule but not strictly proper. The induced divergence function $D_{\mathcal{T}}(P,Q)$ is a variational lower bound of the total variation distance
$$\TV(P,Q)=P\left(\frac{dP}{dQ}\geq 1\right)-Q\left(\frac{dP}{dQ}\geq 1\right)=\frac{1}{2}\int |dP-dQ|.$$
The sample version (\ref{eq:GAN}) is recognized as the TV-GAN that was extensively studied by \cite{gao2018robust} in the context of robust estimation.
\item \textit{Quadratic Score.} Also known as the Brier score \citep{brier1950verification}, the definition is given by $S(t,1)=-(1-t)^2$ and $S(t,0)=-t^2$. The corresponding convex function in the Savage representation is given by $G(t)=-t(1-t)$. By Proposition \ref{prop:div-PSR}, the divergence function (\ref{eq:divergence-def}) induced by this regular strictly proper scoring rule is a variational lower bound of the following divergence function,
$$\Delta(P,Q)=\frac{1}{8}\int\frac{(dP-dQ)^2}{dP+dQ},$$
known as the triangular discrimination. The sample version (\ref{eq:GAN}) belongs to the family of least-squares GANs proposed by \cite{mao2017least}.
\item \textit{Boosting Score.} The boosting score was introduced by \cite{buja2005loss} with $S(t,1)=-\left(\frac{1-t}{t}\right)^{1/2}$ and $S(t,0)=-\left(\frac{t}{1-t}\right)^{1/2}$ and has an connection to the AdaBoost algorithm. The corresponding convex function in the Savage representation is given by $G(t)=-2\sqrt{t(1-t)}$. The induced divergence function $D_{\mathcal{T}}(P,Q)$ is thus a variational lower bound of the squared Hellinger distance
$$H^2(P,Q)=\frac{1}{2}\int\left(\sqrt{dP}-\sqrt{dQ}\right)^2.$$
\item \textit{Beta Score.} A general Beta family of proper scoring rules was introduced by \cite{buja2005loss} with $S(t,1)=-\int_t^1c^{\alpha-1}(1-c)^{\beta}dc$ and $S(t,0)=-\int_0^tc^{\alpha}(1-c)^{\beta-1}dc$ for any $\alpha,\beta>-1$. The log score, the quadratic score and the boosting score are special cases of the Beta score with $\alpha=\beta=0$, $\alpha=\beta=1$, $\alpha=\beta=-1/2$. The zero-one score is a limiting case of the Beta score by letting $\alpha=\beta\rightarrow\infty$. Moreover, it also leads to asymmetric scoring rules with $\alpha\neq \beta$.
\end{enumerate}

\subsection{TV-GAN and The Matrix Depth Function}

With the zero-one loss, (\ref{eq:GAN}) is specialized as
\begin{equation}
\wh{P}=\argmin_{Q\in\mathcal{Q}}\max_{T\in\mathcal{T}}\left[\frac{1}{n}\sum_{i=1}^n\mathbb{I}\{T(X_i)\geq 1/2\}+\mathbb{E}_{X\sim Q}\mathbb{I}\{T(X)<1/2\}\right].\label{eq:TV-GAN}
\end{equation}
We also consider a variation of (\ref{eq:TV-GAN}) defined by
\begin{equation}
\wh{P}=\argmin_{Q\in\mathcal{Q}}\max_{T\in\mathcal{T}_Q}\left[\frac{1}{n}\sum_{i=1}^n\mathbb{I}\{T(X_i)\geq 1/2\}+\mathbb{E}_{X\sim Q}\mathbb{I}\{T(X)<1/2\}\right]. \label{eq:TV-GAN-local}
\end{equation}
The subtle difference of (\ref{eq:TV-GAN-local}) compared with (\ref{eq:TV-GAN}) is the dependence of the discriminator class on $Q$. In fact, both (\ref{eq:TV-GAN-local}) and (\ref{eq:TV-GAN}) can be regarded as the minimizers of variational lower bounds of the total variation distance.
The connection between (\ref{eq:TV-GAN-local}) and various depth functions in robust estimation was discussed in an $f$-Learning framework by \cite{gao2018robust}.

For the purpose of covariance matrix estimation, we show that (\ref{eq:TV-GAN-local}) leads to the definition of the matrix depth function \citep{zhang2002some,chen2018robust,paindaveine2018halfspace}. Let $\mathcal{E}_p$ be the set of all $p\times p$ covariance matrices. We set
$$\mathcal{Q}=\left\{N(0,\Gamma): \Gamma\in\mathcal{E}_p\right\}.$$
and
$$\mathcal{T}_{N(0,\Gamma)}=\left\{T=\frac{dN(0,\beta\wt{\Gamma})}{dN(0,\beta\wt{\Gamma})+dN(0,\beta\Gamma)}: \wt{\Gamma}^{-1}=\Gamma^{-1}+\wt{r}uu^T\in\mathcal{E}_p, |\wt{r}|\leq r, \|u\|=1\right\},$$
where $\beta$ is a scalar determined by the equation $\mathbb{P}(N(0,1)\leq\sqrt{\beta})=3/4$.
The choice of the local discriminator class $\mathcal{T}_{N(0,\Gamma)}$ is motivated by the conclusion of Proposition \ref{prop:div-PSR} that the optimal discriminator between $P$ and $Q$ is $\frac{dP}{dP+dQ}$. However, there are two distinguished features. The first one is that the discriminator uses a slightly different form from the optimal one by including a multiplicative factor $\beta$, in order to adjust the ratio between standard deviation and median absolute deviation. The second one is that $\wt{\Gamma}^{-1}$ only ranges in a rank-one neighborhood of $\Gamma^{-1}$ to ensure Fisher consistency.

By direct calculation, we have
\begin{equation}
\mathbb{I}\left\{\frac{dN(0,\beta\wt{\Gamma})}{dN(0,\beta\wt{\Gamma})+dN(0,\beta\Gamma)}(X)\geq 1/2\right\}=\mathbb{I}\left\{\wt{r}|u^TX|^2\leq \log(1+\beta\wt{r}u^T\Gamma u)\right\}. \label{eq:matrix-depth-event}
\end{equation}
Therefore, we can write (\ref{eq:TV-GAN-local}) as
\begin{eqnarray}
\nonumber \wh{\Sigma} &=& \argmin_{\Gamma\in\mathcal{E}_p}\max_{\substack{\|u\|=1\\|\wt{r}|\leq r}}\Bigg[\frac{1}{n}\sum_{i=1}^n\mathbb{I}\left\{\wt{r}|u^TX_i|^2\leq \log(1+\beta\wt{r}u^T\Gamma u)\right\} \\
\label{eq:matrix-depth-r} && \qquad\qquad\qquad\qquad- \mathbb{P}_{X\sim N(0,\Gamma)}\left\{\wt{r}|u^TX|^2\leq \log(1+\beta\wt{r}u^T\Gamma u)\right\}\Bigg].
\end{eqnarray}
Since $\lim_{\wt{r}\rightarrow 0}\frac{\log (1+\beta\wt{r}u^T\Gamma u)}{\wt{r}u^T\Gamma u}=\beta$, the limiting event of (\ref{eq:matrix-depth-event}) is either $\mathbb{I}\{|u^TX|^2\leq \beta u^T\Gamma u\}$ or $\mathbb{I}\{|u^TX|^2\geq \beta u^T\Gamma u\}$, depending on whether $\wt{r}$ tends to zero from left or from right. Moreover, $\mathbb{P}_{X\sim N(0,\Gamma)}\{|u^TX|^2\leq \beta u^T\Gamma u\}=1/2$ by the definition of $\beta$. Therefore, as $r\rightarrow 0$, the formula (\ref{eq:matrix-depth-r}) becomes
\begin{equation}
\wh{\Sigma} = \argmin_{\Gamma\in\mathcal{E}_p}\max_{\|u\|=1}\left[\frac{1}{n}\sum_{i=1}^n\mathbb{I}\{|u^TX_i|^2\leq \beta u^T\Gamma u\}\vee \frac{1}{n}\sum_{i=1}^n\mathbb{I}\{|u^TX_i|^2\geq \beta u^T\Gamma u\}\right], \label{eq:matrix-depth}
\end{equation}
which recovers the definition of the matrix depth function.

\section{Network Structures for Robust Covariance Matrix Estimation}\label{sec:nn}

The main goal of the paper is to study the property of the estimator (\ref{eq:GAN}) in the context of robust covariance matrix estimation. Define
$$\mathcal{E}_p(M)=\left\{\Gamma\in\mathcal{E}_p:\opnorm{\Gamma}\leq M\right\}.$$
For covariance matrix estimation, we write (\ref{eq:GAN}) as
\begin{equation}
\wh{\Sigma} = \argmin_{\Gamma\in\mathcal{E}_p(M)}\max_{T\in\mathcal{T}}\left[\frac{1}{n}\sum_{i=1}^n S(T(X_i),1) + \mathbb{E}_{X\sim N(0,\Gamma)}S(T(X),0)\right]. \label{eq:covariance-GAN}
\end{equation}
This estimator extends the one induced by the matrix depth function (\ref{eq:matrix-depth}) to general proper scoring rules.

We consider i.i.d. observations drawn from Huber's $\epsilon$-contamination model \citep{huber1964robust,huber1965robust}. That is,
\begin{equation}
X_1,...,X_n \stackrel{iid}{\sim} (1-\epsilon)N(0,\Sigma) + \epsilon Q. \label{eq:Huber-con}
\end{equation}
In other words, each observation has an $\epsilon$ probability to be drawn from an unknown contamination distribution $Q$. A more general data generating process is called the \textit{strong contamination model}. In such a setting, we have
\begin{equation}
X_1,...,X_n \stackrel{iid}{\sim} P\quad\text{for some }P\text{ satisfying }\TV(P,N(0,\Sigma))\leq\epsilon, \label{eq:strong-con}
\end{equation}
which means that the observations are drawn from an unknown probability distribution in a total variation neighborhood of $N(0,\Sigma)$. It is easy to see that (\ref{eq:Huber-con}) implies (\ref{eq:strong-con}) so that (\ref{eq:strong-con}) is a more general notion of contamination. While the contamination is only allowed to be added into good samples in (\ref{eq:Huber-con}), the adversarial can now choose to remove some good samples after looking at the data in the setting of (\ref{eq:strong-con}). See \cite{diakonikolas2016robust} for a detailed discussion on various contamination models.

Under both (\ref{eq:Huber-con}) and (\ref{eq:strong-con}), the minimax rate of covariance matrix estimation with respect to the loss function $\opnorm{\wh{\Sigma}-\Sigma}^2$ is $\frac{p}{n}\vee\epsilon^2$, and can be achieved by (\ref{eq:matrix-depth}). This fact was proved by \cite{chen2018robust} under Huber's contamination model (\ref{eq:Huber-con}), and the same proof can be extended to the strong contamination model (\ref{eq:strong-con}).

Despite the statistical optimality of the estimator (\ref{eq:matrix-depth}), its optimization is computationally infeasible in practice whenever the dimension exceeds $10$ \citep{chen2018robust}. This is partly due to the fact that the zero-one loss is not smooth. However, even the smooth version of the TV-GAN was shown to be computationally intractable, which motivates \cite{gao2018robust} to consider alternative proper scoring rules such as the log score in the setting of robust mean estimation.

In this section, we study the statistical properties of (\ref{eq:covariance-GAN}) with general proper scoring rules. We will discuss appropriate choices of the discriminator class $\mathcal{T}$ for robust covariance matrix estimation. To leverage the computational strategies developed in the deep learning community \citep{goodfellow2014generative,radford2015unsupervised,salimans2016improved}, we consider $\mathcal{T}$ that is some family of neural network classifiers. Then, the structure of the neural nets is essential in determining the statistical properties of (\ref{eq:covariance-GAN}). We will present two network structures that are not appropriate for robust covariance matrix estimation, and then show simple modifications of the two structures lead to optimal estimation.

\subsection{Inappropriate Network Structures}

Consider the following two discriminator classes. The first class contains two-layer sigmoid neural nets,
\begin{equation}
\mathcal{T}_1 = \left\{T(x)=\sig\left(\sum_{j\geq 1}w_j\sig(u_j^Tx)\right): \sum_{j\geq 1}|w_j|\leq\kappa,u_j\in\mathbb{R}^p\right\}. \label{eq:net-structure-1}
\end{equation}
The second class also contains two-layer neural nets, but uses ReLU activations in the hidden layer, 
\begin{equation}
\mathcal{T}_2 = \left\{T(x)=\sig\left(\sum_{j\geq 1}w_j\relu(u_j^Tx)\right): \sum_{j\geq 1}|w_j|\leq\kappa,\|u_j\|\leq 1\right\}. \label{eq:net-structure-2}
\end{equation}
The network structures of $\mathcal{T}_1$ and $\mathcal{T}_2$ are visualized in Figure \ref{fig:T1T2}.
\begin{figure}[!ht]
\centering
\includegraphics[width=.70\textwidth]{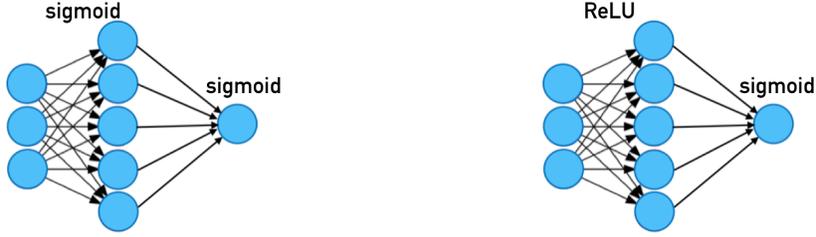}
\caption{Two structures of neural nets that are not suitable for robust covariance matrix estimation.}\label{fig:T1T2} 
\end{figure}
The reasons that they do not work are different for the two structures. To construct concrete counterexamples, we focus on the log score in this section. That is, we consider the estimator
\begin{equation}
\wh{\Sigma} = \argmin_{\Gamma\in\mathcal{E}_p(M)}\max_{T\in\mathcal{T}}\left[\frac{1}{n}\sum_{i=1}^n \log T(X_i) + \mathbb{E}_{X\sim N(0,\Gamma)}\log(1-T(X))\right]. \label{eq:covariance-GAN-log}
\end{equation}

The first class (\ref{eq:net-structure-1}) leads to optimal robust mean estimation, but fails to learn the covariance matrix even if there is no contamination in the data. The following result shows the capability of (\ref{eq:net-structure-1}) in learning a mean vector.
\begin{proposition}\label{prop:robust-mean}
Consider the estimator
$$\wh{\theta}=\argmin_{\eta\in\mathbb{R}^p}\max_{T\in\mathcal{T}_1}\left[\frac{1}{n}\sum_{i=1}^n \log T(X_i) + \mathbb{E}_{X\sim N(\eta,I_p)}\log(1-T(X))\right],$$
with $\mathcal{T}_1$ specified by (\ref{eq:net-structure-1}). Assume $\frac{p}{n}+\epsilon^2\leq c$ for some sufficiently small constant $c>0$, and set $\kappa=O\left(\sqrt{\frac{p}{n}}+\epsilon\right)$. With i.i.d. observations $X_1,...,X_n\sim P$, we have
$$\|\wh{\theta}-\theta\|^2\leq C\left(\frac{p}{n}\vee\epsilon^2\right),$$
with probability at least $1-e^{-C'(p+n\epsilon^2)}$ uniformly over all $\|\theta\|\leq M=O(1)$ and all $P$ such that $\TV(P,N(\theta,I_p))\leq\epsilon$. The constants $C,C'>0$ are universal.
\end{proposition}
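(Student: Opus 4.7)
I will follow the standard basic-inequality template for GAN-based $M$-estimators. Write
\begin{equation*}
L(\eta) \;=\; \max_{T\in\mathcal{T}_1}\left[\frac{1}{n}\sum_{i=1}^n \log T(X_i) + \mathbb{E}_{X\sim N(\eta,I_p)}\log(1-T(X))\right];
\end{equation*}
the constant $T\equiv 1/2$ lies in $\mathcal{T}_1$ (take all $w_j=0$), so $L(\eta)-2\log(1/2)\ge 0$ and equals the variational lower bound of the JS divergence described in Proposition \ref{prop:div-PSR}, applied to the empirical measure and $N(\eta,I_p)$. The defining inequality $L(\wh\theta)\le L(\theta)$ combined with (i) an upper bound $L(\theta)-2\log(1/2)\lesssim\kappa(\sqrt{p/n}+\epsilon)$ and (ii) a lower bound $L(\wh\theta)-2\log(1/2)\gtrsim\kappa\|\wh\theta-\theta\|-C\kappa^2-C\kappa(\sqrt{p/n}+\epsilon)$ will yield the claim after dividing by $\kappa$ and invoking the tuning $\kappa\asymp\sqrt{p/n}+\epsilon$.

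\textbf{Upper bound at $\theta$.} The constraint $\sum_j|w_j|\le\kappa$ forces $T(x)\in[1/2-\kappa/4,\,1/2+\kappa/4]$ for every $T\in\mathcal{T}_1$ whenever $\kappa$ is small, so $\log T$ and $\log(1-T)$ differ from $\log(1/2)$ by at most $O(\kappa)$ and are $O(1)$-Lipschitz in the preactivation $z=\sum_j w_j\sig(u_j^T x)\in[-\kappa,\kappa]$. A Ledoux--Talagrand contraction for the outer Lipschitz link combined with a weight-scale Rademacher contraction reduces the inner complexity to that of $\{\sig(u^T\cdot):u\in\mathbb{R}^p\}$, which has pseudo-dimension $p$ and hence Rademacher complexity $O(\sqrt{p/n})$; Talagrand's concentration then gives
\begin{equation*}
\sup_{T\in\mathcal{T}_1}\Bigl|\tfrac{1}{n}\sum_{i=1}^n\log T(X_i) - \mathbb{E}_{P^\ast}\log T(X)\Bigr| \;\lesssim\; \kappa\bigl(\sqrt{p/n}+\sqrt{t/n}\bigr)
\end{equation*}
with probability $1-e^{-t}$; taking $t\asymp p+n\epsilon^2$ matches the stated probability. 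Replacing $P^\ast$ by $N(\theta,I_p)$ contributes a further $2\|\log T-\log(1/2)\|_\infty\,\TV(P^\ast,N(\theta,I_p))\lesssim\kappa\epsilon$, and Proposition \ref{prop:div-PSR} forces the population variational JS divergence between $N(\theta,I_p)$ and itself to vanish. Summing these three contributions produces the desired upper bound.

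\textbf{Lower bound at $\wh\theta$.} Let $\delta=\|\wh\theta-\theta\|$ and $u=(\wh\theta-\theta)/\delta$. I plug in the single-node test $T_\star(x)=\sig\bigl(w\,\sig(u^T x)\bigr)\in\mathcal{T}_1$ with $|w|\le\kappa$. A first-order Taylor expansion of $\log\sig$ and $\log(1-\sig)$ around $0$ reduces the population value to $(w/2)\bigl[g(u^T\theta)-g(u^T\wh\theta)\bigr]+O(\kappa^2)$, where $g(\mu):=\mathbb{E}_{Y\sim N(\mu,1)}\sig(Y)$ has strictly positive derivative $g'(\mu)=\mathbb{E}_{Y\sim N(\mu,1)}\sig'(Y)$ bounded below by a constant $c_0(M)>0$ uniformly for $|\mu|\le M+\delta$. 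The mean value theorem then gives $|g(u^T\theta)-g(u^T\wh\theta)|\ge c_0\delta$; picking $w=\pm\kappa$ with the correct sign and invoking the same uniform concentration and contamination bounds as above, but now applied to the single $T_\star\in\mathcal{T}_1$, produces $L(\wh\theta)-2\log(1/2)\ge c_0\kappa\delta - C\kappa^2 - C\kappa(\sqrt{p/n}+\epsilon)$.

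\textbf{Conclusion and main obstacle.} Chaining $L(\wh\theta)\le L(\theta)$ with the two bounds, dividing by $\kappa$, and absorbing $\kappa$ into $\sqrt{p/n}+\epsilon$ (valid since $\kappa\asymp\sqrt{p/n}+\epsilon$) delivers $\delta\lesssim\sqrt{p/n}+\epsilon$, i.e.~$\|\wh\theta-\theta\|^2\lesssim p/n\vee\epsilon^2$. The principal technical obstacle is the lower bound: $\mathcal{T}_1$ has no intercept node, so the inner sigmoid $\sig(u^T x)$ cannot be translated to sit near the midpoint $u^T(\theta+\wh\theta)/2$, forcing me to work in the constant-slope (linear) regime of the sigmoid rather than in the threshold regime; the hypothesis $\|\theta\|\le M=O(1)$ is precisely what keeps $c_0(M)$ bounded away from zero in that regime. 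A secondary, routine issue is localizing $\wh\theta$ in a bounded ball so that $\delta\lesssim 1$ (otherwise the constant $c_0(M+\delta)$ could degrade); this can be handled by observing that distant $\eta$ make $T_\star$ a near-indicator with strictly positive population value, forcing $\wh\theta$ to be bounded a priori.
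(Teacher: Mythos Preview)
Your proposal is correct and follows essentially the same approach as the paper: both establish the basic inequality $F(N(\theta,I_p),N(\wh\theta,I_p))+\log 4\lesssim\kappa(\sqrt{p/n}+\epsilon)$ via uniform concentration (Lemma~\ref{lem:complexity-T1}) plus a TV-stability bound of order $\kappa\epsilon$, then plug in a single-node discriminator $\sig(\pm\kappa\,\sig(v^Tx))$ and Taylor-expand in $\kappa$ to reduce to the function $g(\mu)=\int\sig(z+\mu)\phi(z)\,dz$ (the paper's $h$). One minor correction on localization: the paper does not argue that $T_\star$ becomes a ``near-indicator'' for distant $\eta$ (it cannot, since $|w|\le\kappa$ keeps $T_\star$ near $1/2$); instead it first deduces $|g(v^T\wh\theta)-g(v^T\theta)|\lesssim\sqrt{p/n}+\epsilon$ unconditionally and then inverts via the strict monotonicity and boundedness of $g$, which automatically forces $|v^T(\wh\theta-\theta)|$ small without any a~priori bound on $\delta$.
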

The success of robust estimation via a two-layer neural network was first proved by \cite{gao2018robust} under Huber's $\epsilon$-contamination model. Proposition \ref{prop:robust-mean} extends the result to the strong contamination model. However, the same neural network structure cannot learn a covariance matrix, as is shown by the following result.
\begin{proposition}\label{prop:cov-flat}
With $\mathcal{T}_1$ specified by (\ref{eq:net-structure-1}), the function
$$F(\Sigma,\Gamma)=\max_{T\in\mathcal{T}_1}\left[\mathbb{E}_{X\sim N(0,\Sigma)}\log T(X) + \mathbb{E}_{X\sim N(0,\Gamma)}\log(1-T(X))\right]$$
is a constant for all $\Sigma,\Gamma\in\mathcal{E}_p$.
\end{proposition}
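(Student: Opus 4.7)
The plan is to show $F(\Sigma,\Gamma)\equiv -2\log 2$, which gives the claimed constancy. The lower bound is immediate: the constant discriminator $T\equiv 1/2$ lies in $\mathcal{T}_1$ (take every $w_j=0$), and substituting gives $\mathbb{E}\log(1/2)+\mathbb{E}\log(1/2)=-2\log 2$. All the work lies in the matching upper bound, which must assert that the two-layer sigmoid class in (\ref{eq:net-structure-1}) has no discriminative power whatsoever between any two centered Gaussians.

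The key observation is a decomposition of the inner pre-activation. For any $T=\sig(g)\in\mathcal{T}_1$ with $g(x)=\sum_j w_j\,\sig(u_j^T x)$, set $C:=\sum_j w_j$ and $\tilde g(x):=\sum_j w_j(\sig(u_j^T x)-1/2)$, so that $g=\tilde g+C/2$. Using the identity $\sig(-t)=1-\sig(t)$, one checks that $\tilde g$ is odd: $\tilde g(-x)=-\tilde g(x)$. Since $X\sim N(0,\Sigma)$ satisfies $X\stackrel{d}{=}-X$, the random variable $\tilde g(X)$ has a law symmetric about zero. This is the only place the Gaussian structure is used, and in fact the argument goes through for any centered symmetric distribution.

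The upper bound then comes from this symmetry combined with concavity of $\log\sig$. Averaging the representations of $\log T(X)$ under $X$ and $-X$,
\[
\mathbb{E}_{N(0,\Sigma)}\log T(X)=\tfrac{1}{2}\,\mathbb{E}_{N(0,\Sigma)}\bigl[\log\sig(C/2+\tilde g(X))+\log\sig(C/2-\tilde g(X))\bigr].
\]
Because $(\log\sig)''(x)=-\sig(x)\sig(-x)<0$, applying concavity pointwise to each realization $s=\tilde g(X)$ yields $\tfrac{1}{2}[\log\sig(C/2+s)+\log\sig(C/2-s)]\leq\log\sig(C/2)$, whence $\mathbb{E}_{N(0,\Sigma)}\log T(X)\leq\log\sig(C/2)$. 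The identical argument applied to $1-T(X)=\sig(-g(X))=\sig(-C/2-\tilde g(X))$ gives $\mathbb{E}_{N(0,\Gamma)}\log(1-T(X))\leq\log\sig(-C/2)$. Adding these and using $\sig(C/2)(1-\sig(C/2))\leq 1/4$ produces $L(T,\Sigma,\Gamma)\leq -2\log 2$ uniformly in $T\in\mathcal{T}_1$ and $\Sigma,\Gamma\in\mathcal{E}_p$. Combined with the trivial lower bound from $T\equiv 1/2$, this proves $F(\Sigma,\Gamma)=-2\log 2$.

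The main conceptual hurdle is spotting the odd-part decomposition $g=\tilde g+C/2$: without re-routing the constant $C=\sum_j w_j$ into the anchor point of the concavity bound, the symmetry $X\stackrel{d}{=}-X$ does not act cleanly on $\log\sig(g(X))$, and the dependence on $\Sigma$ and $\Gamma$ fails to collapse. Once this decomposition is in place, both expectations reduce to $C$-dependent constants that are absorbed by the crude inequality $\sig(C/2)(1-\sig(C/2))\leq 1/4$; no control on $\|u_j\|$ or on $\kappa$ enters, which is consistent with the proposition being a pure flatness statement rather than a capacity bound.
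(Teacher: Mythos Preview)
Your proof is correct, and the constant $-2\log 2=-\log 4$ matches the paper. However, the route is genuinely different from the paper's. The paper fixes $u$ and treats the objective as a function of the outer weight vector $w$: it computes $\nabla_w F$ and $\nabla_w^2 F$, uses the symmetry identity $\mathbb{E}_{N(0,\Sigma)}\sig(u_j^TX)=1/2$ to show the gradient vanishes at $w=0$, and observes the Hessian is negative semi-definite, so $w=0$ is the global maximizer and $F\equiv-\log 4$. Your argument instead works pointwise in $x$: you split the pre-activation $g=\tilde g+C/2$ into an odd part plus a constant, use $X\stackrel{d}{=}-X$ to symmetrize, and apply scalar concavity of $\log\sig$ to collapse each expectation to $\log\sig(\pm C/2)$. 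Both arguments rely on exactly the same two ingredients (symmetry of the law about zero, and a concavity), but yours exploits concavity of $t\mapsto\log\sig(t)$ while the paper exploits concavity of the full objective in $w$. Your version is arguably more elementary---no derivatives---and makes the extension to arbitrary centered symmetric laws explicit; the paper's version has the minor advantage of directly identifying the maximizing discriminator ($w=0$, i.e., $T\equiv 1/2$) rather than just bounding the value.
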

In an ideal situation where $\epsilon=0$ and $n=\infty$, the estimator (\ref{eq:covariance-GAN-log}) becomes $\argmin_{\Gamma}F(\Sigma,\Gamma)$. However, Proposition \ref{prop:cov-flat} shows that the objective function $F(\Sigma,\Gamma)$ is completely flat, and thus every $\Gamma$ is a global minimizer.

The second discriminator class (\ref{eq:net-structure-2}) has a different problem. It actually leads to optimal covariance matrix estimation when $\epsilon=0$, but it does not lead to robust estimation.
\begin{proposition}\label{prop:cov-no-con}
Consider the estimator (\ref{eq:covariance-GAN-log}), where $\mathcal{T}=\mathcal{T}_2$ is specified by (\ref{eq:net-structure-2}) with at least two units in the hidden layer. Assume $\frac{p}{n}\leq c$ for some sufficiently small constant $c>0$, and set $\kappa=O\left(\sqrt{\frac{p}{n}}\right)$. With i.i.d. observations $X_1,...,X_n\sim N(0,\Sigma)$, we have
$$\opnorm{\wh{\Sigma}-\Sigma}^2\leq C\frac{p}{n},$$
with probability at least $1-e^{-C'p}$ uniformly over all $\opnorm{\Sigma}\leq M=O(1)$. The constants $C,C'>0$ are universal.
\end{proposition}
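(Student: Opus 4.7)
The argument is the standard $M$-estimator template: combine a uniform concentration of the sample objective over $\mathcal T_2$ with a quadratic ``identifiability'' lower bound for the population objective around $\Sigma$. Define
\[
F(T,\Gamma)=\Expect_{X\sim N(0,\Sigma)}\log T(X)+\Expect_{X\sim N(0,\Gamma)}\log(1-T(X)),
\]
and let $\wh F(T,\Gamma)$ be its sample counterpart (the first summand replaced by $n^{-1}\sum_{i=1}^n\log T(X_i)$). Put $F(\Gamma)=\max_{T\in\mathcal T_2}F(T,\Gamma)$ and $\wh F(\Gamma)=\max_{T\in\mathcal T_2}\wh F(T,\Gamma)$. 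Because $\wh\Sigma$ minimizes $\wh F(\cdot)$, a standard $|\max-\max|\le\sup|\cdot|$ estimate gives the basic inequality
\[
F(\wh\Sigma)-F(\Sigma)\;\le\;2\,\Delta_n,\qquad\Delta_n:=\sup_{T\in\mathcal T_2}\Big|\tfrac1n\sum_{i=1}^n\log T(X_i)-\Expect_{X\sim N(0,\Sigma)}\log T(X)\Big|,
\]
where the supremum does not involve $\Gamma$ because $\wh F(T,\Gamma)-F(T,\Gamma)$ only depends on the sampled summand.

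\textbf{Step 1 (Uniform concentration).} I would bound $\Delta_n\le C\kappa\sqrt{p/n}$ with probability $\ge 1-e^{-c'p}$. For sub-Gaussian inputs, the Rademacher complexity of $\{\sum_j w_j\relu(u_j^Tx):\sum_j|w_j|\le\kappa,\;\|u_j\|\le 1\}$ is $\le\kappa\,\Expect\|X\|/\sqrt n\lesssim\kappa\sqrt{p/n}$ by a standard two-layer calculation (peel off the outer $\ell_1$ ball, use $|\relu(u^Tx)|\le|u^Tx|$ and sub-Gaussianity). Since $t\mapsto\log\sig(t)$ is $1$-Lipschitz, the Ledoux--Talagrand contraction principle preserves the bound after composition with the sigmoid and logarithm. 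Talagrand's concentration inequality, together with a truncation of $\|X_i\|$ at radius $O(\sqrt{p+\log n})$ to handle the unboundedness of $\log\sig$, then upgrades this expected complexity to the claimed high-probability statement.

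\textbf{Step 2 (Quadratic identifiability).} I would then establish
\[
F(\Gamma)-F(\Sigma)\;\gtrsim\;\min\!\left\{\frac{\opnorm{\Gamma-\Sigma}^2}{M},\;\frac{\kappa\,\opnorm{\Gamma-\Sigma}}{\sqrt M}\right\}.
\]
Since $|t|=\relu(t)+\relu(-t)$ and at least two hidden units are available, the sub-family $T_{w,u}(x)=\sig(w|u^Tx|)$ with $\|u\|=1$ and $|w|\le\kappa/2$ lies in $\mathcal T_2$. For such $T_{w,u}$, set $a^2=u^T\Sigma u$ and $b^2=u^T\Gamma u$; then $h(w):=F(T_{w,u},\Gamma)$ is smooth and concave with $h(0)=-2\log 2=F(\Sigma)$, derivative $h'(0)=\tfrac12\sqrt{2/\pi}\,(a-b)$, and $|h''(w)|\lesssim a^2+b^2\le 2M$ uniformly on $[-\kappa/2,\kappa/2]$. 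Optimizing $w$ gives $\max_{|w|\le\kappa/2}h(w)-h(0)\gtrsim\min\{(a-b)^2/M,\;\kappa|a-b|\}$. Choosing $u$ to be a unit top eigenvector of $\Sigma-\Gamma$ yields $|a^2-b^2|=\opnorm{\Sigma-\Gamma}$ and $a+b\le 2\sqrt M$, hence $|a-b|\ge\opnorm{\Sigma-\Gamma}/(2\sqrt M)$, and the displayed lower bound follows.

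\textbf{Synthesis and main obstacle.} Combining the two steps yields
$\min\{\opnorm{\wh\Sigma-\Sigma}^2/M,\;\kappa\opnorm{\wh\Sigma-\Sigma}/\sqrt M\}\lesssim\kappa\sqrt{p/n}$.
With $\kappa\asymp\sqrt{p/n}$ the quadratic branch is binding and we conclude $\opnorm{\wh\Sigma-\Sigma}^2\lesssim p/n$ as claimed. The main technical hurdle is the Rademacher/contraction estimate on $\mathcal T_2$ under sub-Gaussian inputs, which is precisely where the constraint $\|u_j\|\le 1$ present in $\mathcal T_2$—and absent in $\mathcal T_1$—is exploited. Once that is in hand, the identifiability step reduces to a one-variable Taylor expansion enabled by the $|u^Tx|$ trick, promoting the first-moment calculation from the mean-estimation analogue of \cite{gao2018robust} to a second-moment statement suitable for covariance estimation.
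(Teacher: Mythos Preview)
Your proposal is correct and follows essentially the same approach as the paper: the same basic inequality via uniform concentration (the paper's Lemma~\ref{lem:complexity-T2}), the same $|u^Tx|=\relu(u^Tx)+\relu(-u^Tx)$ trick exploiting the two hidden units, and the same second-order Taylor analysis of the resulting one-parameter family $T_{w,u}(x)=\sig(w|u^Tx|)$. The only notable technical difference is in Step~1: the paper avoids your truncation device by writing $X_i=\Sigma^{1/2}Z_i$ with $Z_i\sim N(0,I_p)$, showing the supremum is $O(\kappa/\sqrt n)$-Lipschitz in $(Z_1,\dots,Z_n)$, and applying Gaussian concentration of Lipschitz functions directly; in Step~2 it simply fixes $w$ at the boundary rather than optimizing over $|w|\le\kappa/2$, which yields the same conclusion once $\kappa\asymp\sqrt{p/n}$.
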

The comparison between Proposition \ref{prop:cov-flat} and Proposition \ref{prop:cov-no-con} shows that the subtle difference between the activation functions in the hidden layer directly affects the consistency of covariance matrix estimation. A simple change from sigmoid to ReLU leads to an optimal error rate in Proposition \ref{prop:cov-no-con}. However, as long as there is contamination in the data, the structure does not lead to robust estimation. We show a one-dimensional counterexample in the following proposition.
\begin{proposition}\label{prop:T2-not-robust}
With $\mathcal{T}_2$ specified by (\ref{eq:net-structure-2}), we have
$$[(1-\epsilon)\sigma +\epsilon\tau]^2\in\argmin_{\gamma^2}\max_{T\in\mathcal{T}_2}\left[\mathbb{E}_{X\sim (1-\epsilon)N(0,\sigma^2)+\epsilon N(0,\tau^2)}\log T(X) + \mathbb{E}_{X\sim N(0,\gamma^2)}\log(1-T(X))\right],$$
for any $\sigma^2,\tau^2>0$.
\end{proposition}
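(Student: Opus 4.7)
The plan is to exploit the one-dimensional structure together with the symmetry of all the distributions involved to reduce the inner maximization to a concave scalar program. First, I would reparameterize $\mathcal{T}_2$. In dimension one, $\relu(u_j x)$ equals $u_j x_+$ if $u_j \geq 0$ and $|u_j| x_-$ if $u_j \leq 0$, where $x_+ := \max(x, 0)$ and $x_- := \max(-x, 0)$. Combined with $\sum_j |w_j| \leq \kappa$ and $|u_j| \leq 1$, this shows that every $T \in \mathcal{T}_2$ takes the form $T(x) = \sig(a x_+ + b x_-)$ with $|a| + |b| \leq \kappa$, and that any such $(a, b)$ is realized by at most two hidden units.

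The core step is a separation identity for the objective. Since $x_+ x_- = 0$ and the distributions $P = (1-\epsilon) N(0, \sigma^2) + \epsilon N(0, \tau^2)$ and $N(0, \gamma^2)$ are all symmetric about the origin, the substitution $X \leftrightarrow -X$ on the $\{X < 0\}$ half of the integral yields
\begin{equation*}
\mathbb{E}_{X \sim N(0, s^2)} \log \sig(a X_+ + b X_-) \;=\; \phi(a s) + \phi(b s), \qquad \phi(c) := \mathbb{E}\big[\log \sig(c Z)\, \mathbb{I}\{Z > 0\}\big],
\end{equation*}
for $Z \sim N(0,1)$. Applying this to $P$ and to $N(0, \gamma^2)$ decomposes the objective as
\begin{equation*}
L(\gamma, a, b) \;=\; g(a, \gamma) + g(b, \gamma), \qquad g(c, \gamma) := (1-\epsilon)\phi(c \sigma) + \epsilon \phi(c \tau) + \phi(-c \gamma),
\end{equation*}
and reduces the inner maximization to two decoupled concave problems.

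Since $c \mapsto \log \sig(c z)$ is concave for every $z$, $\phi$ is concave on $\mathbb{R}$ and hence so is $g(\cdot, \gamma)$. A direct computation gives $\phi'(0) = \tfrac{1}{2} \mathbb{E}[Z_+] > 0$ and
\begin{equation*}
\partial_c g(0, \gamma) \;=\; \big[(1 - \epsilon)\sigma + \epsilon \tau - \gamma\big]\, \phi'(0),
\end{equation*}
which vanishes at $\gamma = \gamma^* := (1 - \epsilon) \sigma + \epsilon \tau$. By concavity, $c = 0$ is then the global maximum of $g(\cdot, \gamma^*)$, so $\max_{|a| + |b| \leq \kappa} L(\gamma^*, a, b) = 2 g(0, \gamma^*) = 4 \phi(0) = -2 \log 2$. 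For any $\gamma$, the feasible point $(a, b) = (0, 0)$ gives $\max_{|a| + |b| \leq \kappa} L(\gamma, a, b) \geq L(\gamma, 0, 0) = 4 \phi(0)$, and crucially $g(0, \gamma)$ does not depend on $\gamma$. Hence $(\gamma^*)^2$ minimizes $\gamma^2 \mapsto \max_{T \in \mathcal{T}_2} L$, which is the claim.

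The main obstacle is spotting the separation $L(\gamma, a, b) = g(a, \gamma) + g(b, \gamma)$; without it the saddle problem over the sigmoid-of-ReLU class looks intractable. Once this identity is in hand, everything else follows from a one-line first-order condition, and the non-robust minimizer $(1 - \epsilon) \sigma + \epsilon \tau$ emerges transparently: the contamination $P$ enters $\partial_c g(0, \gamma)$ only through the linear combination $(1-\epsilon)\sigma + \epsilon\tau$ weighted by $\phi'(0)$, forcing a convex combination of standard deviations rather than any robust estimate of the bulk variance $\sigma^2$.
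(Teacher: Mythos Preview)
Your proof is correct and follows essentially the same route as the paper: both show the objective is concave in the discriminator parameters, that its gradient at the constant discriminator $T\equiv 1/2$ vanishes exactly when $\gamma=(1-\epsilon)\sigma+\epsilon\tau$ (via $\mathbb{E}_{N(0,s^2)}\relu(uX)\propto |u|s$), and that $T\equiv 1/2$ yields the value $-\log 4$, which lower-bounds the inner maximum for every $\gamma$. Your one-dimensional reparameterization $T(x)=\sig(ax_++bx_-)$ and the separation $L=g(a,\gamma)+g(b,\gamma)$ are a clean simplification of the paper's direct argument over the full weight vector $w$, but the mechanism is the same; note only that the constraint $|a|+|b|\leq\kappa$ couples the two scalar problems, so they are not literally ``decoupled'', though this is harmless here since the unconstrained maximizers $a=b=0$ are jointly feasible.
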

The proposition considers a setting with an $\epsilon$ fraction of contaminated observations generated from $N(0,\tau^2)$. Then, in the ideal situation with $n=\infty$, $[(1-\epsilon)\sigma +\epsilon\tau]^2$ is global minimizer. Since the value of $\tau$ is not restricted, this global minimizer can be arbitrarily far away from the variance $\sigma^2$ of the good samples.

\subsection{Appropriate Network Structures}

The network structures (\ref{eq:net-structure-1}) and (\ref{eq:net-structure-2}) can both be slightly modified to achieve optimal robust covariance matrix estimation. For the first discriminator class (\ref{eq:net-structure-1}), we only need to add an intercept node in the input layer, which leads to the definition of the following discriminator class,
\begin{equation}
\mathcal{T}_3 = \left\{T(x)=\sig\left(\sum_{j\geq 1}w_j\sig(u_j^Tx+b_j)\right): \sum_{j\geq 1}|w_j|\leq\kappa,u_j\in\mathbb{R}^p,b_j\in\mathbb{R}\right\}. \label{eq:net-structure-3}
\end{equation}
For the second class (\ref{eq:net-structure-2}), we need to add an extra sigmoid hidden layer. This gives
\begin{eqnarray}
\nonumber \mathcal{T}_4 &=& \Bigg\{T(x) = \sig\left(\sum_{j\geq 1}w_j\sig\left(\sum_{l=1}^Hv_{jl}\relu(u_l^Tx)\right)\right): \\
\label{eq:net-structure-4} &&\qquad\qquad\qquad\qquad\qquad\qquad\qquad \sum_{j\geq 1}|w_j|\leq \kappa_1, \sum_{l=1}^H|v_{jl}|\leq\kappa_2, \|u_l\|\leq 1\Bigg\}.
\end{eqnarray}
\begin{figure}[!ht]
\centering
\includegraphics[width=.80\textwidth]{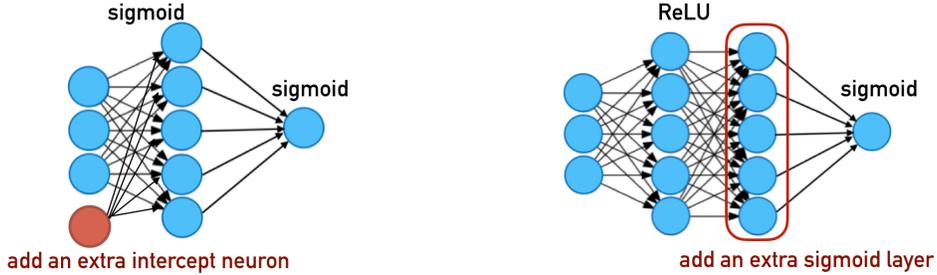}
\caption{Simple fixes of the two network structures (\ref{eq:net-structure-1}) and (\ref{eq:net-structure-2}).}\label{fig:T3T4} 
\end{figure}
These two modifications of (\ref{eq:net-structure-1}) and (\ref{eq:net-structure-2}) are illustrated in Figure \ref{fig:T3T4}.

We study the covariance matrix estimator (\ref{eq:covariance-GAN}) with a general regular proper scoring rule. Recall that a regular proper scoring rule admits the Savage representation (\ref{eq:savage-rep}) with a convex function $G(\cdot)$. We impose the following assumption on the convex function $G(\cdot)$.
\begin{condition}\label{cond:G}
We assume $G^{(2)}(1/2)>0$ and $G^{(3)}(t)$ is continuous at $t=1/2$. Moreover, there is a universal constant $c_0>0$, such that $2G^{(2)}(1/2)\geq G^{(3)}(1/2)+c_0$.
\end{condition}
Condition \ref{cond:G} implies the scoring rule $\{S(\cdot,1), S(\cdot,0)\}$ is induced by two smooth functions, which excludes the zero-one loss. This is fine, because the zero-one loss was already studied as the matrix depth function in \cite{chen2018robust}. This paper only focuses on scoring rules that are feasible to optimize, and thus it is sufficient to restrict our results to smooth ones. The condition $2G^{(2)}(1/2)\geq G^{(3)}(1/2)+c_0$ is automatically satisfied by a symmetric scoring rule, because $S(t,1)=S(1-t,0)$ immediately implies that $G^{(3)}(1/2)=0$. For the Beta score with $S(t,1)=-\int_t^1c^{\alpha-1}(1-c)^{\beta}dc$ and $S(t,0)=-\int_0^tc^{\alpha}(1-c)^{\beta-1}dc$ for any $\alpha,\beta>-1$, it is easy to check that such a $c_0$ (only depending on $\alpha,\beta$) exists as long as $|\alpha-\beta|<1$.

\begin{thm}\label{thm:cov-T3}
Consider the estimator (\ref{eq:covariance-GAN}) that is induced by a regular proper scoring rule that satisfies Condition \ref{cond:G}, and $\mathcal{T}=\mathcal{T}_3$ is specified by (\ref{eq:net-structure-3}). Assume $\frac{p}{n}+\epsilon^2\leq c$ for some sufficiently small constant $c>0$, and set $\kappa=O\left(\sqrt{\frac{p}{n}}+\epsilon\right)$. Then, under the data generating process (\ref{eq:strong-con}), we have
$$\opnorm{\wh{\Sigma}-\Sigma}^2\leq C\left(\frac{p}{n}\vee\epsilon^2\right),$$
with probability at least $1-e^{-C'(p+n\epsilon^2)}$ uniformly over all $\opnorm{\Sigma}\leq M=O(1)$. The constants $C,C'>0$ are universal.
\end{thm}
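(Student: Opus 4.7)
The plan is to follow the standard GAN-style minimax analysis: show that $\max_{T \in \mathcal{T}_3}$ of the empirical objective is small at $\Gamma = \Sigma$ and is lower bounded by a multiple of $\opnorm{\Gamma - \Sigma}$ at any $\Gamma$ far from $\Sigma$. Since $\wh{\Sigma}$ minimizes the max, combining the two pins down the error rate. Throughout I write $F_n(\Gamma, T) = \tfrac{1}{n}\sum_i S(T(X_i), 1) + \Expect_{N(0,\Gamma)} S(T(X), 0)$ and $F(\Gamma, T)$ for its population version with $\tfrac{1}{n}\sum_i \delta_{X_i}$ replaced by the true $P$ satisfying $\TV(P, N(0,\Sigma)) \leq \epsilon$.

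The first step is linearization around $T \equiv 1/2$. For every $T \in \mathcal{T}_3$ the outer sigmoid gives $|T(x) - 1/2| \leq C\kappa$ uniformly, so $h(x) := T(x) - 1/2$ is small. Using the Savage representation (\ref{eq:savage-rep}), a direct computation gives $\partial_t S(1/2, 1) = -\partial_t S(1/2, 0) = \tfrac{1}{2}G^{(2)}(1/2)$ and $\partial_t^2 S(1/2, 1) + \partial_t^2 S(1/2, 0) = -2G^{(2)}(1/2)$. Taylor expanding $F(\Gamma, T)$ to second order yields
\begin{equation*}
F(\Gamma, T) - F(\Gamma, 1/2) \;=\; \tfrac{1}{2}G^{(2)}(1/2)\bigl[\Expect_P h - \Expect_{N(0,\Gamma)} h\bigr] \;-\; \tfrac{1}{2}G^{(2)}(1/2)\,\Expect_{\tfrac{P+N(0,\Gamma)}{2}} h^2 \;+\; R(h),
\end{equation*}
where the skew correction involves $G^{(3)}(1/2)$ and is, thanks precisely to the inequality $2G^{(2)}(1/2) \geq G^{(3)}(1/2) + c_0$ in Condition~\ref{cond:G}, dominated by the negative quadratic term so that $F(\Gamma, T) - F(\Gamma, 1/2) \leq c_1 \bigl|\Expect_P h - \Expect_{N(0,\Gamma)} h\bigr|^2 / \Expect_{(P+N(0,\Gamma))/2} h^2$. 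Thus the maximum over the scale of $h$ reduces to maximizing an integral probability metric in $h$.

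For the upper bound at $\Gamma = \Sigma$, I would use empirical-process control on $\mathcal{T}_3$: the outer layer has $\ell_1$-bounded weights, the outer sigmoid is $1$-Lipschitz, and the inner sigmoid layer has Rademacher complexity of order $\sqrt{p/n}$ since the inner nodes have VC-dimension $O(p)$. This gives $\sup_{T \in \mathcal{T}_3} |F_n(\Sigma, T) - F(\Sigma, T)| \lesssim \kappa\sqrt{p/n}$ with the stated probability. Adding the $\kappa\epsilon$ cost of passing from the true $P$ to $N(0,\Sigma)$ via $\TV \leq \epsilon$ and boundedness of the scoring rule near $1/2$, and noting that $F(\Sigma, T) \leq F(\Sigma, 1/2)$ by properness, we get $\max_{T \in \mathcal{T}_3} F_n(\Sigma, T) - F(\Sigma, 1/2) \lesssim \kappa(\sqrt{p/n} + \epsilon)$.

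The third and hardest step is the lower bound: for any $\Gamma$ with $\Delta := \opnorm{\Gamma - \Sigma}$, construct a witness $T_\Gamma \in \mathcal{T}_3$ such that $F(\Gamma, T_\Gamma) - F(\Gamma, 1/2) \gtrsim \kappa\,\Delta$. Take $u$ to be a leading eigenvector of $\Gamma - \Sigma$ and try to realize $T_\Gamma(x) = \tfrac{1}{2} + c\,\kappa\,\bigl[\phi(u^T x) - m_\Gamma\bigr]$ with $\phi$ a Lipschitz surrogate for $(u^T x)^2$ clipped to a bounded range and $m_\Gamma$ chosen so that $\Expect_{N(0,\Gamma)}\phi(u^T X) = m_\Gamma$. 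The key construction step is to realize $\phi$ as a short sum of shifted sigmoids $\sum_j w_j \sig(u^T x + b_j)$ with $\sum_j |w_j| = O(1)$: this is where the intercept nodes $b_j$ are essential, since $\sig(u^T x) - 1/2$ is odd in $u^T x$ and cannot approximate even functions, which is exactly the obstruction of Proposition~\ref{prop:cov-flat}. A two-point shifted construction $w_+\sig(u^T x - b) + w_-\sig(-u^T x - b) - w_+\sig(u^T x + b) - w_-\sig(-u^T x + b)$ gives a bounded even approximation of a ramp in $(u^T x)^2$. For this witness, $\Expect_P h - \Expect_{N(0,\Gamma)} h$ equals $c\,\kappa$ times the population difference of $\phi(u^T X)$, which because of the choice of $u$ is comparable to $u^T(\Sigma - \Gamma)u = \pm \Delta$ up to the approximation error of $\phi$ and the $\epsilon$-contamination gap. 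Plugging back into the Taylor expansion yields $F(\Gamma, T_\Gamma) - F(\Gamma, 1/2) \gtrsim \kappa\,\Delta$, again after absorbing $O(\kappa(\sqrt{p/n} + \epsilon))$ empirical-process slack.

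Chaining the three steps, the minimizing property of $\wh{\Sigma}$ forces $\kappa\,\opnorm{\wh{\Sigma} - \Sigma} \lesssim \kappa(\sqrt{p/n} + \epsilon)$, i.e.\ $\opnorm{\wh{\Sigma} - \Sigma}^2 \lesssim p/n \vee \epsilon^2$, as claimed. The main obstacle will be the witness construction: one needs $\phi$ smooth enough to be approximated by $O(1)$-many bounded-weight sigmoids (to keep $\mathcal{T}_3$-membership), sharp enough that its population mean difference faithfully captures $u^T(\Sigma - \Gamma)u$ to leading order (so the lower bound scales with $\Delta$, not with $\Delta^2$ or worse), and stable under the $\epsilon$-contamination so that adversarial mass placed outside the support of $\phi$ cannot inflate the objective. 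Controlling these three at once, together with the sub-exponential tail of $u^T X$ under $P$, dictates the choice $\kappa \asymp \sqrt{p/n} + \epsilon$ in the hypothesis.
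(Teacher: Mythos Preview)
Your overall architecture matches the paper's: bound the empirical objective at $\Gamma=\Sigma$ by $O(\kappa(\sqrt{p/n}+\epsilon))$ via the $\TV$-stability plus Rademacher/VC control of Lemma~\ref{lem:complexity-T3}, then exhibit a witness in $\mathcal{T}_3$ forcing the objective at $\Gamma=\wh\Sigma$ to be at least of order $\kappa$ times a one-dimensional discrepancy in $v^T\Sigma v$ versus $v^T\wh\Sigma v$, and chain. Where you diverge is in the witness and in how the Taylor expansion is organized.

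The paper does not try to approximate a clipped quadratic with several shifted sigmoids. It uses a \emph{single} hidden unit: $w_1=\kappa$, $u_1=v/\sqrt{v^T\Sigma v}$, $b_1=1$ (and $w_1=-\kappa$ for the symmetric direction). The point is that once the intercept is nonzero, $\mathbb{E}_{Z\sim N(0,1)}\sig(\delta Z-1)$ is a nonconstant function of $\delta$; in fact the paper defines $h(t)=\int\sig(tz-1)\phi(z)\,dz$, checks $|h'(1)|$ is a positive constant, and immediately gets $|\sqrt{v^T\wh\Sigma v/v^T\Sigma v}-1|\lesssim\sqrt{p/n}+\epsilon$. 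This bypasses your multi-term even construction entirely and explains in one line why the intercept is decisive: a single shifted sigmoid already separates scales, whereas an unshifted one integrates to $1/2$ regardless of scale (Proposition~\ref{prop:cov-flat}). Your four-term even combination would work, but you would then have to verify that its Gaussian mean is a Lipschitz-from-below function of $u^T\Sigma u$ on $[0,M]$, which is an extra calculation the paper avoids.

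The Taylor expansion is also organized differently. You expand in $h=T-1/2$ and optimize the scale to get a ratio bound; the paper instead fixes the witness shape and expands $f(t;\Delta)=F_{tw,u,b}(N(0,\Sigma),N(0,\wh\Sigma))-2G(1/2)$ in the scalar $t$ at $t=0$, computing $f'(0)=\tfrac18 G''(1/2)\,\mathbb{E}[\sig(Z-1)-\sig(\Delta Z-1)]$ and bounding $f''$ via Condition~\ref{cond:G}. The role of $2G''(1/2)\geq G'''(1/2)+c_0$ is exactly to make the specific combination $\bar p(1/2)G''(1/2)-\tilde p(1/2)G'''(1/2)$ (and its $q$-analogue) positive, so that $\inf_{|t|\leq\kappa}f''(t)\geq -2C_2$ and hence $f(\kappa;\Delta)\geq \kappa f'(0)-C_2\kappa^2$. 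This is the same mechanism you invoke, but applied after the witness is chosen rather than before; it keeps the argument one-dimensional and avoids having to control $\Expect h^2$ in the denominator of your ratio bound.

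In short: your plan is sound and would close, but the paper's single-unit witness with $b_1=1$ and its scalar-$t$ expansion are both strictly simpler than what you propose, and worth adopting.
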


The theorem shows that the discriminator class (\ref{eq:net-structure-3}) leads to optimal robust covariance matrix estimation, while the only difference between (\ref{eq:net-structure-1}) and (\ref{eq:net-structure-3}) is the inclusion of the intercept neuron in the bottom layer of the network in the class (\ref{eq:net-structure-3}). In contrast to the common understanding that whether to include the intercept neuron in a neural network structure is only a matter of data normalization, here for the purpose of robust covariance matrix estimation using proper scoring rules, it is a fundamental issue.

\begin{thm}\label{thm:cov-T4}
Consider the estimator (\ref{eq:covariance-GAN}) that is induced by a regular proper scoring rule that satisfies Condition \ref{cond:G}, and $\mathcal{T}=\mathcal{T}_4$ is specified by (\ref{eq:net-structure-4}). Assume $\frac{p}{n}+\epsilon^2\leq c$ for some sufficiently small constant $c>0$. Set $H\geq 2$, $\kappa_1=O\left(\sqrt{\frac{p}{n}}+\epsilon\right)$, and $1\leq\kappa_2=O(1)$. Then, under the data generating process (\ref{eq:strong-con}), we have
$$\opnorm{\wh{\Sigma}-\Sigma}^2\leq C\left(\frac{p}{n}\vee\epsilon^2\right),$$
with probability at least $1-e^{-C'(p+n\epsilon^2)}$ uniformly over all $\opnorm{\Sigma}\leq M=O(1)$. The constants $C,C'>0$ are universal.
\end{thm}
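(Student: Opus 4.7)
The plan is to follow the same three-step structure used in the proof of Theorem~\ref{thm:cov-T3} for the class $\mathcal{T}_3$: (i) uniform control of the empirical process $F_n - F$ on $\mathcal{T}_4 \times \mathcal{E}_p(M)$, where $F(\Gamma) = \max_{T \in \mathcal{T}_4}\{\mathbb{E}_P S(T(X),1) + \mathbb{E}_{N(0,\Gamma)} S(T(X),0)\}$; (ii) a contamination bound between $F$ and its uncontaminated counterpart $F^{(0)}$ in which $P$ is replaced by $N(0,\Sigma)$; and (iii) a curvature lower bound
\begin{equation*}
F^{(0)}(\Gamma) - F^{(0)}(\Sigma) \;\gtrsim\; \min\bigl(\opnorm{\Sigma-\Gamma}^2,\,\kappa_1\opnorm{\Sigma-\Gamma}\bigr).
\end{equation*}
Given these three ingredients, optimality of $\wh\Sigma$ combined with $\kappa_1 = O(\sqrt{p/n}+\epsilon)$ delivers $\opnorm{\wh\Sigma - \Sigma}^2 \lesssim p/n \vee \epsilon^2$ in the standard way. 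The essential novelty compared to Theorem~\ref{thm:cov-T3} lies in step~(iii), because $\mathcal{T}_4$ has no bias nodes in the first hidden layer; the matrix-depth-type discriminators $\sig(a(c\pm u^T x))$ used for $\mathcal{T}_3$ are no longer available, and the construction has to go through $|u^T x| = \relu(u^T x) + \relu(-u^T x)$ and a biasless second-layer sigmoid, which is where the assumption $H \ge 2$ is used.

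For steps~(i) and~(ii) I would first localize: the constraint $\sum_j |w_j| \le \kappa_1$ together with $\sig(\cdot) \in (0,1)$ forces $|T(x) - 1/2| \le \kappa_1/4$ uniformly in $T \in \mathcal{T}_4$ and $x \in \mathbb{R}^p$. Writing $\delta = T - 1/2$ and Taylor expanding the Savage representation at $1/2$ using Condition~\ref{cond:G} gives
\begin{equation*}
\mathbb{E}_P S(T,1) + \mathbb{E}_{N(0,\Gamma)} S(T,0) - 2G(1/2) \;=\; \tfrac{G''(1/2)}{2}\bigl[\mathbb{E}_P \delta - \mathbb{E}_{N(0,\Gamma)}\delta\bigr] + R(T,\Gamma),
\end{equation*}
with a quadratic-in-$\delta$ remainder satisfying $|R(T,\Gamma)| \le C\kappa_1^2$. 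The class $\{x \mapsto \delta(x) : T \in \mathcal{T}_4\}$ is the composition of a linear layer with $\|u_l\| \le 1$, the $1$-Lipschitz $\relu$, a linear layer with $\ell_1$-norm $\le \kappa_2 = O(1)$, the $\tfrac14$-Lipschitz $\sig$, and a final linear layer with $\ell_1$-norm $\le \kappa_1$; by Rademacher-complexity estimates (contraction plus Gaussian concentration) its uniform empirical deviation is $O(\kappa_1\sqrt{p/n})$ with probability $1 - e^{-C'(p + n\epsilon^2)}$, while the contamination term $|F - F^{(0)}|$ is $O(\kappa_1\epsilon)$ because $|S(T(x),1) - S(1/2,1)| = O(\kappa_1)$ uniformly. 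Both contributions fit into $O(p/n \vee \epsilon^2)$.

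Step~(iii) is the heart of the argument. Given $\Gamma \in \mathcal{E}_p(M)$, pick a unit vector $u$ with $|u^T(\Sigma-\Gamma)u| = \opnorm{\Sigma - \Gamma}$, set $s = \mathrm{sign}(u^T(\Sigma-\Gamma)u)$, and construct $T_\Gamma \in \mathcal{T}_4$ as follows: take $H = 2$ first-layer units with $u_1 = u$ and $u_2 = -u$, so that $\relu(u_1^T x) + \relu(u_2^T x) = |u^T x|$; include one second-layer unit with $v_{1,1} = v_{1,2} = \beta$, producing $\sig(\beta |u^T x|)$, and a second with $v_{2,\cdot} = 0$, producing the constant $\sig(0) = \tfrac12$; and set the output weights $w_1 = -w_2 = sw$ for some $w > 0$. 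This yields $T_\Gamma(x) = \sig(sw[\sig(\beta|u^T x|) - \tfrac12])$. Using $\mathbb{E}_{N(0,\Xi)}|u^T X| = \sqrt{2 u^T \Xi u /\pi}$ and Lipschitz linearizations of the inner and outer sigmoids, a direct computation gives
\begin{equation*}
\mathbb{E}_{N(0,\Sigma)}(T_\Gamma - \tfrac12) - \mathbb{E}_{N(0,\Gamma)}(T_\Gamma - \tfrac12) \;\asymp\; w\beta\cdot s\bigl[\sqrt{u^T\Sigma u} - \sqrt{u^T \Gamma u}\bigr] \;\asymp\; \frac{w\beta}{\sqrt{M}}\,\opnorm{\Sigma - \Gamma},
\end{equation*}
since $\sqrt{u^T\Sigma u} + \sqrt{u^T\Gamma u} \le 2\sqrt{M}$. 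Fixing $\beta$ a small universal constant (so $2\beta \le \kappa_2$) and optimizing $w \in [0,\kappa_1/2]$ against the quadratic remainder $\asymp w^2$ from step~(i), with the effective quadratic coefficient kept strictly positive by the inequality $2G''(1/2) \ge G'''(1/2) + c_0$ of Condition~\ref{cond:G}, then yields the claimed bound $F^{(0)}(\Gamma) - F^{(0)}(\Sigma) \gtrsim \min(\opnorm{\Sigma-\Gamma}^2,\,\kappa_1\opnorm{\Sigma-\Gamma})$.

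Combining the three pieces, optimality of $\wh\Sigma$ gives $F^{(0)}(\wh\Sigma) - F^{(0)}(\Sigma) \le 2\sup_\Gamma |F_n - F^{(0)}|(\Gamma) = O(p/n \vee \epsilon^2)$; plugging into the curvature bound and dichotomizing on whether $\opnorm{\wh\Sigma - \Sigma} \le \kappa_1$ (quadratic branch) or $\opnorm{\wh\Sigma - \Sigma} > \kappa_1$ (linear branch), both cases produce $\opnorm{\wh\Sigma - \Sigma}^2 \lesssim p/n \vee \epsilon^2$. The hardest part will be step~(iii): without first-layer biases the only symmetric non-constant function of $u^T x$ accessible is $|u^T x|$, and one must verify that its Gaussian expectations still produce a gap proportional to $\opnorm{\Sigma - \Gamma}$ while simultaneously respecting all three norm constraints on $u_l$, $v_{jl}$ and $w_j$; Condition~\ref{cond:G} enters precisely here to ensure that the cubic remainders in the Taylor expansions of $\sig$ and $S$ do not dominate the quadratic signal, since for symmetric scoring rules $G'''(1/2) = 0$ automatically but in general one only has the weaker inequality $2G''(1/2) \ge G'''(1/2) + c_0$.
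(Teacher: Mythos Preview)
Your proposal is correct and follows essentially the same route as the paper: bound the empirical process via Lemma~\ref{lem:complexity-T4}, control the contamination term by the uniform bound $|T-1/2|\le\kappa_1$, and for the curvature step plug in a discriminator built from $|u^Tx|=\relu(u^Tx)+\relu(-u^Tx)$ and Taylor-expand in the outer weight. Two small deviations are worth noting. First, the paper's construction is slightly leaner: it uses a single second-layer unit, $T(x)=\sig(\kappa_1\sig(|v^Tx|/2))$, whereas you spend an extra output unit to subtract the constant $1/2$; the constant drops out of the linear term anyway, so this is unnecessary. Second, your phrase ``Lipschitz linearizations of the inner and outer sigmoids'' is loose for the inner one, since $\beta|u^Tx|$ is not small; the paper instead defines $h(t)=\int\sig(t|z|/2)\phi(z)\,dz$ and observes that $h'(t)=\int\sig(t|z|/2)(1-\sig(t|z|/2))|z|\phi(z)\,dz$ is bounded below by a constant on $[0,\sqrt{M}]$, which directly gives $|h(\sqrt{v^T\Sigma v})-h(\sqrt{v^T\wh{\Sigma}v})|\gtrsim|\sqrt{v^T\Sigma v}-\sqrt{v^T\wh{\Sigma}v}|$ without any small-argument approximation.
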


The ReLU activation function is widely used in training deep neural network models because of its superior optimization properties \citep{glorot2011deep}. To estimate a covariance matrix, Theorem \ref{thm:cov-T4} shows that it can only be used after the two top layers. Otherwise, according to Proposition \ref{prop:T2-not-robust}, the estimator would not be robust against arbitrary contamination.

\section{Simultaneous Estimation of Mean and Covariance}\label{sec:mean-cov}

In this section, we consider a more general setting where the data generating process is
\begin{equation}
X_1,...,X_n \stackrel{iid}{\sim} P\quad\text{for some }P\text{ satisfying }\TV(P,N(\theta,\Sigma))\leq\epsilon. \label{eq:strong-con-mean-cov}
\end{equation}
That is, both the mean vector $\theta$ and the covariance matrix $\Sigma$ are unknown. Covariance matrix estimation with an unknown mean was considered in the literature. For example, modifications of the matrix depth function (\ref{eq:matrix-depth}) that incorporate the unknown mean were considered by \cite{chen2018robust,paindaveine2018halfspace}. In particular, a U-statistics version of the depth function (\ref{eq:matrix-depth}) was considered in \cite{chen2018robust}. The idea is to take advantage of the fact that $(X_i-X_j)/\sqrt{2}\sim N(0,\Sigma)$ for all pairs $i<j$. Applying the same modification to (\ref{eq:covariance-GAN}), we obtain
\begin{equation}
\wh{\Sigma} = \argmin_{\Gamma\in\mathcal{E}_p(M)}\max_{T\in\mathcal{T}}\left[\frac{1}{{n\choose 2}}\sum_{1\leq i<j\leq n} S(T((X_i-X_j)/\sqrt{2}),1) + \mathbb{E}_{X\sim N(0,\Gamma)}S(T(X),0)\right]. \label{eq:covariance-GAN-U}
\end{equation}
Theorem \ref{thm:cov-T3} and Theorem \ref{thm:cov-T4} can then be easily extended to the setting (\ref{eq:strong-con-mean-cov}) with an unknown mean by using the estimator (\ref{eq:covariance-GAN-U}).

In addition to the U-statistics version (\ref{eq:covariance-GAN-U}), we propose another modification of (\ref{eq:covariance-GAN}) that allows for simultaneous estimation of mean and covariance. The procedure is defined by
\begin{equation}
(\wh{\theta},\wh{\Sigma}) = \argmin_{\eta\in\mathbb{R}^p,\Gamma\in\mathcal{E}_p(M)}\max_{T\in\mathcal{T}}\left[\frac{1}{n}\sum_{i=1}^n S(T(X_i),1) + \mathbb{E}_{X\sim N(\eta,\Gamma)}S(T(X),0)\right]. \label{eq:mean-covariance-GAN}
\end{equation}
Note that the generator class is $\{N(\eta,\Gamma): \eta\in\mathbb{R}^p,\Gamma\in\mathcal{E}_p(M)\}$ compared with the centered class in (\ref{eq:covariance-GAN}).

We also introduce a more general discriminator class of deep neural nets. We first define a sigmoid bottom layer
$$\mathcal{G}_{\sig}=\left\{g(x)=\sig(u^Tx+b):u\in\mathbb{R}^p,b\in\mathbb{R}\right\}.$$
Then, with $\mathcal{G}^1(B)=\mathcal{G}_{\sig}$, we inductively define
$$\mathcal{G}^{l+1}(B)=\left\{g(x)=\relu\left(\sum_{h\geq 1}v_hg_h(x)\right):\sum_{h\geq 1}|v_h|\leq B, g_h\in\mathcal{G}^l(B)\right\}.$$
Note that the neighboring two layers are connected via ReLU activation functions.
Finally, the network structure is defined by
\begin{equation}
\mathcal{T}^L(\kappa,B) = \Bigg\{T(x)=\sig\left(\sum_{j\geq 1}w_jg_j(x)\right): \sum_{j\geq 1}|w_j|\leq \kappa, g_{j}\in\mathcal{G}^{L}(B)\Bigg\}.\label{eq:net-structure-deep}
\end{equation}
This is a neural network class that consists of $L$ hidden layers. When $L=1$, (\ref{eq:net-structure-deep}) recovers the definition of the class (\ref{eq:net-structure-3}).

\begin{thm}\label{thm:mean-cov}
Consider the estimator (\ref{eq:mean-covariance-GAN}) that is induced by a regular proper scoring rule that satisfies Condition \ref{cond:G}. The discriminator class $\mathcal{T}=\mathcal{T}^L(\kappa,B)$ is specified by (\ref{eq:net-structure-deep}). Assume $\frac{p}{n}+\epsilon^2\leq c$ for some sufficiently small constant $c>0$. Set $1\leq L=O(1)$, $1\leq B=O(1)$, and $\kappa=O\left(\sqrt{\frac{p}{n}}+\epsilon\right)$. Then, under the data generating process (\ref{eq:strong-con-mean-cov}), we have
\begin{eqnarray*}
\|\wh{\theta}-\theta\|^2 &\leq& C\left(\frac{p}{n}\vee\epsilon^2\right), \\
\opnorm{\wh{\Sigma}-\Sigma}^2 &\leq& C\left(\frac{p}{n}\vee\epsilon^2\right),
\end{eqnarray*}
with probability at least $1-e^{-C'(p+n\epsilon^2)}$ uniformly over all $\theta\in\mathbb{R}^p$ and all $\opnorm{\Sigma}\leq M=O(1)$. The constants $C,C'>0$ are universal.
\end{thm}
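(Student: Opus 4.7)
The plan is to adapt the oracle-inequality template underlying Theorem \ref{thm:cov-T3} to the joint mean-covariance setting. Define the population objective
$$F(\eta,\Gamma) := \sup_{T\in\mathcal{T}^L(\kappa,B)}\bigl[\mathbb{E}_P S(T(X),1)+\mathbb{E}_{N(\eta,\Gamma)}S(T(X),0)\bigr]$$
and its empirical counterpart $F_n$. The minimization property of $(\wh{\theta},\wh{\Sigma})$ gives
$$F(\wh{\theta},\wh{\Sigma})-F(\theta,\Sigma)\leq 2\sup_{(\eta,\Gamma)}|F_n(\eta,\Gamma)-F(\eta,\Gamma)|,$$
so the proof splits into (i) a uniform concentration bound of order $p/n\vee\epsilon^2$ on the right, and (ii) a local quadratic lower bound $F(\eta,\Gamma)-F(\theta,\Sigma)\gtrsim \|\eta-\theta\|^2+\opnorm{\Gamma-\Sigma}^2$ up to stochastic slack of the same order. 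A crucial observation is that $\mathcal{G}_{\sig}$ places no norm restriction on its parameters $(u,b)$, so $\mathcal{T}^1(\kappa,B)$ coincides with the class $\mathcal{T}_3$ of (\ref{eq:net-structure-3}), and $\mathcal{T}^L(\kappa,B)\supseteq\mathcal{T}_3$ for every $L\geq 1$. The covariance portion of step (ii) thus inherits directly from the proof of Theorem \ref{thm:cov-T3}, and the task reduces to handling the mean parameter simultaneously and controlling the extra capacity introduced by the deeper layers.

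The core technical tool for step (ii) is the Taylor expansion of the Savage representation around $t=1/2$. Since $\kappa\lesssim\sqrt{p/n}+\epsilon$ is small and every $T\in\mathcal{T}^L(\kappa,B)$ has $\|T-1/2\|_\infty\leq\kappa/4+O(\kappa^2)$, writing $\delta=T-1/2$ yields
\begin{align*}
&\mathbb{E}_{P'}S(T,1)+\mathbb{E}_Q S(T,0)-2G(1/2)\\
&\quad=\tfrac{G''(1/2)}{2}(\mathbb{E}_{P'}\delta-\mathbb{E}_Q\delta)-\tfrac{G''(1/2)}{2}(\mathbb{E}_{P'}\delta^2+\mathbb{E}_Q\delta^2)+\tfrac{G'''(1/2)}{4}(\mathbb{E}_{P'}\delta^2-\mathbb{E}_Q\delta^2)+O(\kappa^3).
\end{align*}
Condition \ref{cond:G} forces the net $\delta^2$ coefficient to be strictly negative, so after maximizing in $\delta$ the left-hand side behaves like a chi-square-type variational distance between $P'$ and $Q$. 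I would then construct two families of discriminators inside $\mathcal{T}^1(\kappa,B)$: (a) a near-affine sigmoid $\alpha\,\sig(\lambda u^T x+b)$ with $\lambda$ small, for which $\mathbb{E}_{N(\theta,\Sigma)}\delta-\mathbb{E}_{N(\eta,\Gamma)}\delta\gtrsim\alpha\|\eta-\theta\|$ when $u$ is chosen parallel to $\eta-\theta$; and (b) a symmetric difference of sigmoids $\alpha[\sig(u^T x+b)-\sig(u^T x-b)]$, whose Gaussian expectation depends nontrivially on $u^T\Sigma u$ versus $u^T\Gamma u$, giving $\mathbb{E}_{N(\theta,\Sigma)}\delta-\mathbb{E}_{N(\eta,\Gamma)}\delta\gtrsim\alpha|u^T(\Sigma-\Gamma)u|$ for a suitable $(u,b)$. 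Plugging these into the expansion with scale $\alpha\asymp\kappa$ yields the required quadratic dissipation in $\|\eta-\theta\|^2$ and $\opnorm{\Sigma-\Gamma}^2$ separately.

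For step (i), I would bound $\sup|F_n-F|$ by the sample fluctuation plus an $O(\epsilon\kappa)$ TV-slack between $\mathbb{E}_P$ and $\mathbb{E}_{N(\theta,\Sigma)}$. The smoothness of $S(\cdot,\cdot)$ on the range of $T$ and the $1$-Lipschitz property of $\sig$ and $\relu$ let me peel off layers of $\mathcal{T}^L(\kappa,B)$; the top layer contributes the factor $\kappa$, while each of the $L=O(1)$ middle layers contributes only a constant through the $\ell_1$ budget $B=O(1)$, yielding Rademacher complexity $O(\kappa\sqrt{p/n})$. Multiplying by $\kappa\asymp\sqrt{p/n}+\epsilon$ gives the desired $p/n\vee\epsilon^2$ stochastic rate. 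The main obstacle I expect is the joint form of step (ii): the mean-detection test (a) and the covariance-detection test (b) interact, because shifting the mean of $N(\eta,\Gamma)$ also shifts the Gaussian expectation of the covariance bump, and a covariance error contaminates the mean detector through higher-order moments. Handling this requires choosing $u$ carefully (for example projecting out $\eta-\theta$ when extracting the covariance rate) and invoking the strict slack $c_0$ in $2G''(1/2)\geq G'''(1/2)+c_0$ to absorb the asymmetric cubic term when $\Gamma$ is far from $\Sigma$. Once the two directions are decoupled, combining (i) and (ii) yields the stated bounds on both $\|\wh{\theta}-\theta\|^2$ and $\opnorm{\wh{\Sigma}-\Sigma}^2$ with the claimed probability.
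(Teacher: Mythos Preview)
Your high-level architecture---oracle inequality, Taylor expansion of the Savage representation around $t=1/2$, layer-peeling Rademacher bound for step (i), and construction of specific discriminators inside $\mathcal{T}_3\subseteq\mathcal{T}^L(\kappa,B)$ for step (ii)---is the same as the paper's. The difference lies in how you propose to decouple mean from covariance, and there your construction has a genuine gap.

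For the mean detector you propose a ``near-affine'' sigmoid $\alpha\,\sig(\lambda u^Tx+b)$ with $\lambda$ small, claiming $\mathbb{E}_{N(\theta,\Sigma)}\delta-\mathbb{E}_{N(\eta,\Gamma)}\delta\gtrsim\alpha\|\eta-\theta\|$. But linearizing the inner sigmoid gives only $\approx\alpha\,\sig'(b)\,\lambda\, u^T(\theta-\eta)+O(\alpha\lambda^2)$, so the achievable signal is $\alpha\lambda\|\eta-\theta\|$, not $\alpha\|\eta-\theta\|$. Sending $\lambda\to 0$ kills the mean signal entirely; keeping $\lambda$ bounded away from zero re-introduces the $O(\lambda^2)$ covariance contamination you were trying to avoid. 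So detector (a) as stated does not deliver the claimed dissipation, and detector (b) still carries mean dependence, so the interaction you flag as ``the main obstacle'' is not actually resolved by your construction.

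The paper handles the decoupling differently and more cleanly. For the mean it takes $w_1=\kappa$, $g_1(x)=\sig(v^Tx-v^T\theta)$, i.e.\ $\lambda=1$ and $b=-v^T\theta$. The point is that for \emph{any} covariance $\Gamma$ one has $\mathbb{E}_{N(\theta,\Gamma)}\sig(v^TX-v^T\theta)=\int\sig(\sqrt{v^T\Gamma v}\,z)\phi(z)\,dz=1/2$ by odd symmetry, so the covariance drops out of the first-order term automatically and the leading difference is purely $h(\Delta)-h(0)$ with $h(t)=\int\sig(\delta_2 z+t)\phi(z)\,dz$ and $\Delta=v^T(\wh\theta-\theta)$. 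This gives $\|\wh\theta-\theta\|\lesssim\sqrt{p/n}+\epsilon$ without any scale separation. For the covariance the paper then takes $b=-v^T\theta-1$, obtaining a bound on $\bigl|\int\sig(\delta_1 z-1)\phi(z)\,dz-\int\sig(\delta_2 z+\Delta-1)\phi(z)\,dz\bigr|$, and removes the residual $\Delta$ by the Lipschitz estimate $|\sig(\delta_2 z+\Delta-1)-\sig(\delta_2 z-1)|\leq|\Delta|$ together with the \emph{already-proved} mean bound. So rather than decoupling the two parameters simultaneously via projection or scale, the paper proceeds sequentially: a symmetry trick isolates the mean, and then the established mean rate is fed back to clean up the covariance step. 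If you replace your detectors (a) and (b) by these two choices, the rest of your outline goes through exactly as written.
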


\section{Elliptical Distributions}\label{sect:elliptical}

One of the most important statistical properties of the depth-based estimator (\ref{eq:matrix-depth}) is its ability to adapt to general elliptical distributions \citep{chen2018robust}. In this section, we show that the same property can also be achieved by robust estimators induced by proper scoring rules.
\begin{definition}[\cite{fang2017symmetric}]
\label{def:elliptical}
A random vector $X\in\mathbb{R}^p$ follows an elliptical distribution if and only if it has the representation $X=\theta+\xi AU$, where $\theta\in\mathbb{R}^p$ and $A\in\mathbb{R}^{p\times r}$ are model parameters. The random variable $U$ is distributed uniformly on the unit sphere $\{u\in\mathbb{R}^r:\|u\|=1\}$ and $\xi\geq 0$ is a random variable in $\mathbb{R}$ independent of $U$. The vector $\theta$ and the matrix $\Sigma=AA^T$ are called the location and the scatter of the elliptical distribution.
\end{definition}

For any unit vector $u$, the distribution of $\xi u^TU$ does not depend on $u$ because of the symmetry of $U$. Define $H(\cdot)$ to be the distribution function of $\xi u^TU$. Since there is a one-to-one relation between $H(\cdot)$ and the distribution of $\xi$, the distribution of $X=\theta+\xi AU$ is fully determined by the triplet $(\theta,\Sigma,H)$, and therefore we write the distribution as $E(\theta,\Sigma,H)$.

Note that $\Sigma$ and $H$ are not identifiable, this is because $\xi AU= (a\xi)(a^{-1}A)U$ for any $a>0$. To overcome this issue, we restrict $H$ to the following class
$$\mathcal{H}=\left\{H\text{ is a distribution function}: H(t)+H(-t)\equiv1, \int R(|t|)dH(t)=\int R(|t|)d\Phi(t)\right\},$$
where $\Phi(\cdot)$ is the distribution function of $N(0,1)$, and
\begin{equation}
R(|t|)=\begin{cases}
|t|, & |t|\leq 1, \\
1, & |t|>1,
\end{cases} \label{eq:def-S}
\end{equation}
which is recognized as the clipped $\ell_1$ function. The restriction $H\in\mathcal{H}$ is without loss of generality. This is because the function $F(a)=\mathbb{E}R(|a\xi u^TU|)$ is increasing for all $a>0$, so that the equation $F(a)=\int R(|t|)d\Phi(t)$ must have a solution. Here, we do not use the simpler absolute function, because the first moment of $\xi u^TU$ may not exist.
\begin{definition}
The elliptical distribution $X=\theta+\xi AU$ has a canonical parametrization $(\theta,\Sigma,H)$ with $\Sigma=AA^T$ and $H\in\mathcal{H}$. We use the notation $E(\theta,\Sigma,H)$ to denote the elliptical distribution in its canonical form.
\end{definition}
With the canonical representation, the parameters $\theta,\Sigma,H$ are all identifiable. The scatter matrix $\Sigma$ is proportion to the covariance matrix whenever the covariance matrix exists. Moreover, for multivariate Gaussian $N(\theta,\Sigma)$, its canonical parametrization is $(\theta,\Sigma,\Phi)$, and the scatter matrix and the covariance matrix are identical.

The goal of this section is to estimate both the location $\theta$ and the scatter $\Sigma$ with observations
\begin{equation}
X_1,...,X_n \stackrel{iid}{\sim} P\quad\text{for some }P\text{ satisfying }\TV(P,E(\theta,\Sigma,H))\leq\epsilon. \label{eq:strong-con-ellip}
\end{equation}
To achieve this goal, we further require that $H$ belongs to the following class
$$\mathcal{H}(M')=\left\{H\in\mathcal{H}: \int_{1/4}^{1/3}dH(t)\geq \frac{1}{M'}\right\}.$$
The number $M'>0$ is assumed to be some large constant. It is easy to see that $\mathcal{H}=\cup_{M'>0}\mathcal{H}(M')$. The regularity condition $H\in\mathcal{H}(M')$ will be easily satisfied as long as there is a constant probability mass of $H$ contained in the interval $[1/4,1/3]$. This condition prevents some of the probability mass from escaping to infinity.

Define the estimator
\begin{equation}
(\wh{\theta},\wh{\Sigma},\wh{H}) = \argmin_{\eta\in\mathbb{R}^p,\Gamma\in\mathcal{E}_p(M),H\in\mathcal{H}(M')}\max_{T\in\mathcal{T}}\left[\frac{1}{n}\sum_{i=1}^n S(T(X_i),1) + \mathbb{E}_{X\sim E(\eta,\Gamma,G)}S(T(X),0)\right]. \label{eq:ellip-GAN}
\end{equation}
To accommodate for the more general generator class in (\ref{eq:ellip-GAN}), we consider the  discriminator class $\bar{\mathcal{T}}^L(\kappa,B)$, which has the same definition (\ref{eq:net-structure-deep}), except that
$$\mathcal{G}^1(B)=\mathcal{G}_{\ramp}=\left\{g(x)=\ramp(u^Tx+b):u\in\mathbb{R}^p,b\in\mathbb{R}\right\}.$$
In other words, $\bar{\mathcal{T}}^L(\kappa,B)$ and ${\mathcal{T}}^L(\kappa,B)$ only differs in the choice of the nonlinear activation function of the bottom layer. We remark that the discriminator class ${\mathcal{T}}^L(\kappa,B)$ also works for the elliptical distributions, but the theory would require a condition that is less transparent. The theoretical guarantee of the estimator (\ref{eq:ellip-GAN}) is given by the following theorem.
\begin{thm}\label{thm:ellip}
Consider the estimator (\ref{eq:ellip-GAN}) that is induced by a regular proper scoring rule that satisfies Condition \ref{cond:G}. The discriminator class is specified by $\mathcal{T}=\bar{\mathcal{T}}^L(\kappa,B)$ with the dimension of $(w_j)$ to be at least $2$. Assume $\frac{p}{n}+\epsilon^2\leq c$ for some sufficiently small constant $c>0$. Set $2\leq L=O(1)$, $1\leq B=O(1)$, and $\kappa=O\left(\sqrt{\frac{p}{n}}+\epsilon\right)$. Then, under the data generating process (\ref{eq:strong-con-ellip}), we have
\begin{eqnarray*}
\|\wh{\theta}-\theta\|^2 &\leq& C\left(\frac{p}{n}\vee\epsilon^2\right), \\
\opnorm{\wh{\Sigma}-\Sigma}^2 &\leq& C\left(\frac{p}{n}\vee\epsilon^2\right),
\end{eqnarray*}
with probability at least $1-e^{-C'(p+n\epsilon^2)}$ uniformly over all $\theta\in\mathbb{R}^p$, all $\opnorm{\Sigma}\leq M=O(1)$, and all $H\in\mathcal{H}(M')$ with $M'=O(1)$. The constants $C,C'>0$ are universal.
\end{thm}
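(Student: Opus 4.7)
The approach follows the oracle-inequality template that underlies Theorems \ref{thm:cov-T3}, \ref{thm:cov-T4}, and \ref{thm:mean-cov}, but with two new complications: the nonparametric nuisance $H$ on the data side and $G$ on the generator side, and the ramp bottom layer of $\bar{\mathcal{T}}^L(\kappa,B)$ in place of the sigmoid one. I would introduce the functional
$$F(\eta,\Gamma,G):=\max_{T\in\bar{\mathcal{T}}^L(\kappa,B)}\left[\frac{1}{n}\sum_{i=1}^n S(T(X_i),1)+\ep_{Y\sim E(\eta,\Gamma,G)}S(T(Y),0)\right],$$
so that by the definition of the estimator $F(\wh\theta,\wh\Sigma,\wh H)\leq F(\theta,\Sigma,H)$. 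The proof then reduces to (i) showing $F(\theta,\Sigma,H)\leq C_1(p/n\vee\epsilon^2)$ with high probability, and (ii) showing that uniformly over $\eta\in\mathbb{R}^p$, $\Gamma\in\mathcal{E}_p(M)$, and $G\in\mathcal{H}(M')$,
$$F(\eta,\Gamma,G)\geq c_2\big(\|\eta-\theta\|^2\vee\opnorm{\Gamma-\Sigma}^2\big)-C_3(p/n\vee\epsilon^2).$$

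For (i), I would split the empirical average into a population part plus a uniform fluctuation. The fluctuation is controlled by the Rademacher complexity of $\bar{\mathcal{T}}^L(\kappa,B)$: the bottom layer $\mathcal{G}_{\ramp}$ is a $1$-Lipschitz VC-type class of linear projections, and the $L-1$ upper layers compose Lipschitz operations of total weight $O(B^L)=O(1)$, so by standard contraction and chaining arguments the complexity is of order $\kappa\sqrt{p/n}$ and a Talagrand-type inequality yields the claimed exponential tail. The population part, written with $P$ a total-variation neighbor of $E(\theta,\Sigma,H)$, vanishes on the clean piece because $E(\theta,\Sigma,H)$ matches the generator exactly; the $\epsilon$-discrepancy contributes at most $\epsilon\sup_T|S(T,1)-S(1/2,1)|=O(\epsilon\kappa)$ by Taylor-expanding $S$ at $1/2$ using Condition \ref{cond:G}. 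Combining gives $F(\theta,\Sigma,H)\lesssim p/n\vee\epsilon^2$.

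For the substantive step (ii), I would Taylor-expand the population inner objective around the symmetric point $T\equiv 1/2$. Writing $T=1/2+\delta$ and using the Savage representation, the leading term becomes $\tfrac{1}{2}G^{(2)}(1/2)\int\delta(x)\big(dE(\theta,\Sigma,H)-dE(\eta,\Gamma,G)\big)(x)$ with quadratic-in-$\kappa$ remainder. The task is then to exhibit explicit discriminators in $\bar{\mathcal{T}}^L(\kappa,B)$ whose $\delta$ separates the two elliptical distributions linearly in $\|\eta-\theta\|$ or $\opnorm{\Gamma-\Sigma}$. I would construct a \emph{location probe} of the form $g(x)=\ramp(au^\top x+b)$ with $u=(\eta-\theta)/\|\eta-\theta\|$, where the one-dimensional projection reduces the nuisance to a univariate marginal pinned down by the moment constraint $\int R(|t|)\,dH=\int R(|t|)\,d\Phi$; and a \emph{scatter probe}, taking $u$ to be the top singular direction of $\Gamma-\Sigma$ and combining two ramp units $\ramp(\pm a u^\top(x-\eta)+b)$ at the bottom, synthesized by a ReLU/sigmoid in the next layer to yield a symmetric function of $u^\top(x-\eta)$ whose expectation gap detects $|u^\top(\Gamma-\Sigma)u|$. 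This second construction is precisely where $L\geq 2$ and the requirement that $(w_j)$ have dimension at least $2$ enter. Calibrating $\kappa$ as prescribed then makes the first-order gain dominate the remainder exactly when $\|\eta-\theta\|^2\vee\opnorm{\Gamma-\Sigma}^2\gtrsim p/n\vee\epsilon^2$.

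The principal obstacle is the scatter probe in the semiparametric regime: because $H$ and $G$ may be far apart inside the infinite-dimensional class $\mathcal{H}(M')$, a naive probe cannot disentangle a scatter discrepancy from an $H$-versus-$G$ nuisance discrepancy. The crux is a one-dimensional comparison lemma of the form: for any bounded $\sigma,\tau>0$ and any $H_1,H_2\in\mathcal{H}(M')$, the supremum over $a,b$ of $\big|\ep_{Z_1\sim H_1}\ramp(a\sigma Z_1+b)-\ep_{Z_2\sim H_2}\ramp(a\tau Z_2+b)\big|$ is bounded below by a constant depending only on $M'$ times $|\sigma-\tau|\wedge 1$. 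The mass constraint $\int_{1/4}^{1/3}dH\geq 1/M'$ together with the shared clipped first moment are exactly what force this separation and thereby identify the scale parameter up to nuisance-invariant constants. Establishing this lemma cleanly, then lifting it to the multivariate setting by choosing the direction $u$, is the technical heart of the proof specific to Theorem \ref{thm:ellip}, and is where the ramp activation (rather than sigmoid) in the bottom layer plays a decisive role.
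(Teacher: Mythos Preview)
Your overall oracle-inequality skeleton, the complexity bound for $\bar{\mathcal{T}}^L(\kappa,B)$ via contraction and VC arguments, and the first-order Taylor expansion around $T\equiv 1/2$ all match the paper closely. The location probe is also essentially right, though the paper centers it at $\wh\theta$ rather than $\theta$ and exploits the symmetry $H(t)+H(-t)\equiv 1$ (so that $\int\ramp(cz)\,d\wh H(z)=1/2$ for any $c$), which kills the nuisance $\wh H$ directly without invoking the moment constraint at that stage.

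Where your proposal diverges from the paper is the scatter probe, and this is the point you correctly flag as the principal obstacle---but your proposed resolution is harder than necessary. You place the two ramp units at the bottom layer and plan to synthesize a symmetric function through an intermediate ReLU/sigmoid layer; you then appeal to a one-dimensional comparison lemma over all $H_1,H_2\in\mathcal{H}(M')$ to separate scale from the nuisance. The paper instead places the two ramps as separate elements $g_1,g_2\in\mathcal{G}^L(B)$ (a single ramp belongs to $\mathcal{G}^L(B)$ for every $L$ because $\relu\circ\ramp=\ramp$) and combines them \emph{linearly at the output layer} via $w_1=\kappa/2$, $w_2=-\kappa/2$---this is exactly where ``dimension of $(w_j)$ at least $2$'' enters, not the hidden layers. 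With the specific shifts $\pm 1/2$, the combination $w_1g_1+w_2g_2$ reproduces $\tfrac{\kappa}{2}R(|\cdot|)$, the clipped-$\ell_1$ function from \eqref{eq:def-S}. Because $R$ is the very function that defines the canonical parametrization $\mathcal{H}$, one has $\int R(|z|)\,dH(z)=\int R(|z|)\,d\Phi(z)=\int R(|z|)\,d\wh H(z)$ automatically, and the nuisance cancels \emph{by construction} at the reference scale. What remains is a monotone one-variable function $\bar h(t)=\int R(|tz|)\,d\wh H(z)$ evaluated at $t=\delta_2/\delta_1$ versus $t=1$, and the mass condition $\wh H\in\mathcal{H}(M')$ gives the derivative lower bound. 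Your comparison lemma is therefore not needed: the paper's choice of probe turns the semiparametric difficulty into an identity.
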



\section{Numerical Studies}\label{sec:num}

We present our numerical results in this Section. We first give details of implementations in Section \ref{sec:implement}. We then compare our proposed methods with other methods in the literature in Section \ref{subsect:comp}. Simultaneous location and scatter estimation and comparisons of different scoring rules are investigated in Section \ref{subsect:simest} and Section \ref{subsect:sr}, respectively.

\subsection{Implementations}\label{sec:implement}

In order to implement the proposed methods for scatter estimation, we need to specify the generator networks. Depending on whether the data is centered and whether the distribution family is known, we consider the following four types of generator networks:
\begin{itemize}[noitemsep,topsep=0pt]
\item $G_1(Z; A) = AZ$. The random vector $Z$ is sampled from $E(0,I_p,H)$. Then, according to Definition \ref{def:elliptical}, we have $G_1(Z; A)\sim E(0,\Sigma,H)$ with $\Sigma=AA^T$. This is the simplest generator network suitable for centered observations with a known distribution family.
\item $G_2(U, z; A, w_g) = g_{w_g}(z)AU$. When the distribution family is unknown, we can represent an $E(0,AA^T,H)$ random variable by $\xi AU$ according to Definition \ref{def:elliptical}. This leads to the generator network $G_2(U, z; A, w_g)$, where we model $\xi$ by $g_{w_g}(z)$, a neural work  with parameter $w_g$ and input vector $z\sim N(0,I_q)$. The input dimension $q$ will be specified later.
\item $G_3(Z; A, \theta) = \theta + AZ$. This is an extension of $G_1$ when the observations are not centered.
\item $G_4(U, z; A, w_g, \theta) = \theta + g_{w_g}(z)AU$. This is an extension of $G_2$ when the observations are not centered.
\end{itemize}
The algorithm to implement JS-GAN \citep{goodfellow2014generative} with the generator network $G_1(Z; A)$ is given below.
\begin{algorithm}[H]
\caption{$\argmin_{A}\max_{w}\left[\frac{1}{n}\sum_{i=1}^n \log D_{w}(X_i) + \Expect_{Z\sim E(0,I_p,H)} \log(1-D_{w}(G_1(Z; A)))\right]$}\label{alg:js}
\textbf{Input}: 

1. Observations $S=\{X_{1},\ldots,X_{n}\}\subset\mathbb{R}^{p}$, \\
2. Learning rates $\gamma_{d}$/$\gamma_{g}$ for the discriminator/generator, \\
3. Batch size $m$, \\
4. Iterations for discriminator/generator steps in each epoch $K_d$/$K_g$, \\
5. Total epochs $T$, \\
6. Average epochs $T_0$. \\
\textbf{Initialization}:

1. Initialize $\wh{\Sigma}_0$ by scaled Kendall's $\tau$, apply singular value decomposition $\wh{\Sigma}_0=V_0\Lambda_0V_0^T$, and set $A_0=V_0\Lambda_0^{1/2}$. \\
2. Initialize the discriminator network by Xavier \citep{glorot2010understanding}, where the first layer of $D_{w}(x)$ is initialized by some Gaussian distribution.
\begin{algorithmic}[1]
	\For{\texttt{$t=1,\ldots,T$}}
		\For{\texttt{$k=1,\ldots,K_d$}}
			\State Sample mini-batch $\{X_{1},\ldots,X_{m}\}$ from $S$. Sample $\{Z_{1},\ldots,Z_{m}\}$ from $E(0,I_p,H)$;
			\State $g_{w}\gets\nabla_{w}\left[\frac{1}{m}\sum_{i=1}^{m} \log D_{w}(X_{i}) + \frac{1}{m}\sum_{i=1}^{m} \log (1 - D_{w}(G_1(Z_{i}; A)))\right]$;
			\State $w \gets w + \gamma_{d}g_{w}$;
		\EndFor
		\For{\texttt{$k=1,\ldots,K_g$}}
			\State Sample $\{Z_{1},\ldots,Z_{m}\}$ from $E(0,I_p,H)$;
			\State $g_{A}\gets\nabla_{A}\left[\frac{1}{m}\sum_{i=1}^{m} \log(1-D_{w}(G_1(Z_{i}; A)))\right]$;
			\State $A \gets A - \gamma_{g}g_{A}$;
		\EndFor
	\EndFor
\end{algorithmic}
\textbf{Return}: The average over the last $T_0$ epochs $\wh{\Sigma} = \frac{1}{T_0}\sum_{t=T-T_0+1}^{T}A_tA_t^T$.
\end{algorithm}
Several remarks for Algorithm \ref{alg:js} are given below:
\begin{itemize}[noitemsep,topsep=0pt]
\item {\it Some variations and extra details.} For simplicity, we only state Algorithm \ref{alg:js} that covers the case of JS-GAN with the generator network $G_1(Z; A)$. Modifications to accommodate other proper scoring rules and more complicated generator networks are straightforward, and thus the details are omitted. An extra implementation detail is that after every $T_1$ epochs, we update the learning rates $\gamma_g$ and $\gamma_d$ by $\alpha\gamma_g$ and $\alpha\gamma_d$ with some $\alpha\in(0,1)$ for better convergence.
\item {\it Discriminator network structures.} The structure of the discriminator network is set as $p$-$2p$-$\floor{p/2}$-$1$. We consistently observe that this wide structure outperforms a narrow ones such as $p$-$\floor{p/2}$-$\floor{p/4}$-$1$. The choice of nonlinearity follows that of (\ref{eq:net-structure-4}), except we use ${\sf LeakyReLU}(x)=\max(0.2x,x)$ instead of $\relu(x)$ to avoid vanishing gradients. This slight change will not affect the theoretical results in the paper.
\item {\it Identifiability and calibration.} Suppose $X\sim E(\theta,AA^T,H)$ so that it has the representation $X=\theta + \xi AU$. Then, the matrix $\wh{A}\wh{A}^T$ output by Algorithm \ref{alg:js} is an estimator for $AA^T$ up to a multiplicative constant due to the identifiability issue discussed in Section \ref{sect:elliptical}. Therefore, we need to define the version of $AA^T$ and calibrate the estimator $\wh{A}\wh{A}^T$ accordingly. In Section \ref{sect:elliptical}, the multiplicative constant is determined through the equation $\mathbb{E}R(|a\xi u^TU|)=\int R(|t|)d\Phi(t)$. Note that here the choice of $\Phi$ is arbitrary, and thus in our numerical studies, it is more convenient to replace the Gaussian cumulative distribution function (CDF) $\Phi$ by the CDF of the distribution that we are working with. For example, consider a multivariate $t$-distribution $T_v(\theta,\Sigma)$ with density proportional to $\left(1+(x-\theta)^T\Sigma^{-1}(x-\theta)/v\right)^{-(v+p)/2}$, we will find a factor $a>0$ such that $\mathbb{E}_{U,z}R(ag_{\wh{w}_g}(z)u^TU)=\mathbb{E}_{X\sim T_v(0,1)}R(|X|)$ is satisfied, where $U$ is uniformly distributed on the unit sphere and $z\sim N(0,I_q)$. Our final estimator is given by $a^{-2}\wh{A}\wh{A}^T$. With this scaling, the estimator is directly targeted at the $\Sigma$ in the formula of the density of $T_v(\theta,\Sigma)$.
\end{itemize}
The following table summarizes the hyperparameters that can reproduce our numerical results for $p=100$. Hyperparameters for other dimensions can be found in \url{https://github.com/zhuwzh/Robust-GAN-Scatter}.
\begin{table}[H]
\centering
\resizebox{\linewidth}{!}{
  \begin{tabular}{|c|c|c|c|c|c|c|c|}
    \hline
    generator & $\xi$ network $g_{w_g}(z)$ & discriminator & $\sigma_1$ & $\gamma_d / \gamma_g$ & $K_d/K_g$ & $T/T_0$ & $\alpha / T_1$ \\	
    \hline\hline
    $G_1/G_3$ & - & 100-200-50-1 & 0.0025 & 0.025/0.1 & 12/3 & 500/25 & 0.2/200 \\
    \hline
    $G_2/G_4$ & 48-32-24-12-1 & 100-200-50-1 & 0.025 & 0.05/0.025 & 12/3 & 500/25 & 0.2/200\\
    \hline
  \end{tabular}
}
\caption{Hyperparameters used in our numerical studies. Here, $\sigma_1$ is the standard deviation of the Gaussian initialization in the first layer (next to the input) of the discriminator network. Other parameters are introduced in Algorithm \ref{alg:js}.}\label{tab:hyper}
\end{table}

\subsection{Comparisons with Other Methods}\label{subsect:comp}

In this section, we compare the performance of JS-GAN against other methods for robust scatter estimation in the literature.
We first introduce some other robust scatter matrix estimators that we will compare with. The definitions of these robust matrix estimator are all up to some scaling factor. Tyler's M-estimator \citep{tyler1987distribution} is defined as a solution of
$\sum_{i=1}^n\frac{X_iX_i^T}{X_i^T\Sigma^{-1}X_i}=c\Sigma$ for some $c>0$.
Note that it is a special case of Maronna's M-estimator \citep{maronna1976robust}. Properties of Tyler's M-estimator were studied by \cite{dumbgen1998tyler,dumbgen2005breakdown,zhang2002some,zhang2016marvcenko}. The second robust estimator of scatter that we will compare with is the scaled Kendall's $\tau$. The Kendall's $\tau$ correlation coefficient \citep{kendall1938new} between the $j$th and the $k$th variables is defined as
$$\hat{\tau}_{jk}=\frac{2}{n(n-1)}\sum_{i<i'}\text{sign}\left((X_i-X_{i'})_j(X_i-X_{i'})_k\right).$$
Then, $\wh{K}=(\wh{K}_{jk})$ with $\wh{K}_{jk}=\sin\left(\frac{\pi}{2}\hat{\tau}_{jk}\right)$ is an estimator of the correlation matrix \citep{kruskal1958ordinal,han2014scale}. To obtain an estimator for the scatter matrix, define a diagonal matrix $\wh{S}$ with diagonal entries $\wh{S}_{jj}=\textsf{Median}(\{X_{ij}^2\}_{i=1}^n)$. Then, the scaled Kendall's $\tau$ estimator for the scatter matrix is $\wh{S}^{1/2}\wh{K}\wh{S}^{1/2}$.
Thirdly, we introduce the minimum volume ellipsoid estimator (MVE) by \cite{leroy1987robust}. It finds the ellipsoid covering at least $n/2$ points of $\{X_i\}_{i=1}^n$ with the minimum volume and then use the shape of the ellipsoid as the scatter matrix estimator. Properties of MVE have been studied by \cite{davies1992asymptotics}. Finally, we consider the dimension halving method proposed by \cite{lai2016agnostic} based on the idea of higher moment certification. We remark that among all the methods that we compare here, dimension halving is the only method that is designed for Huber's contamination model.
For comparison of performances, we rescale all the estimators by some constant factors so that all of them are targeted at the same population scatter matrix.\footnote{Dimension halving is designed to estimate the covariance matrix when it exists. For the t-distribution with degrees of freedom $v$, the final estimator needs to be scaled by $\frac{v}{v-2}$ when $v>2$. The results for $v\in\{1,2\}$ are omitted for dimension halving because the covariance does not exist.}

The comparisons cover the following scenarios from different perspectives.

\paragraph{Influence of Tail.}

We first study the influence of the tail behavior. We consider i.i.d. observations from $(1-\epsilon)T_v(0,\Sigma_{\sf{ar}})+\epsilon T_v(5 \mathds{1}_p, 5I_p)$, where $(\Sigma_{\sf{ar}})_{jk} = (1/2)^{|j-k|}$ and $\mathds{1}_p$ stands for the $p$-dimensional vector with all entries $1$'s. Note that the second moment of the multivariate $t$-distribution exists only when $v>2$.

\begin{table}[H]
   \centering
   \resizebox{\linewidth}{!}{%
   \begin{tabular}{|c|c|c|c|c|c|c|}
    \hline
     degrees of freedom $v$ & $G_1(Z; A) = AZ$ & $G_2(U, z; A, w_g) = g_{w_g}(z)AU$ & Dimension Halving & Tyler's M-estimator & Kendall's $\tau$ & MVE \\
     \hline\hline
     $1$ & 0.2808 (0.0440) & 0.3350 (0.0681) & - & 372.9637 (582.3385) & 52.5653 (0.6361) & 50.2995 (0.6259) \\
     \hline
     $2$ & 0.3450 (0.0157) & 0.4059 (0.0254) & - & 55.5152 (1.1901) & 64.7625 (0.4798) & 20.1941 (1.8645) \\
     \hline
     $4$ & 0.2751 (0.0147) & 0.2775 (0.0456) & 1.2834 (0.0512) &  38.7569 (0.2740) & 72.8037 (0.3369) & 0.1920 (0.0299) \\
     \hline
     $8$ & 0.2131 (0.0162) & 0.2113 (0.0306) & 0.8902 (0.0728) & 39.0265 (0.2014) & 77.2117 (0.3486) &  0.1753 (0.0218) \\
     \hline
     $16$ & 0.1764 (0.0120) & 0.2076 (0.0210)  & 0.8354 (0.0926) & 39.1167 (0.3200) & 79.2252 (0.2728) & 0.1683 (0.0136) \\
     \hline
     $32$ & 0.1576 (0.0067) & 0.2056 (0.0202) & 0.8572 (0.0687) & 39.1985 (0.2153) & 80.2075 (0.1706) & 0.1493 (0.0085) \\
      \hline
   \end{tabular}}
\caption{Simulation results with $n=50,000, p=100, \epsilon=0.2$ and $v\in\{1,2,4,8,16,32\}$. We show the average error $\opnorm{\wh{\Sigma} - \Sigma}$ in each cell with standard deviation in parenthesis from 10 repeated experiments.}\label{tab:tail}
\end{table}

Table \ref{tab:tail} summarizes the results with the degrees of freedom $v$ varying from $1$ to $32$. We observe that JS-GANs (with generator networks $G_1$ and $G_2$) are overall the two best methods, especially when $v\in\{1,2\}$. Dimension halving performs better than Tyler's M-estimator and Kendall's $\tau$, because it is the only one that is designed for Huber's contamination model among all other methods. Among the remaining three methods, MVE is greatly influenced by the value of $v$, while Kendall's $\tau$ and Tyler's M-estimator does not seem to be robust in this setting.

\paragraph{Distance of Contamination.}

We then study the effect of the distance of the contamination distribution. In this experiment, we sample i.i.d. observations from $(1-\epsilon)N(0,I_p)+\epsilon \delta_{s\mathds{1}_p}$. That is, we model the contamination by a Dirac distribution with each coordinate being $s$, and we vary the value of $s$ in this experiment.

\begin{table}[H]
   \centering
   \resizebox{\linewidth}{!}{%
   \begin{tabular}{|c|c|c|c|c|c|c|}
    \hline
     factor $s$ & $G_1(Z; A) = AZ$ & $G_2(U, z; A, w_g) = g_{w_g}(z)AU$ & Dimension Halving & Tyler's M-estimator & Kendall's $\tau$ & MVE\footnotemark \\
     \hline\hline
     $0.5$ & 0.2000 (0.0251) & 0.2057 (0.0104)  & 0.1949 (0.0085) & 98.7102 (3.7054) & 3.8098 (0.0158) & 5.1546 (0.0424) \\
     \hline
     $1$ & 0.1699 (0.0168) & 0.1607 (0.0084) & 0.2163 (0.0077) & 103.3438 (2.4459) & 23.4627 (0.0732)  & 17.5985 (0.0964) \\
     \hline
     $2$ & 0.1705 (0.0145) & 0.1576 (0.0097) & 0.2762 (0.0159) & 137.6124 (1.1288)   & 84.5634 (0.0750) & 45.9484 (0.1704) \\
     \hline
     $4$ & 0.1530 (0.0059) & 0.1557 (0.0142) & 0.2982 (0.0110) & 264.7385 (0.5537) & 92.0462 (0.1206) & 76.8558 (0.2466) \\
     \hline
     $8$ & 0.1600 (0.0094) & 0.1555 (0.0116) & 0.3008 (0.0184) & 366.0933 (0.3620) & 92.1079 (0.1124) & 92.4466 (0.0811) \\
     \hline
   \end{tabular}}
\caption{Simulation results with $n=50,000, p=100, \epsilon=0.2$ and $s\in\{0.5,1,2,4,8\}$. We show the average error $\opnorm{\wh{\Sigma} - \Sigma}$ in each cell with standard deviation in parenthesis from 10 repeated experiments.}\label{tab:similarity}
\end{table}
\footnotetext{We apply the function {\rm cov.rob} in the R package MASS. However, the function cannot be applied when some observations are linearly dependent. We add a small random perturbation $N(0, 0.01)$ to the original data before applying {\rm cov.rob}.}

The results summarized in Table \ref{tab:similarity} show that JS-GANs and dimension halving are much better than the other three methods. The errors of Kendall's $\tau$ and MVE exhibit clear increasing patterns as $s$ grows. Tyler's M-estimator does not seem to be robust under this setting.

It is also interesting to note that JS-GANs and dimension halving have different error behaviors. The error of dimension halving grows slowly as $s$ increases, while the errors of JS-GANs decrease. This is because as the contamination distribution $\delta_{s\mathds{1}_p}$ is moving away from $N(0,I_p)$, the discriminator network in GANs gets better at distinguishing contamination from the true model.

\paragraph{Dependence on $(\epsilon,n,p)$.}

Lastly, we show the comparison results in Tables \ref{tab:eps}-\ref{tab:dim} when the contamination proportion $\epsilon$, the sample size $n$, and the dimension $p$ vary. We consider i.i.d. observations from $(1-\epsilon)N(0,\Sigma_{\sf{ar}})+\epsilon Q$ with $(\Sigma_{\sf{ar}})_{jk} = (1/2)^{|j-k|}$ and $Q\in\{N(5 \mathds{1}_p, 5I_p),\delta_{5\mathds{1}_p}\}$.

\begin{table}[H]
   \centering
   \resizebox{\linewidth}{!}{%
   \begin{tabular}{|c|c|c|c|c|c|c|c|}
    \hline
     $\epsilon$ & $Q$ & $G_1(Z; A) = AZ$ & $G_2(U, z; A, w_g) = g_{w_g}(z)AU$ & Dimension Halving & Tyler's M-estimator & Kendall's $\tau$ & MVE \\
     \hline\hline
     \multirow{2}{*}{0.02} & $N(5 \mathds{1}_p, 5I_p)$ & 0.1438 (0.0109)& 0.1829 (0.0221)  & 0.7709 (0.0720) & 4.1114 (0.0410) & 5.9289 (0.0362) & 0.1286 (0.0115) \\
     & $\delta_{5\mathds{1}_p}$ & 0.1373 (0.0053) & 0.1782 (0.0106) &  0.7758 (0.0688) & 105.4027 (2.1131) & 6.4535 (0.0385) & 20.0688 (17.1654)\\
     \hline
     \multirow{2}{*}{$0.05$} & $N(5 \mathds{1}_p, 5I_p)$ & 0.1434 (0.0096) & 0.1834 (0.0123) & 0.7633 (0.0639) & 11.4392 (0.0739)  & 15.3559 (0.0430) & 0.1313 (0.0117) \\
     & $\delta_{5\mathds{1}_p}$ & 0.1443 (0.0076) & 0.1888 (0.0172) & 0.7781 (0.0533) & 130.9782 (0.3241) & 16.8037 (0.0559) & 49.7855 (17.4420) \\
     \hline
     \multirow{2}{*}{$0.1$} & $N(5 \mathds{1}_p, 5I_p)$ & 0.1500 (0.0111) & 0.1881 (0.0143) & 0.7548 (0.0435) & 22.5296 (0.1511) & 33.1473 (0.0867) & 0.1420 (0.0169)\\
     & $\delta_{5\mathds{1}_p}$ & 0.1470 (0.0072)& 0.1957 (0.0135) & 0.7740 (0.0678) & 198.0822 (0.6326) & 36.5264 (0.0924) & 63.8083 (22.3797) \\
     \hline
     \multirow{2}{*}{$0.2$} & $N(5 \mathds{1}_p, 5I_p)$ & 0.1620 (0.0102) & 0.2000 (0.0157) & 0.7285 (0.0862) & 39.3082 (0.2273) & 81.2740 (0.1081) & 0.1420 (0.0107 \\
     & $\delta_{5\mathds{1}_p}$ & 0.1632 (0.0206) & 0.1938 (0.0155) & 0.9365 (0.0788) & 235.5581 (0.8650) & 92.1920 (0.1281) & 81.8603 (0.1722) \\
     \hline
   \end{tabular}}
\caption{Simulation results with $n=50,000, p=100$ and $\epsilon\in\{0.02,0.05,0.1,0.2\}$.
We show the average error $\opnorm{\wh{\Sigma} - \Sigma}$ in each cell with standard deviation in parenthesis from 10 repeated experiments. }\label{tab:eps}
\end{table}

\begin{table}[H]
   \centering
   \resizebox{\linewidth}{!}{%
   \begin{tabular}{|c|c|c|c|c|c|c|c|}
    \hline
     $n$ & $Q$ & $G_1(Z; A) = AZ$ & $G_2(U, z; A, w_g) = g_{w_g}(z)AU$ & Dimension Halving & Tyler's M-estimator & Kendall's $\tau$ & MVE \\
     \hline\hline
     \multirow{2}{*}{$5,000$} & $N(5 \mathds{1}_p, 5I_p)$ & 1.0186 (0.0677) & 0.6449 (0.0576) & 1.0813 (0.0593) & 38.6494 (0.4477) & 81.0570 (0.4043) & 0.4724 (0.0623) \\
     & $\delta_{5\mathds{1}_p}$ & 0.9142 (0.0717) &  0.6152 (0.0751) & 1.1796 (0.1257) & 233.2116 (2.9582) & 92.0669 (0.2453) & 82.1385 (0.3717) \\
     \hline
     \multirow{2}{*}{$10,000$}  & $N(5 \mathds{1}_p, 5I_p)$ & 0.3465 (0.0722) & 0.5852 (0.0816) & 0.9489 (0.0563) & 39.1785 (0.5876) & 81.1670 (0.2038) & 0.3225 (0.0350) \\
     & $\delta_{5\mathds{1}_p}$ & 0.3293 (0.0166) & 0.4244 (0.0632) & 1.0558 (0.0796) & 235.6172 (1.8062) & 92.1067 (0.3140) & 82.1225 (0.4402) \\
     \hline
     \multirow{2}{*}{$20,000$}  & $N(5 \mathds{1}_p, 5I_p)$ & 0.2382 (0.0084)& 0.2867 (0.0183) & 0.7934 (0.0527) & 39.1790 (0.2643)  & 81.1848 (0.1246) & 0.2198 (0.0162)  \\
     & $\delta_{5\mathds{1}_p}$ & 0.2459 (0.0151) & 0.2835 (0.0287) & 0.9463 (0.0793) & 234.2984 (1.8534) & 92.1426 (0.2296) & 81.9923 (0.3232) \\
     \hline
     \multirow{2}{*}{$50,000$}  & $N(5 \mathds{1}_p, 5I_p)$ & 0.1547 (0.0082) & 0.1866 (0.0211) & 0.8185 (0.0847) & 39.2105 (0.2365) & 81.2640 (0.0727) & 0.1447 (0.0127) \\
     & $\delta_{5\mathds{1}_p}$ & 0.1580 (0.0101) & 0.1947 (0.0201) & 0.8988 (0.0821) & 235.0871 (0.9106) & 92.1250 (0.1733) & 82.0914 (0.2233) \\
     \hline
   \end{tabular}}
\caption{Simulation results with $p=100, \epsilon=0.2$ and $n\in\{5000,10000,20000,50000\}$.
We show the average error $\opnorm{\wh{\Sigma} - \Sigma}$ in each cell with standard deviation in parenthesis from 10 repeated experiments.}\label{tab:size}
\end{table}

\begin{table}[H]
   \centering
   \resizebox{\linewidth}{!}{%
   \begin{tabular}{|c|c|c|c|c|c|c|c|}
    \hline
     $p$ & $Q$ & $G_1(Z; A) = AZ$ & $G_2(U, z; A, w_g) = g_{w_g}(z)AU$ & Dimension Halving & Tyler's M-estimator & Kendall's $\tau$ & MVE \\
     \hline\hline
     \multirow{2}{*}{10} & $N(5 \mathds{1}_p, 5I_p)$ & 0.0498 (0.0116) & 0.0541 (0.0139) & 0.5800 (0.0268) & 2.8370 (0.0345) & 8.1053 (0.0540) & 0.0475 (0.0133) \\
     & $\delta_{5\mathds{1}_p}$ & 0.0519 (0.0093) & 0.1072 (0.0424) & 0.5587 (0.0236) & 10.8566 (0.0327) & 9.2230 (0.0838) & 6.3590 (0.0458) \\
     \hline
     \multirow{2}{*}{25} & $N(5 \mathds{1}_p, 5I_p)$ & 0.0813 (0.0127) &0.1059 (0.0106) & 0.6090 (0.0256) & 8.7515 (0.0373) & 20.2953 (0.0788) & 0.0637 (0.0084) \\
     & $\delta_{5\mathds{1}_p}$ & 0.0813 (0.0089) & 0.1188 (0.0269) & 0.6010 (0.0261) & 43.3639 (0.2047) & 23.0388 (0.1104) & 18.8767 (0.0685) \\
     \hline
     \multirow{2}{*}{50} & $N(5 \mathds{1}_p, 5I_p)$ & 0.1171 (0.0111) & 0.1296 (0.0089) & 0.6238 (0.0197) & 18.9115 (0.1781) & 40.6553 (0.0731) & 0.1034 (0.0081)  \\
     & $\delta_{5\mathds{1}_p}$ & 0.1044 (0.0090) & 0.1444 (0.0174) & 0.6420 (0.0438) & 125.5211 (0.5539) & 46.0510 (0.0870) & 40.0609 (0.2258) \\
     \hline
     \multirow{2}{*}{100} & $N(5 \mathds{1}_p, 5I_p)$ & 0.1538 (0.0052) & 0.1893 (0.0131) & 0.6406 (0.0211)  & 39.2077 (0.1659) & 81.1823 (0.0865) & 0.1454 (0.0147) \\
     & $\delta_{5\mathds{1}_p}$ & 0.1552 (0.0087) & 0.1957 (0.0150) & 0.6385 (0.0279) & 235.5848 (1.2379) & 92.0734 (0.0988) & 82.0031 (0.2379) \\
     \hline
     \multirow{2}{*}{200} & $N(5 \mathds{1}_p, 5I_p)$ & 0.2154 (0.0066) & 0.2630 (0.0125) & 0.6824 (0.0274) & 80.0833 (0.5008) & 162.3562 (0.1491) & 0.2115 (0.0100) \\
     & $\delta_{5\mathds{1}_p}$ & 0.2159 (0.0057) & 0.2559 (0.0099) & 0.6628 (0.0206) & 731.8936 (1.5718) & 184.2577 (0.1533) & 166.0049 (0.3709) \\
     \hline
   \end{tabular}}
\caption{Simulation results with $n=50,000, \epsilon=0.2$ and $p\in\{10,25,50,100,200\}$.
We show the average error $\opnorm{\wh{\Sigma} - \Sigma}$ in each cell with standard deviation in parenthesis from 10 repeated experiments.}\label{tab:dim}
\end{table}

The results are summarized in Tables \ref{tab:eps}-\ref{tab:dim}. As $\epsilon$ gets larger, Table \ref{tab:eps} shows that the errors of Tyler's M-estimator, Kendall's $\tau$ and MVE all grow significantly, while the dependence on $\epsilon$ for the other three estimators are very mild. The same pattern also appears in Table \ref{tab:dim} as the dimension $p$ grows. For Table \ref{tab:size}, we observe that the errors of the two GANs and dimension halving all decrease as $n$ grows. In contrast, the errors of Tyler's M-estimator and Kendall's $\tau$ almost stay as constant when $n$ varies because of the contamination. MVE exhibits different behaviors for the two contamination distributions. Its error decreases as $n$ grows when the contamination distribution is $N(5 \mathds{1}_p, 5I_p)$ and stays as constant when the contamination distribution is $\delta_{5\mathds{1}_p}$.

In summary, our numerical results show that Tyler's M-estimator and Kendall's $\tau$ do not work well under Huber's contamination model. MVE works well for certain contamination distributions but is certainly not robust against all contamination distributions. The GANs proposed in the paper and dimension halving all work very well because they are designed for robust estimation under Huber's contamination model. Among the three, the GANs constantly have smaller errors than dimension halving. This is not surprising given the minimax optimality of the GANs under Huber's contamination model established in this paper.

\subsection{Simultaneous Estimation of Location and Scatter}\label{subsect:simest}

In this section, we study robust simultaneous location and scatter estimation. Our goal is to compare the performances of the four generator networks $G_1$, $G_2$, $G_3$ and $G_4$. Table \ref{tab:simest} summarizes the numerical results under four different settings of contamination models.
We observe that the networks $G_3$ and $G_4$ have similar performances in terms of estimating the scatter as the networks $G_1$ and $G_2$ that only estimate the scatter. We can also compare $G_2$ with $G_1$ and $G_4$ with $G_3$, since $G_1$ and $G_3$ assume the knowledge of the distribution family, while $G_2$ and $G_4$ estimate the distribution via the additional $g_{w_g}(z)$. Even though the additional knowledge of the distribution family does help $G_1$ and $G_3$ to outperform $G_2$ and $G_4$, the advantage is not significant in terms of estimating the scatter and is negligible in terms of location estimation. These observations imply that the most complicated network $G_4$ works very well for adaptive estimation under the general elliptical distribution family.
\begin{table}[H]
\begin{center}
   \resizebox{\linewidth}{!}{
   \begin{tabular}{|c|c|c|c|c|c|c|}
   \hline
   	  \multirow{2}{*}{$(P, Q)$} & $G_1(z;A)=Az$ & \multicolumn{2}{c|}{$G_3(z;A,\mu)=Az+\mu$} &  $G_2(u,z;A,w_g)= g_{w_g}(z)Au$ & \multicolumn{2}{c|}{$G_4(u,z;A,w_g,\mu)=g_{w_g}(z)Au+\mu$} \\\cline{2-7}
    	& $\opnorm{\wh{\Sigma}-\Sigma}$ & $\opnorm{\wh{\Sigma}-\Sigma}$ & $\|\wh{\theta}-\theta\|$ & $\opnorm{\wh{\Sigma}-\Sigma}$ & $\opnorm{\wh{\Sigma}-\Sigma}$ & $\|\wh{\theta}-\theta\|$  \\ 
	\hline\hline
	$(N(0, I_p), N(5,5I_p))$ & 0.1615 (0.0134) &  0.1537 (0.0155) &  0.0508 (0.0054) & 0.1624 (0.0141) & 0.1694 (0.0105) & 0.0519 (0.0048) \\
	\hline
	$(N(0, \Sigma_{\sf{ar}}), \delta_{4\mathds{1}_p})$ & 0.1530 (0.0059) & 0.1640 (0.0106) & 0.0547 (0.0039) & 0.1557 (0.0142) &  0.1880 (0.0134) & 0.0544 (0.0073) \\
	\hline
	$(T_1(0, \Sigma_{\sf{ar}}), T_1(5, 5I_p))$ & 0.2808 (0.0440) & 0.2512 (0.0479) & 0.0656 (0.0065) & 0.3350 (0.0681) & 0.4678 (0.0498) & 0.0575 (0.0048) \\
	\hline
	$(T_2(0, \Sigma_{\sf{ar}}), T_2(5, 5I_p))$ & 0.3450 (0.0157) & 0.3743 (0.0097) & 0.0640 (0.0056) & 0.4059 (0.0254) & 0.4704 (0.0299) & 0.0642 (0.0040) \\
	\hline
   \end{tabular}}
\caption{Simulation results with i.i.d. observations generated from $(1-\epsilon)P+\epsilon Q$, where $n=50,000, p=100$ and $\epsilon=0.2$. We show the average errors $\opnorm{\wh{\Sigma} - \Sigma}$ and $\|\wh{\theta}-\theta\|$ in each cell with standard deviation in parenthesis from 10 repeated experiments.}\label{tab:simest}
\end{center}
\end{table}

\subsection{Comparisons of Proper Scoring Rules}\label{subsect:sr}

As we have shown in Section \ref{subsect:sc-example}, JS-GAN can be understood as a special case of a more general class Beta-GAN$(\alpha, \beta)$ with $\alpha,\beta>-1$. Our theoretical results are valid for any Beta-GAN$(\alpha, \beta)$ as long as $|\alpha-\beta|<1$ (Condition \ref{cond:G}). In this section, we study the performance of this wide class of GANs with various choices of $(\alpha,\beta)$. Table \ref{tab:sc} summarizes the numerical results under four different settings of contamination models. The comparison includes JS-GAN and LS-GAN, which correspond to $(\alpha,\beta)=(0,0)$ and $(\alpha,\beta)=(1,1)$. In addition, we also consider $(\alpha,\beta)\in\{(-0.5,-0.5),(0.5,0.5), (0.5,1), (1,0.5), (2,2), (4,4)\}$. We observe that the GANs with $0\leq \alpha,\beta\leq 1$ all have very similar performances. On the other hand, the errors increase as the values of $\alpha$ and $\beta$ grow, which is shown in the last two columns of Table \ref{tab:sc} for $(\alpha,\beta)\in\{(2,2), (4,4)\}$. In fact, our additional experiments show that the performance of Beta-GAN$(\alpha, \beta)$ is not acceptable anymore as soon as $\alpha,\beta\geq 8$. This may be caused by the bad landscape of the objective function for large $\alpha$ and $\beta$. In fact, as $\alpha=\beta\rightarrow\infty$, we recover TV-GAN, which is known to have a bad landscape for robust estimation \citep{gao2018robust}. The boosting score, which corresponds to $(\alpha,\beta)=(-0.5,-0.5)$, leads to worse errors than JS-GAN and LS-GAN.

We also note that some asymmetric GANs (e.g. $(\alpha,\beta)=(1,0.5)$) have better performance than JS-GAN and LS-GAN. It is interesting to further explore the properties of different scores from both theoretical and experimental perspectives in the future work.
\begin{table}[H]
   \centering
   \resizebox{\linewidth}{!}{%
   \begin{tabular}{|c|c|c|c|c|c|c|c|c|}
    \hline
     $(P, Q, p)$ & Beta(-0.5,-0.5) & JS-GAN & Beta(0.5, 0.5) & Beta(0.5,1) & Beta(1,0.5) & LS-GAN & Beta(2,2) & Beta(4,4)  \\
     \hline\hline
     $(N(0, I_p), N(5, 5I_p), 100)$ & 0.1557 (0.0093) & 0.1188 (0.0046) & 0.1228 (0.0045) & 0.1201 (0.0033) & 0.1040 (0.0017) & 0.1283 (0.0095) & 0.1402 (0.0063) & 0.3478 (0.0035) \\
     \hline
     $(N(0, I_p), N(5, 5I_p), 200)$ & 0.3346 (0.0149) & 0.1720 (0.0032) & 0.1677 (0.0045) & 0.1697 (0.0054) & 0.1599 (0.0026) & 0.1749 (0.0048) & 0.1978 (0.0031) & 0.3508 (0.0034) \\
     \hline
     $(T_2(0, \Sigma_{\sf{ar}}), T_2(5, 5I_p), 100)$ & 0.5653 (0.5065) & 0.1848 (0.0106) & 0.1941 (0.0087) & 0.2016 (0.0190) & 0.1925 (0.0149) & 0.1882 (0.0152) &0.3371 (0.0378) & 0.9689 (0.1124) \\
     \hline
     $(T_4(0, \Sigma_{\sf{ar}}), T_4(5, 5I_p), 100)$ & 0.2726 (0.0083) & 0.2009 (0.0079) & 0.1923 (0.0125) & 0.2133 (0.0105) & 0.1758 (0.0122) & 0.1999 (0.0130) & 0.3334 (0.0213) &  0.7740 (0.0432) \\     
     \hline
   \end{tabular}}
\caption{Simulation results with i.i.d. observations generated from $(1-\epsilon)P+\epsilon Q$, where $n=50,000, p\in\{100,200\}$ and $\epsilon=0.2$. We show the average error $\opnorm{\wh{\Sigma} - \Sigma}$ in each cell with standard deviation in parenthesis from 10 repeated experiments.}\label{tab:sc}
\end{table}


\section{Proofs}\label{sec:proof}

\subsection{Some Lemmas}

Before proving the main results, we introduce some lemmas, whose proofs are given in Section \ref{sec:aux}.
\begin{lemma}\label{lem:complexity-T1}
Given i.i.d. observations $X_1,...,X_n\sim \mathbb{P}$, and $\mathcal{T}\in\{\mathcal{T}_1,\mathcal{T}_3\}$ defined by either (\ref{eq:net-structure-1}) or (\ref{eq:net-structure-3}), we have for any $\delta>0$,
$$\sup_{T\in\mathcal{T}}\left|\frac{1}{n}\sum_{i=1}^n\log T(X_i)-\mathbb{E}\log T(X)\right|\leq C\kappa\left(\sqrt{\frac{p}{n}}+\sqrt{\frac{\log(1/\delta)}{n}}\right),$$
with probability at least $1-\delta$ for some universal constant $C>0$.
\end{lemma}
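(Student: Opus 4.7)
The plan is to center and rescale first, then apply a standard symmetrization $+$ contraction $+$ VC-subgraph argument, and finally add a concentration step.

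\textbf{Step 1 (Centering and envelope).} Write every $T \in \mathcal{T}$ as $T(x) = \sig(f(x))$, where $f$ ranges over the class
$$\mathcal{F} = \Big\{f(x) = \sum_{j\geq 1} w_j\, \sig(u_j^T x + b_j) : \sum_{j\geq 1} |w_j|\leq \kappa,\ u_j\in\mathbb{R}^p,\ b_j\in\mathbb{R}\Big\}$$
(with $b_j \equiv 0$ for $\mathcal{T}_1$). Since $|\sig|\leq 1$, we have $|f(x)|\leq \kappa$ uniformly in $\mathcal{F}$. Subtracting the constant $\log(1/2)$ does not affect the supremum, so we may replace $\log T(X_i)$ by $\phi(f(X_i))$, where $\phi(y) = \log(2\sig(y))$. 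The function $\phi$ is $1$-Lipschitz with $\phi(0)=0$ and satisfies $|\phi(y)|\leq |y|$, so $|\phi(f(x))|\leq \kappa$ on $\mathcal{F}$.

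\textbf{Step 2 (Concentration).} Since $|\phi(f(X_i))|\leq \kappa$, the bounded-differences constant of $\sup_{f\in\mathcal{F}}|\frac{1}{n}\sum_i \phi(f(X_i)) - \mathbb{E}\phi(f(X))|$ is $2\kappa/n$. McDiarmid's inequality yields, with probability at least $1-\delta$,
$$\sup_{f\in\mathcal{F}}\Big|\tfrac{1}{n}\sum_i\phi(f(X_i)) - \mathbb{E}\phi(f(X))\Big| \leq \mathbb{E}\Big[\sup_{f\in\mathcal{F}}\Big|\tfrac{1}{n}\sum_i\phi(f(X_i)) - \mathbb{E}\phi(f(X))\Big|\Big] + C\kappa\sqrt{\tfrac{\log(1/\delta)}{n}}.$$

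\textbf{Step 3 (Reduction to base-class Rademacher complexity).} Symmetrization bounds the expectation above by $2\,\mathbb{E}\sup_{f\in\mathcal{F}}|\frac{1}{n}\sum_i\varepsilon_i\phi(f(X_i))|$. Because $\phi$ is $1$-Lipschitz with $\phi(0)=0$, Talagrand's contraction inequality gives a further factor $2$ and reduces this to $\mathbb{E}\sup_{f\in\mathcal{F}}|\frac{1}{n}\sum_i\varepsilon_i f(X_i)|$ (up to absolute constants). Since $\mathcal{F} = \kappa\cdot\mathrm{conv}_{\pm}(\mathcal{G})$ with $\mathcal{G} = \{\sig(u^T x + b)\}$, the Rademacher complexity of convex hulls equals that of the base class, so
$$\mathbb{E}\sup_{f\in\mathcal{F}}\Big|\tfrac{1}{n}\sum_i\varepsilon_i f(X_i)\Big| \leq \kappa\,\mathbb{E}\sup_{g\in\mathcal{G}}\Big|\tfrac{1}{n}\sum_i \varepsilon_i g(X_i)\Big|.$$

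\textbf{Step 4 (VC-subgraph bound on $\mathcal{G}$).} The subgraph $\{(x,y): y\leq \sig(u^T x+b)\} = \{(x,y): u^T x + b - \sig^{-1}(y)\geq 0\}$ is a halfspace in the lifted coordinates $(x,\sig^{-1}(y))\in\mathbb{R}^{p+1}$, so $\mathcal{G}$ is a VC-subgraph class of dimension at most $p+2$. Since $\mathcal{G}$ has envelope $1$, Dudley's entropy integral (or Haussler's bound) yields
$$\mathbb{E}\sup_{g\in\mathcal{G}}\Big|\tfrac{1}{n}\sum_i \varepsilon_i g(X_i)\Big| \leq C\sqrt{\tfrac{p}{n}}.$$

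Combining Steps 2--4 gives the claim. The main obstacle is Step 4, namely identifying $\mathcal{G}$ as VC-subgraph with dimension $O(p)$ and transferring this to a dimension-free Rademacher bound that is independent of the width of the hidden layer (otherwise the contraction step would pick up spurious width factors). Once the class is understood through its subgraphs, the Rademacher bound is standard.
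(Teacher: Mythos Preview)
Your proposal is correct. The paper itself defers to Lemma~7.2 of \cite{gao2018robust} for this result (noting only that $\mathcal{T}_1\subset\mathcal{T}_3$), but your argument is precisely the template the paper writes out for the closely related Lemma~\ref{lem:complexity-T3}: McDiarmid for the $\sqrt{\log(1/\delta)/n}$ term, symmetrization, Lipschitz contraction to strip off $\phi=\log(2\sig(\cdot))$, H\"older to extract the factor $\kappa$, and finally a VC/Dudley bound on the base class $\{\sig(u^Tx+b)\}$ of VC dimension $O(p)$.
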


\begin{lemma}\label{lem:complexity-T2}
Given i.i.d. observations $X_1,...,X_n\sim N(0,\Sigma)$ and the function class $\mathcal{T}_2$ defined by (\ref{eq:net-structure-2}). Assume $\opnorm{\Sigma}\leq M=O(1)$. We have for any $\delta>0$,
\begin{eqnarray}
\sup_{T\in\mathcal{T}_2}\left|\frac{1}{n}\sum_{i=1}^n\log T(X_i)-\mathbb{E}\log T(X)\right|\leq C\kappa\left(\sqrt{\frac{p}{n}}+\sqrt{\frac{\log(1/\delta)}{n}}\right),
\end{eqnarray}
with probability at least $1-\delta$ for some universal constant $C>0$.
\end{lemma}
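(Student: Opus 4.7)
The plan is to follow the standard empirical process template (symmetrization, contraction, concentration), with the crucial twist that since $T(X)\in(0,1)$ and $\sum_j w_j\relu(u_j^T X)$ has no $x$-uniform bound, the loss $\log T(X)$ has no uniform envelope. Consequently McDiarmid-type bounded-difference concentration is not available. Instead I will exploit the Gaussian law of the $X_i$'s through the Lipschitz concentration inequality for standard Gaussian vectors. This is also what distinguishes the argument from Lemma~\ref{lem:complexity-T1}: there the sigmoid at the input layer keeps $T(x)$ bounded away from $0$, so an ordinary bounded-differences argument suffices.

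First I will check that for every $T\in\mathcal{T}_2$, the function $x\mapsto \log T(x)=\log\sig\bigl(\sum_j w_j\relu(u_j^T x)\bigr)$ is $\kappa$-Lipschitz on $\mathbb{R}^p$: $\log\sig$ has derivative $1-\sig\in(0,1)$, and the inner map $x\mapsto\sum_j w_j\relu(u_j^T x)$ is $\kappa$-Lipschitz because $\sum_j|w_j|\le\kappa$ and $\|u_j\|\le 1$. Writing $X_i=\Sigma^{1/2}Z_i$ with $Z_i\sim N(0,I_p)$ i.i.d., the function
\[
\Phi(Z_1,\ldots,Z_n)\ =\ \sup_{T\in\mathcal{T}_2}\Bigl|\frac{1}{n}\sum_{i=1}^n\log T(\Sigma^{1/2}Z_i)-\mathbb{E}\log T(X)\Bigr|
\]
is then $\kappa\sqrt{M/n}$-Lipschitz on $\mathbb{R}^{np}$: each block $Z_i$ contributes gradient of norm at most $\kappa\sqrt{M}/n$, and taking a supremum followed by absolute value does not increase the Lipschitz constant. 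Gaussian Lipschitz concentration then yields $\Phi\le\mathbb{E}\Phi+C\kappa\sqrt{\log(1/\delta)/n}$ with probability at least $1-\delta$, after absorbing $\sqrt{M}=O(1)$ into $C$.

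It remains to show $\mathbb{E}\Phi\le C\kappa\sqrt{p/n}$. By standard symmetrization, $\mathbb{E}\Phi$ is bounded by a constant multiple of the Rademacher complexity $\mathbb{E}\sup_T\frac{1}{n}\sum_i\epsilon_i\log T(X_i)$; although $\phi=\log\sig$ satisfies $\phi(0)=-\log 2\ne 0$, the Ledoux--Talagrand contraction still applies without residual because the extra term $\phi(0)\sum_i\epsilon_i$ vanishes in expectation. This reduces the problem to the Rademacher complexity of the class $\bigl\{\sum_j w_j\relu(u_j^T\cdot):\sum_j|w_j|\le\kappa,\|u_j\|\le 1\bigr\}$. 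Pulling the $w_j$'s outside the empirical process by duality, and applying a second contraction using that $\relu$ is $1$-Lipschitz with $\relu(0)=0$, reduces the bound to $\kappa\,\mathbb{E}\sup_{\|u\|\le 1}\bigl|\frac{1}{n}\sum_i\epsilon_i u^T X_i\bigr|=\frac{\kappa}{n}\mathbb{E}\|\sum_i\epsilon_i X_i\|$, which is at most $\kappa\sqrt{\Tr(\Sigma)/n}\le\kappa\sqrt{pM/n}$ by Jensen. Combining with the concentration step finishes the proof. The main obstacle is the unboundedness of the loss: without the Gaussian-Lipschitz argument one would need to truncate $\|X_i\|$ at scale $\sqrt{p}+\sqrt{\log(1/\delta)}$, inflating the deviation term by an extra $\sqrt{p}$ factor and destroying the advertised rate.
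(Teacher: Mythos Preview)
Your proposal is correct and follows essentially the same route as the paper's proof: write $X_i=\Sigma^{1/2}Z_i$, establish that the supremum is $O(\kappa/\sqrt{n})$-Lipschitz in $(Z_1,\dots,Z_n)$, apply Gaussian (Talagrand) concentration for the deviation term, and bound $\mathbb{E}\Phi$ via symmetrization followed by two contraction steps (through $\log\sig$ and through $\relu$) to reach $\kappa\,\mathbb{E}\|\frac{1}{n}\sum_i\epsilon_i X_i\|\lesssim\kappa\sqrt{p/n}$. Your explicit remark that bounded-differences fails here because the loss has no uniform envelope, and your handling of the $\log\sig(0)=-\log 2$ offset in the contraction step, are both on point; the paper proceeds the same way but leaves these points implicit.
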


\begin{lemma}\label{lem:complexity-T3}
Given i.i.d. observations $X_1,...,X_n\sim\mathbb{P}$ and the function class $\mathcal{T}_3$ defined by (\ref{eq:net-structure-3}). Assume $\{S(\cdot,1),S(\cdot,0)\}$ is a regular proper scoring rule that satisfies Condition \ref{cond:G} and $\kappa\leq c$ for some sufficiently small constant $c>0$. We have for any $\delta>0$,
$$\sup_{T\in\mathcal{T}_3}\left|\frac{1}{n}\sum_{i=1}^n S(T(X_i),1)-\mathbb{E}S(T(X),1)\right|\leq C\kappa\left(\sqrt{\frac{p}{n}}+\sqrt{\frac{\log(1/\delta)}{n}}\right),$$
with probability at least $1-\delta$ for some universal constant $C>0$.
\end{lemma}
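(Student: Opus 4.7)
The plan is to control the supremum by symmetrization, a chain of Talagrand contractions exploiting the smoothness of $S$ near $1/2$, a VC-based Rademacher bound on single sigmoid neurons, and finally a McDiarmid step for concentration.

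The starting observation is that the $\ell_1$ constraint $\sum_j |w_j|\le\kappa$ together with $\sig\in[0,1]$ forces $|\sum_j w_j\sig(u_j^T x+b_j)|\le\kappa$, so every $T\in\mathcal{T}_3$ takes values in the small interval $I_\kappa=[\sig(-\kappa),\sig(\kappa)]$ around $1/2$. Under Condition \ref{cond:G} the function $t\mapsto S(t,1)=G(t)+(1-t)G'(t)$ is $C^2$ in a neighborhood of $1/2$ with derivative $(1-t)G''(t)$; hence for $\kappa$ small enough we have $S(\cdot,1)$ bounded and $L$-Lipschitz on $I_\kappa$ with $L=O(1)$. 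In particular $|S(T(x),1)-S(1/2,1)|\le C\kappa$ pointwise, which will both control the McDiarmid bounded-difference constant and enable contraction.

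Next I would apply standard symmetrization to pass from the uniform deviation to $2\,\mathbb{E}\sup_{T\in\mathcal{T}_3}\bigl|\frac{1}{n}\sum_i \varepsilon_i(S(T(X_i),1)-S(1/2,1))\bigr|$, then invoke the Ledoux–Talagrand contraction inequality with the $L$-Lipschitz map $t\mapsto S(t,1)-S(1/2,1)$ (vanishing at $1/2$) to reduce to bounding the Rademacher complexity of the centered class $\{T-1/2:T\in\mathcal{T}_3\}$. A second contraction peels off the outer sigmoid (Lipschitz constant $1/4$, vanishing at $0$), reducing further to the inner class $\mathcal{Y}=\{\sum_j w_j\sig(u_j^T x+b_j):\sum_j|w_j|\le\kappa\}$, which is the symmetric convex hull of $\mathcal{G}=\{\sig(u^T x+b):u\in\mathbb{R}^p,b\in\mathbb{R}\}$ scaled by $\kappa$. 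The convex-hull identity gives $\mathcal{R}_n(\mathcal{Y})\le\kappa\,\mathcal{R}_n(\mathcal{G})$.

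The class $\mathcal{G}$ is bounded in $[0,1]$ and is a VC-subgraph class of dimension $O(p)$ (its subgraphs are intersections of $\mathbb{R}^{p+1}$-halfspaces once the monotone sigmoid is inverted). Dudley's entropy integral, or the standard uniform entropy bound for VC-subgraph classes, then yields $\mathcal{R}_n(\mathcal{G})\lesssim\sqrt{p/n}$, giving in total $\mathbb{E}\sup_T|\cdots|\lesssim\kappa\sqrt{p/n}$. For the tail, the pointwise bound $|S(T(X_i),1)-S(1/2,1)|\le C\kappa$ from the first paragraph lets the McDiarmid bounded-differences inequality (with individual differences bounded by $C\kappa/n$) promote the expectation bound to a high-probability one, adding a $\kappa\sqrt{\log(1/\delta)/n}$ term and finishing the proof.

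The main obstacle I foresee is making the double contraction and the convex-hull reduction formally tight in our normalization: one must be careful that centering by $S(1/2,1)$ and by the value of $\sig$ at $0$ does not incur factors depending on the size of the class, and that the Lipschitz constants from Condition \ref{cond:G} are genuinely controlled on all of $I_\kappa$ (using continuity of $G^{(3)}$ at $1/2$) rather than merely at the point $1/2$. The VC-subgraph bound for $\mathcal{G}$ is standard but worth spelling out because the weights $u_j,b_j$ are unrestricted, so the complexity bound must come from the combinatorial structure of sigmoids of affine functions, not from a norm bound on $(u_j,b_j)$.
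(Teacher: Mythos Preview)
Your proposal is correct and follows essentially the same route as the paper's proof: both use the pointwise bound $|S(T(x),1)-S(1/2,1)|\le C\kappa$ from Condition~\ref{cond:G} to apply McDiarmid, then symmetrize, contract through the Lipschitz map $S(\sig(\cdot),1)$, extract the factor $\kappa$ via the $\ell_1$ constraint (your convex-hull identity, their H\"older step), and finish with a VC/Dudley bound on the single-sigmoid class $\{\sig(u^Tx+b)\}$. The only cosmetic difference is that the paper performs one contraction with the composite map $S(\sig(\cdot),1)$ rather than your two separate peels, which sidesteps the centering bookkeeping you flagged as a concern.
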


\begin{lemma}\label{lem:complexity-T4}
Given i.i.d. observations $X_1,...,X_n\sim N(0,\Sigma)$ and the function class $\mathcal{T}_4$ defined by (\ref{eq:net-structure-4}). Assume $\{S(\cdot,1),S(\cdot,0)\}$ is a regular proper scoring rule that satisfies Condition \ref{cond:G}, $\kappa_1\leq c$ for some sufficiently small constant $c>0$, and $\opnorm{\Sigma}\leq M=O(1)$. We have for any $\delta>0$,
\begin{eqnarray}
\sup_{T\in\mathcal{T}_4}\left|\frac{1}{n}\sum_{i=1}^n S(T(X_i),1)-\mathbb{E}S(T(X),1)\right|\leq C\kappa_1\kappa_2\left(\sqrt{\frac{p}{n}}+\sqrt{\frac{\log(1/\delta)}{n}}\right),
\end{eqnarray}
with probability at least $1-\delta$ for some universal constant $C>0$.
\end{lemma}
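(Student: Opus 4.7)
\textbf{Proof sketch for Lemma \ref{lem:complexity-T4}.}
The plan is to reduce the supremum over $\mathcal{T}_4$ to a Rademacher complexity bound for a three-layer network class, and then upgrade to a high-probability statement via Talagrand's inequality. First, I exploit the smallness of $\kappa_1$ to localize near $T=1/2$: since $\sig(\cdot)\in[0,1]$ and $\sum_{j\geq1}|w_j|\leq\kappa_1$, every $T\in\mathcal{T}_4$ has the form $T(x)=\sig(f(x))$ with $|f(x)|\leq\kappa_1$ uniformly. Set $\phi(y):=S(\sig(y),1)$. Under Condition \ref{cond:G} and the Savage representation (\ref{eq:savage-rep}), $\phi$ is $C^2$ near $y=0$, so for $\kappa_1\leq c$ the centered map $\psi(y):=\phi(y)-\phi(0)$ satisfies $\psi(0)=0$ and is $L$-Lipschitz on $[-\kappa_1,\kappa_1]$ for an absolute constant $L$. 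Up to the additive constant $S(1/2,1)$, the quantity of interest equals $\sup_{f\in\mathcal{F}}|P_n\psi(f)-P\psi(f)|$, where $\mathcal{F}$ is the class of logits from (\ref{eq:net-structure-4}).

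Next I would bound the Rademacher complexity of $\mathcal{F}$ by successive contractions. After symmetrization, Ledoux--Talagrand contraction (applicable because $\psi$ is $L$-Lipschitz and vanishes at $0$) yields $R_n(\psi\circ\mathcal{F})\lesssim L\,R_n(\mathcal{F})$. The outer $\ell_1$ constraint $\sum_{j\geq1}|w_j|\leq\kappa_1$, via the dual representation of an $\ell_1$-ball as $\kappa_1$ times the closed convex hull of signed elements of the single-hidden-unit class $\mathcal{G}=\{\sig(\sum_l v_l\relu(u_l^Tx)):\sum|v_l|\leq\kappa_2,\|u_l\|\leq1\}$, gives $R_n(\mathcal{F})=\kappa_1\,R_n(\mathcal{G})$. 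A second contraction (sigmoid is $\tfrac14$-Lipschitz with $\sig(0)=\tfrac12$) together with the $\ell_1$-bound on the middle layer yields $R_n(\mathcal{G})\lesssim\kappa_2\,R_n(\mathcal{H})$ where $\mathcal{H}=\{\relu(u^Tx):\|u\|\leq1\}$. A final contraction (ReLU is $1$-Lipschitz) and a direct computation of $\mathbb{E}R_n(\{u^Tx:\|u\|\leq1\})$ for $X\sim N(0,\Sigma)$ with $\opnorm{\Sigma}\leq M=O(1)$ give the standard bound $\sqrt{p/n}$. Chaining these yields $\mathbb{E}\sup_{f\in\mathcal{F}}|P_n\psi(f)-P\psi(f)|\lesssim\kappa_1\kappa_2\sqrt{p/n}$.

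Finally, I upgrade from expectation to high probability. Each $\psi(f(X))$ is uniformly bounded by $L\kappa_1$ and has variance at most $L^2\kappa_1^2$, so Talagrand's concentration inequality for suprema of empirical processes (Bousquet form) adds a deviation term of order $L\kappa_1\sqrt{\log(1/\delta)/n}$ with probability at least $1-\delta$. Because $\kappa_2\geq1$ by hypothesis, this is dominated by $C\kappa_1\kappa_2\sqrt{\log(1/\delta)/n}$, matching the target bound. The main obstacle is the first contraction step in the presence of a potentially infinite outer sum $\sum_{j\geq1}w_j g_j$ with only an $\ell_1$ constraint: one must invoke a convex-hull/duality argument so that the Rademacher complexity of $\mathcal{F}$ factors exactly as $\kappa_1 R_n(\mathcal{G})$ rather than blowing up with the number of units. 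Verifying the Lipschitz constant of $\psi$ from Condition \ref{cond:G} and the $\sqrt{p/n}$ bound on $R_n(\mathcal{H})$ under the Gaussian moment bound $\mathbb{E}\|X\|^2\leq pM$ are otherwise routine.
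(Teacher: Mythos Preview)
Your overall architecture---localize via $|T-1/2|\leq\kappa_1$, peel the layers by Ledoux--Talagrand contraction plus $\ell_1$/convex-hull duality, and finish with a concentration inequality---matches the paper's proof almost step for step on the Rademacher side. The paper appeals to Theorem~7 of \cite{meir2003generalization} and H\"older's inequality to carry out exactly the chain $R_n(\psi\circ\mathcal{F})\lesssim\kappa_1\kappa_2\,R_n(\{u^Tx:\|u\|\leq1\})\lesssim\kappa_1\kappa_2\sqrt{p/n}$ you describe.

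The one substantive difference is the concentration step. The paper does \emph{not} use the Bousquet form of Talagrand's inequality; instead it writes $X_i=\Sigma^{1/2}Z_i$ with $Z_i\sim N(0,I_p)$, verifies that $f(Z_1,\dots,Z_n)$ is $C\kappa_1\kappa_2/\sqrt{n}$-Lipschitz in the Euclidean norm on $\mathbb{R}^{np}$ (by chaining Lipschitz constants through the three layers, exactly as in (\ref{eq:Lip1})--(\ref{eq:Lip2.5})), and then invokes Gaussian Lipschitz concentration. This yields the deviation term $C\kappa_1\kappa_2\sqrt{\log(1/\delta)/n}$ directly, with the factor $\kappa_2$ built in. Your route via boundedness of $\psi(f(X))$ gives only $C\kappa_1\sqrt{\log(1/\delta)/n}$, and you patch this by asserting ``$\kappa_2\geq 1$ by hypothesis''---but that is not a hypothesis of Lemma~\ref{lem:complexity-T4} itself (it appears only in Theorem~\ref{thm:cov-T4}). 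So as written, your concentration step does not quite deliver the lemma's stated bound for arbitrary $\kappa_2$; switching to the paper's Gaussian-Lipschitz argument removes this dependence and is the cleaner fix.
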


\begin{lemma}\label{lem:complexity-deep}
Given i.i.d. observations $X_1,...,X_n\sim\mathbb{P}$. Assume $\{S(\cdot,1),S(\cdot,0)\}$ is a regular proper scoring rule that satisfies Condition \ref{cond:G} and $\kappa\leq c$ for some sufficiently small constant $c>0$. We have for any $\delta>0$,
$$\sup_{T\in\mathcal{T}}\left|\frac{1}{n}\sum_{i=1}^n S(T(X_i),1)-\mathbb{E}S(T(X),1)\right|\leq C\kappa(2B)^{L-1}\left(\sqrt{\frac{p}{n}}+\sqrt{\frac{\log(1/\delta)}{n}}\right),$$
for both $\mathcal{T}=\mathcal{T}^L(\kappa,B)$ and $\mathcal{T}=\bar{\mathcal{T}}^L(\kappa,B)$ with probability at least $1-\delta$ for some universal constant $C>0$.
\end{lemma}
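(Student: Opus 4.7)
The plan is to follow a symmetrization-plus-recursive-Rademacher-complexity argument that mirrors the layered construction of $\mathcal{G}^L(B)$, closed by a McDiarmid concentration step. I first apply standard symmetrization to reduce controlling $\mathbb{E}\sup_{T\in\mathcal{T}}|n^{-1}\sum_i S(T(X_i),1)-\mathbb{E}S(T(X),1)|$ to twice $\mathbb{E}\sup_{T\in\mathcal{T}}|n^{-1}\sum_i\epsilon_i S(T(X_i),1)|$ with independent Rademacher signs. A simple induction shows that every $g\in\mathcal{G}^l(B)$ is bounded by $B^{l-1}$ (the base classes $\mathcal{G}_{\sig}$ and $\mathcal{G}_{\ramp}$ have range in $[0,1]$, and each additional layer multiplies the bound by $B$ through the $\ell_1$ constraint). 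Consequently, under the hypotheses $\kappa\le c$ and $L,B=O(1)$, the argument of the outer sigmoid in $T$ is bounded by $\kappa B^{L-1}=o(1)$, and $T(X)$ stays in a shrinking neighborhood of $1/2$. By Condition \ref{cond:G}, $S(\cdot,1)$ is $C^3$ near $1/2$, so the map $\phi(y):=S(\sig(y),1)-S(1/2,1)$ is $O(1)$-Lipschitz with $\phi(0)=0$ on the relevant range.

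A single application of the Ledoux--Talagrand contraction inequality then reduces the outer Rademacher process to that of the inner class $\{x\mapsto\sum_j w_j g_j(x):\sum_j|w_j|\le\kappa,\,g_j\in\mathcal{G}^L(B)\}$, which is $\kappa$ times the absolute convex hull of $\mathcal{G}^L(B)$ and hence has Rademacher complexity $\kappa\,\mathcal{R}_n(\mathcal{G}^L(B))$ up to universal constants. For each intermediate layer, the defining relation $\mathcal{G}^{l+1}(B)=\{\relu(\sum_h v_h g_h):\sum_h|v_h|\le B,\,g_h\in\mathcal{G}^l(B)\}$ supports the same two-step reduction: ReLU is $1$-Lipschitz with $\relu(0)=0$, contributing a factor $2$ via contraction, while the $\ell_1$ constraint $\sum_h|v_h|\le B$ puts the resulting combinations into $B$ times the absolute convex hull of $\mathcal{G}^l(B)$, which preserves Rademacher complexity up to a universal constant. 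This yields the recursion $\mathcal{R}_n(\mathcal{G}^{l+1}(B))\le 2B\,\mathcal{R}_n(\mathcal{G}^l(B))$, and iterating gives $\mathcal{R}_n(\mathcal{G}^L(B))\le (2B)^{L-1}\mathcal{R}_n(\mathcal{G}^1(B))$.

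For the base case, $\mathcal{G}^1(B)$ consists of a bounded monotone Lipschitz nonlinearity ($\sig$ or $\ramp$) applied to affine maps $x\mapsto u^Tx+b$; the associated subgraph class has VC dimension $O(p)$, so by Dudley's entropy integral $\mathcal{R}_n(\mathcal{G}^1(B))\lesssim \sqrt{p/n}$ uniformly in the unconstrained weights $u\in\mathbb{R}^p,b\in\mathbb{R}$. Assembling these bounds gives the in-expectation estimate $\lesssim\kappa(2B)^{L-1}\sqrt{p/n}$. Concentration around the mean follows from McDiarmid's bounded-differences inequality: because $S(T(X_i),1)-S(1/2,1)$ is uniformly $O(\kappa(2B)^{L-1})$ by the localization of $T$ near $1/2$, the per-coordinate oscillation of the supremum is $O(\kappa(2B)^{L-1}/n)$, which delivers the desired tail term $\kappa(2B)^{L-1}\sqrt{\log(1/\delta)/n}$. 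The main technical delicacy, in my view, is the base-case bound: since $u\in\mathbb{R}^p$ is unconstrained, one cannot naively contract $\mathcal{G}^1(B)$ down to a linear class and invoke a norm-based Rademacher bound. The argument must exploit the boundedness of $\sig$ and $\ramp$ through a VC-subgraph argument to avoid picking up the unbounded $\|u\|$, and this is the only place where treating $\mathcal{T}^L$ and $\bar{\mathcal{T}}^L$ uniformly requires a brief case-check.
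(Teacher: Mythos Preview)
Your proposal is correct and follows essentially the same route as the paper's proof: an induction to bound $\sup_x|g(x)|\le B^{L-1}$ for $g\in\mathcal{G}^L(B)$ so that $|T(x)-1/2|\le B^{L-1}\kappa$, McDiarmid's inequality for the concentration term, and a layer-peeling Rademacher argument using Lipschitz contraction plus the $\ell_1$ constraint (yielding the $(2B)^{L-1}$ factor), with the base class handled by the VC-dimension/Dudley bound rather than a norm-based contraction. Your observation that the unconstrained $u$ in the bottom layer forces the VC-subgraph argument is exactly the point the paper addresses via (\ref{eq:dudley}), and the paper likewise disposes of $\bar{\mathcal{T}}^L$ by noting that $\ramp$ shares the boundedness, monotonicity, and Lipschitz properties of $\sig$.
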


\subsection{Proofs of Proposition \ref{prop:cov-flat} and Proposition \ref{prop:T2-not-robust}}

\begin{proof}[Proof of Proposition \ref{prop:cov-flat}]
Define
$$F(w,u;\Sigma,\Gamma)=\mathbb{E}_{X\sim N(0,\Sigma)}\log T_{w,u}(X) + \mathbb{E}_{X\sim N(0,\Gamma)}\log(1-T_{w,u}(X)),$$
where $T_{w,u}(x)=\sig\left(\sum_{j\geq 1}w_j\sig(u_j^Tx)\right)$. Then, we have
$$F(\Sigma,\Gamma)=\max_{\|w\|_1\leq\kappa,u_j\in\mathbb{R}^p}F(w,u;\Sigma,\Gamma).$$
We calculate the gradient and Hessian of $F(w,u;\Sigma,\Gamma)$ with respect to $w$. To do this, we define $g_u(x)$ to be a vector with the same dimension as $w$ and each of its coordinate takes $\sig(u_j^Tx)$. With this notation, we can write $T_{w,u}(x)=\sig(w^Tg_u(x))$. By standard calculation, we get
\begin{eqnarray*}
\nabla_w F(w,u;\Sigma,\Gamma) &=& \mathbb{E}_{X\sim N(0,\Sigma)}\left(1-T_{w,u}(X)\right)g_u(X) - \mathbb{E}_{X\sim N(0,\Gamma)}T_{w,u}(X)g_u(X), \\
\nabla^2_w F(w,u;\Sigma,\Gamma) &=& -\mathbb{E}_{X\sim N(0,\Sigma)}T_{w,u}(X)\left(1-T_{w,u}(X)\right)g_u(X)g_u(X)^T \\
&& - \mathbb{E}_{X\sim N(0,\Gamma)}T_{w,u}(X)\left(1-T_{w,u}(X)\right)g_u(X)g_u(X)^T.
\end{eqnarray*}
For any $u_j\in\mathbb{R}^p$,
\begin{eqnarray*}
\mathbb{E}_{X\sim N(0,\Sigma)}\sig(u_j^TX) &=& \mathbb{E}_{X\sim N(0,\Sigma)}\sig(-u_j^TX) \\
&=& \mathbb{E}_{X\sim N(0,\Sigma)}(1-\sig(u_j^TX)),
\end{eqnarray*}
which immediately implies $\mathbb{E}_{X\sim N(0,\Sigma)}\sig(u_j^TX)=1/2$. By the same argument, we also have $\mathbb{E}_{X\sim N(0,\Gamma)}\sig(u_j^TX)=1/2$. Therefore,
$$\nabla_w F(w,u;\Sigma,\Gamma)|_{w=0}=\frac{1}{2}\mathbb{E}_{X\sim N(0,\Sigma)}g_u(X)-\frac{1}{2}\mathbb{E}_{X\sim N(0,\Gamma)}g_u(X)=0.$$
Moreover, since $-\nabla^2_w F(w,u;\Sigma,\Gamma)$ is positive semi-definite, $F(w,u;\Sigma,\Gamma)$ is a concave function in $w$. This implies
$$\max_{\|w\|_1\leq\kappa}F(w,u;\Sigma,\Gamma)=F(0,u;\Sigma,\Gamma)=-\log 4.$$
Taking maximum over $u$,  we have $F(\Sigma,\Gamma)=-\log 4$, regardless of the values of $\Sigma$ and $\Gamma$.
\end{proof}

\begin{proof}[Proof of Proposition \ref{prop:T2-not-robust}]
We first introduce some notation. We define
$$J(\gamma^2)=\max_{\|w\|_1\leq\kappa,u_j\in\mathbb{R}}F(w,u;\gamma^2),$$
where
$$F(w,u;\gamma^2)=\mathbb{E}_{X\sim (1-\epsilon)N(0,\sigma^2)+\epsilon N(0,\tau^2)}\log T_{w,u}(X)+\mathbb{E}_{X\sim N(0,\gamma^2)}\log(1-T_{w,u}(X)),$$
with $T_{w,u}(x)$ defined by $T_{w,u}(x)=\sig\left(\sum_{j\geq 1}w_j\relu(u_jx)\right)$. With these notation, we need to prove $J([(1-\epsilon)\sigma+\epsilon\tau]^2)=\min_{\gamma^2}J(\gamma^2)$.

The gradient and Hessian of $F(w,u;\gamma^2)$ with respect to $w$ are given by
\begin{eqnarray*}
\nabla_w F(w,u;\gamma^2) &=& \mathbb{E}_{X\sim (1-\epsilon)N(0,\sigma^2)+\epsilon N(0,\tau^2)}\left(1-T_{w,u}(X)\right)g_u(X) \\
&& - \mathbb{E}_{X\sim N(0,\gamma^2)}T_{w,u}(X)g_u(X), \\
\nabla^2_w F(w,u;\gamma^2) &=& -\mathbb{E}_{X\sim (1-\epsilon)N(0,\sigma^2)+\epsilon N(0,\tau^2)}T_{w,u}(X)\left(1-T_{w,u}(X)\right)g_u(X)g_u(X)^T \\
&& - \mathbb{E}_{X\sim N(0,\gamma^2)}T_{w,u}(X)\left(1-T_{w,u}(X)\right)g_u(X)g_u(X)^T,
\end{eqnarray*}
where we use $g_u(X)$ to denote the vector whose $j$th coordinate is $\relu(u_jX)$. It is not hard to see that
$$\mathbb{E}_{X\sim N(0,\gamma^2)}\relu(u_jX)=|u_j|\gamma\sqrt{\frac{2}{\pi}},$$
and
$$\mathbb{E}_{X\sim (1-\epsilon)N(0,\sigma^2)+\epsilon N(0,\tau^2)}\relu(u_jX)=|u_j|[(1-\epsilon)\sigma+\epsilon\tau]\sqrt{\frac{2}{\pi}}.$$
Therefore, we have the identity
$$\mathbb{E}_{X\sim N(0,[(1-\epsilon)\sigma+\epsilon\tau]^2)}\relu(u_jX)=\mathbb{E}_{X\sim (1-\epsilon)N(0,\sigma^2)+\epsilon N(0,\tau^2)}\relu(u_jX),$$
which is equivalent to $\nabla_w F(w,u;[(1-\epsilon)\sigma+\epsilon\tau]^2)|_{w=0}=0$. Moreover, since $-\nabla^2_w F(w,u;[(1-\epsilon)\sigma+\epsilon\tau]^2)$ is positive semi-definite, $F(w,u;[(1-\epsilon)\sigma+\epsilon\tau]^2)$ is a concave function in $w$. This implies
$$\max_{\|w\|_1\leq\kappa}F(w,u;[(1-\epsilon)\sigma+\epsilon\tau]^2)=F(0,u;[(1-\epsilon)\sigma+\epsilon\tau]^2)=-\log 4.$$
Taking maximum over $u$, we have $J([(1-\epsilon)\sigma+\epsilon\tau]^2)=-\log 4$.
For any $\gamma^2$, $J(\gamma^2)\geq F(0,u;\gamma^2)=-\log 4$. Hence, we have the desired conclusion that $J([(1-\epsilon)\sigma+\epsilon\tau]^2)=\min_{\gamma^2}J(\gamma^2)$.
\end{proof}

\subsection{Proofs of Proposition \ref{prop:robust-mean} and Proposition \ref{prop:cov-no-con}}

\begin{proof}[Proof of Proposition \ref{prop:robust-mean}]
We use the notation $F_{w,u}(P,Q)=\mathbb{E}_{X\sim P}\log T_{w,u}(X) +\mathbb{E}_{X\sim Q}\log(1-T_{w,u}(X))$, with $T_{w,u}(x)=\sig\left(\sum_{j\geq 1}w_j\sig(u_j^Tx)\right)$. Thus, we can write
\begin{equation}
\wh{\theta}=\argmin_{\eta\in\mathbb{R}^p}F(\mathbb{P}_n,N(\eta,I_p)), \label{eq:JS-mean-def}
\end{equation}
where $F(P,Q)=\max_{\|w\|_1\leq \kappa,u_j\in\mathbb{R}^p}F_{w,u}(P,Q)$, and $\mathbb{P}_n=\frac{1}{n}\sum_{i=1}^n\delta_{X_i}$ is the empirical measure. Let $P$ be the data generating process that satisfies $\TV(P,N(\theta,I_p))\leq\epsilon$, and then there exist probability distributions $Q_1$ and $Q_2$, such that
$$P+\epsilon Q_1=N(\theta,I_p)+\epsilon Q_2.$$
The explicit construction of $Q_1,Q_2$ is given in the proof of Theorem 5.1 of \cite{chen2018robust}. This implies that
\begin{eqnarray}
\nonumber && |F(P,N(\eta,I_p))-F(N(\theta,I_p),N(\eta,I_p))| \\
\nonumber &\leq& \sup_{\|w\|_1\leq\kappa,u}|F_{w,u}(P,N(\eta,I_p))-F_{w,u}(N(\theta,I_p),N(\eta,I_p))| \\
\nonumber &=& \epsilon\sup_{\|w\|_1\leq\kappa,u}|\mathbb{E}_{X\sim Q_2}\log(2T_{w,u}(X))-\mathbb{E}_{X\sim Q_1}\log(2T_{w,u}(X))| \\
\label{eq:JS-stability} &\leq& 2\kappa\epsilon.
\end{eqnarray}
Then,
\begin{eqnarray}
\label{eq:1M} F(N(\theta,I_p),N(\wh{\theta},I_p)) &\leq& F(P,N(\wh{\theta},I_p)) + 2\kappa\epsilon \\
\label{eq:M2} &\leq& F(\mathbb{P}_n,N(\wh{\theta},I_p)) + C\kappa\left(\sqrt{\frac{p}{n}}+\sqrt{\frac{\log(1/\delta)}{n}}\right) + 2\kappa\epsilon \\
\label{eq:M3} &\leq& F(\mathbb{P}_n,N(\theta,I_p)) + C\kappa\left(\sqrt{\frac{p}{n}}+\sqrt{\frac{\log(1/\delta)}{n}}\right) + 2\kappa\epsilon \\
\label{eq:M4} &\leq& F(P,N(\theta,I_p)) + 2C\kappa\left(\sqrt{\frac{p}{n}}+\sqrt{\frac{\log(1/\delta)}{n}}\right) + 2\kappa\epsilon \\
\label{eq:M5} &\leq& F(N(\theta,I_p),N(\theta,I_p)) + 2C\kappa\left(\sqrt{\frac{p}{n}}+\sqrt{\frac{\log(1/\delta)}{n}}\right) + 4\kappa\epsilon.
\end{eqnarray}
The inequalities (\ref{eq:1M}) and (\ref{eq:M5}) are direct consequences of (\ref{eq:JS-stability}). We have used Lemma \ref{lem:complexity-T1} for (\ref{eq:M2}) and (\ref{eq:M4}), and (\ref{eq:M3}) is implied by the definition of the estimator (\ref{eq:JS-mean-def}). By the definition of $F(N(\theta,I_p),N(\wh{\theta},I_p))-F(N(\theta,I_p),N(\theta,I_p))$, we obtain the following inequality that
$$F_{w,u}(N(\theta,I_p),N(\wh{\theta},I_p))+\log 4\leq 2C\kappa\left(\sqrt{\frac{p}{n}}+\sqrt{\frac{\log(1/\delta)}{n}}\right) + 4\kappa\epsilon,$$
uniformly over all $\|w\|_1\leq\kappa$ and all $u$ with probability at least $1-\delta$. Choose $w_1=\kappa$, $w_j=0$ for all $j\geq 2$ and $u_1=v$ for some unit vector $\|v\|=1$, and then we have
\begin{equation}
f(\kappa;v^T\theta,v^T\wh{\theta}) \leq 2C\kappa\left(\sqrt{\frac{p}{n}}+\sqrt{\frac{\log(1/\delta)}{n}}\right) + 4\kappa\epsilon,\label{eq:bound-t-delta}
\end{equation}
where
$$f(t;\delta_1,\delta_2) = \mathbb{E}\log\frac{2}{1+e^{-t\sig(Z+\delta_1)}} + \mathbb{E}\log\frac{2}{1+e^{t\sig(Z+\delta_2)}},$$
with $Z\sim N(0,1)$. Direct calculations give
\begin{eqnarray*}
\frac{\partial}{\partial t}f(t;\delta_1,\delta_2) &=& \mathbb{E}\frac{1}{1+e^{t\sig(Z+\delta_1)}}\sig(Z+\delta_1) \\
&& - \mathbb{E}\frac{1}{1+e^{-t\sig(Z+\delta_2)}}\sig(Z+\delta_2), \\
\frac{\partial^2}{\partial t^2}f(t;\delta_1,\delta_2) &=& -\mathbb{E}\frac{e^{t\sig(Z+\delta_1)}}{(1+e^{t\sig(Z+\delta_1)})^2}|\sig(Z+\delta_1)|^2 \\
&& -\mathbb{E}\frac{e^{t\sig(Z+\delta_2)}}{(1+e^{t\sig(Z+\delta_2)})^2}|\sig(Z+\delta_2)|^2.
\end{eqnarray*}
Therefore, we have $f(0;\delta_1,\delta_2)=0$, $\frac{\partial}{\partial t}f(t;\delta_1,\delta_2)|_{t=0}=\frac{1}{2}\left(\mathbb{E}\sig(Z+\delta_1)-\mathbb{E}\sig(Z+\delta_2)\right)$, and $\frac{\partial^2}{\partial t^2}f(t;\delta_1,\delta_2)\geq -\frac{1}{2}$, which then implies
$$f(\kappa;\delta_1,\delta_2)\geq f(0;\delta_1,\delta_2) + \kappa\frac{\partial}{\partial t}f(t;\delta_1,\delta_2)|_{t=0}-\frac{1}{4}\kappa^2.$$
or equivalently $\kappa\frac{\partial}{\partial t}f(t;\delta_1,\delta_2)|_{t=0}\leq f(\kappa;\delta_1,\delta_2)+\frac{1}{4}\kappa^2$. In view of the bound (\ref{eq:bound-t-delta}), we have
\begin{eqnarray*}
&& \frac{\kappa}{2}\int\left(\sig(z+v^T{\theta})-\sig(z+v^T\wh{\theta})\right)\phi(z)dz \\
&\leq& 2C\kappa\left(\sqrt{\frac{p}{n}}+\sqrt{\frac{\log(1/\delta)}{n}}\right) + 4\kappa\epsilon + \frac{1}{4}\kappa^2,
\end{eqnarray*}
where $\phi(z)=\frac{1}{\sqrt{2\pi}}e^{-z^2/2}$.
A symmetric argument with $u_1=-v$ leads to the same bound for $\frac{\kappa}{2}\int\left(\sig(z+v^T\wh{\theta})-\sig(z+v^T{\theta})\right)\phi(z)dz$, and thus
\begin{eqnarray*}
&& \left|\int\left(\sig(z+v^T\wh{\theta})-\sig(z+v^T\theta)\right)\phi(z)dz\right| \\
&\leq& 2C\left(\sqrt{\frac{p}{n}}+\sqrt{\frac{\log(1/\delta)}{n}}\right) + 4\epsilon + \frac{1}{4}\kappa.
\end{eqnarray*}
Define the following function $h(t)=\int \sig(z+v^T\theta+t)\phi(z)dz$, take $\kappa=O(\sqrt{p/n}+\epsilon)$ and $\delta=e^{C'(p+n\epsilon^2)}$, and then the above bound becomes $|h(v^T(\wh{\theta}-\theta))-h(0)|=O(\sqrt{p/n}+\epsilon)$. It is easy to see that $h'(0)\geq \int \sig(z+M)(1-\sig(z+M))\phi(z)dz$, which is a constant, and the continuity of $h'(t)$ implies that there are small constants $c_1,c_2>0$, such that $\inf_{|t|\leq c_1}|h'(t)|\geq c_2$. Thus, as long as $|h(t)-h(0)|$ is sufficiently small, we have $|h(t)-h(0)|\geq c_2|t|$, which implies that $|v^T(\wh{\theta}-\theta)|=O(\sqrt{p/n}+\epsilon)$. The proof is complete by taking supreme over all unit vector $v$.
\end{proof}

\begin{proof}[Proof of Proposition \ref{prop:cov-no-con}]
We use the same notation $F_{w,u}(P,Q)$ defined in the proof of Proposition \ref{prop:robust-mean}, but $T_{w,u}(x)=\sig(\sum_{j\geq 1}w_j\relu(u_j^Tx))$. Then, by the same argument used in (\ref{eq:1M})-(\ref{eq:M5}) (with $\epsilon=0$ and Lemma \ref{lem:complexity-T1} replaced by Lemma \ref{lem:complexity-T2}), we have
$$F_{w,u}(N(0,\Sigma),N(0,\wh{\Sigma}))+\log 4\leq 2C\kappa\left(\sqrt{\frac{p}{n}}+\sqrt{\frac{\log(1/\delta)}{n}}\right),$$
uniformly over all $\|w\|_1\leq\kappa$ and all $\|u_j\|\leq 1$ with probability at least $1-\delta$. Choose $w_1=w_2=\kappa/2$, $w_j=0$ for all $j\geq 3$, $u_1=v$ and $u_2=-v$ for some unit vector $\|v\|=1$, and then we have
\begin{equation}
f\left(\kappa;\sqrt{v^T\Sigma v},\sqrt{v^T\wh{\Sigma}v}\right) \leq 2C\kappa\left(\sqrt{\frac{p}{n}}+\sqrt{\frac{\log(1/\delta)}{n}}\right), \label{eq:911gt3}
\end{equation}
where
$$f(t;\delta_1,\delta_2) = \mathbb{E}\log\frac{2}{1+e^{-\frac{t}{2}\delta_1|Z|}}+\mathbb{E}\log\frac{2}{1+e^{\frac{t}{2}\delta_2|Z|}},$$
with $Z\sim N(0,1)$. Then,
\begin{eqnarray*}
\frac{\partial}{\partial t}f(t;\delta_1,\delta_2) &=& \frac{1}{2}\mathbb{E}\frac{1}{1+e^{t\delta_1|Z|/2}}\delta_1|Z| - \frac{1}{2}\mathbb{E}\frac{1}{1+e^{-t\delta_2|Z|/2}}\delta_2|Z|, \\
\frac{\partial^2}{\partial t^2}f(t;\delta_1,\delta_2) &=& -\frac{1}{4}\mathbb{E}\frac{e^{t\delta_1|Z|/2}}{(1+e^{t\delta_1|Z|/2})^2}|\delta_1Z|^2 - \frac{1}{4}\mathbb{E}\frac{e^{t\delta_2|Z|/2}}{(1+e^{t\delta_2|Z|/2})^2}|\delta_2Z|^2.
\end{eqnarray*}
Therefore, we have $f(0;\delta_1,\delta_2)=0$, $\frac{\partial}{\partial t}f(t;\delta_1,\delta_2)|_{t=0}=\frac{1}{4}\mathbb{E}|Z|(\delta_1-\delta_2)$, and
$$\frac{\partial^2}{\partial t^2}f\left(t;\sqrt{v^T\Sigma v},\sqrt{v^T\wh{\Sigma}v}\right)\geq-\frac{M}{8},$$
which implies
$$f(\kappa;\delta_1,\delta_2)\geq f(0;\delta_1,\delta_2) + \kappa\frac{\partial}{\partial t}f(t;\delta_1,\delta_2)|_{t=0}-\frac{M}{16}\kappa^2.$$
Then, by the bound (\ref{eq:911gt3}), we have
$\sqrt{v^T\Sigma v}-\sqrt{v^T\wh{\Sigma}v}\lesssim \sqrt{\frac{p}{n}}$
by taking $\kappa=O(\sqrt{p/n})$ and $\delta=e^{C'p}$. A symmetric argument also leads to the same bound for $\sqrt{v^T\wh{\Sigma}v}-\sqrt{v^T\Sigma v}$, so that $\left|\sqrt{v^T\wh{\Sigma}v}-\sqrt{v^T\Sigma v}\right|\lesssim \sqrt{\frac{p}{n}}$, which is equivalent to $|v^T(\wh{\Sigma}-\Sigma)v|\lesssim \left(\sqrt{v^T\wh{\Sigma}v}+\sqrt{v^T\Sigma v}\right)\sqrt{\frac{p}{n}}\leq 2\sqrt{M}\sqrt{\frac{p}{n}}$. The proof is complete by taking supreme over all unit vector $v$.
\end{proof}

\subsection{Proofs of Theorem \ref{thm:cov-T3} and Theorem \ref{thm:cov-T4}}

\begin{proof}[Proof of Theorem \ref{thm:cov-T3}]
We use the notation
$$F_{w,u,b}(P,Q)=\mathbb{E}_{X\sim P}S(T_{w,u,b}(X),1)+\mathbb{E}_{X\sim Q}S(T_{w,u,b}(X),0),$$
with $T_{w,u,b}(x)=\sig(\sum_{j\geq 1}w_j\sig(u_j^Tx+b_j))$. We also write
$$F(P,Q)=\max_{\|w\|_1\leq\kappa,u_j\in\mathbb{R}^p,b_j\in\mathbb{R}}F_{w,u,b}(P,Q).$$
For $P$ that satisfies $\TV(P,N(0,\Sigma))\leq\epsilon$, there exist probability distributions $Q_1$ and $Q_2$, such that $P+\epsilon Q_1=N(0,\Sigma)+\epsilon Q_2$. Therefore,
\begin{eqnarray*}
&& \left|F(P,N(0,\Gamma))-F(N(0,\Sigma),N(0,\Gamma))\right| \\
&\leq& \max_{\|w\|_1\leq\kappa,u_j\in\mathbb{R}^p,b_j\in\mathbb{R}}\left|F_{w,u,b}(P,N(0,\Gamma))-F_{w,u,b}(N(0,\Sigma),N(0,\Gamma))\right| \\
&=& \epsilon\max_{\|w\|_1\leq\kappa,u_j\in\mathbb{R}^p,b_j\in\mathbb{R}}\left|\mathbb{E}_{X\sim Q_2}S(T_{w,u,b}(X),1)-\mathbb{E}_{X\sim Q_1}S(T_{w,u,b}(X),1)\right| \\
&\leq& 2\epsilon\sup_{|t-1/2|\leq\kappa}\left|S(t,1)-G\left(\frac{1}{2}\right)-\frac{1}{2}G'\left(\frac{1}{2}\right)\right| \\
&\leq& 2\epsilon\kappa\sup_{\left|t-\frac{1}{2}\right|\leq\kappa}\left|(1-t)G''(t)\right| \leq 2C_1\epsilon\kappa,
\end{eqnarray*}
where we have used $\sup_{\left|t-\frac{1}{2}\right|\leq\kappa}\left|(1-t)G''(t)\right|\leq C_1$ because of the smoothness of $G(t)$ at $t=1/2$ by Condition \ref{cond:G}.
In the second last inequality above, we have used the fact that $\frac{\partial}{\partial t}S(t,1)=(1-t)G''(t)$ and $S(t,1)=G(t)+(1-t)G'(t)$.

 Then, by the same argument used in (\ref{eq:1M})-(\ref{eq:M5}) (with Lemma \ref{lem:complexity-T1} replaced by Lemma \ref{lem:complexity-T3}), we have
\begin{equation}
F_{w,u,b}(N(0,\Sigma),N(0,\wh{\Sigma}))-2G(1/2)\leq 2C\kappa\left(\sqrt{\frac{p}{n}}+\sqrt{\frac{\log(1/\delta)}{n}}\right)+4C_1\epsilon\kappa,\label{eq:rs7}
\end{equation}
uniformly over all $\|w\|_1\leq\kappa$, all $u_j$ and all $b_j$ with probability at least $1-\delta$. We choose $w_1=\kappa$, $w_j=0$ for all $j\geq 2$, $u_1=v/\sqrt{v^T\Sigma v}$ for some unit vector $v$, and $b_1=1$. Then, we have
\begin{equation}
f\left(\kappa;\sqrt{\frac{v^T\wh{\Sigma}v}{v^T\Sigma v}}\right) \leq 2C\kappa\left(\sqrt{\frac{p}{n}}+\sqrt{\frac{\log(1/\delta)}{n}}\right)+4C_1\epsilon\kappa, \label{eq:routine}
\end{equation}
where
$$f(t;\Delta) = \mathbb{E}S\left(\frac{1}{1+e^{-t\sig(Z-1)}},1\right)+\mathbb{E}S\left(\frac{1}{1+e^{-t\sig(\Delta Z-1)}},0\right)-2G(1/2),$$
with $Z\sim N(0,1)$. We introduce some polynomials,
$$p(t)=(1-t)^2t,\quad \bar{p}(t)=2(1-t)^2t^2-(1-t)^3t,\quad \tilde{p}(t)=(1-t)^3t^2,$$
$$q(t)=(1-t)t^2,\quad \bar{q}(t)=2(1-t)^2t^2-(1-t)t^3,\quad \tilde{q}(t)=(1-t)^2t^3.$$
Then, by standard calculations, we get
\begin{eqnarray*}
\frac{\partial}{\partial t}f(t;\Delta)
 &=& \mathbb{E}p\left(L(t,1,Z)\right)G''\left(L(t,1,Z)\right)\sig(Z-1) \\
&& - \mathbb{E}q\left(L(t,\Delta,Z)\right)G''\left(L(t,\Delta,Z)\right)\sig(\Delta Z-1), \\
\end{eqnarray*}
and
\begin{eqnarray*}
&& \frac{\partial^2}{\partial t^2}f(t;\Delta) \\
&=& -\mathbb{E}\left[\bar{p}(L(t,1,Z))G''(L(t,1,Z)) - \tilde{p}(L(t,1,Z))G'''(L(t,1,Z))\right]|\sig(Z-1)|^2 \\
&& - \mathbb{E}\left[\wt{q}(L(t,\Delta,Z))G''(L(t,\Delta,Z))-\tilde{q}(L(t,\Delta,Z))G'''(L(t,\Delta,Z))\right]|\sig(\Delta Z-1)|^2,
\end{eqnarray*}
where we use the notation $L(t,\delta,Z)=\sig(t\sig(\delta Z-1))$. Note that $f(0;\Delta)=0$ and
$$\frac{\partial}{\partial t}f(t;\Delta)|_{t=0}=\frac{1}{8}G''(1/2)\mathbb{E}\left(\sig(Z-1)-\sig(\Delta Z-1)\right).$$
Since
$$\bar{p}(1/2)G''(1/2)-\tilde{p}(1/2)G'''(1/2)=\frac{1}{16}G''(1/2)-\frac{1}{32}G'''(1/2)\geq \frac{1}{32}c_0$$
by Condition \ref{cond:G}, for a sufficiently small $\kappa$, we have
$$\inf_{|t-1/2|\leq\kappa}\left[\bar{p}(t)G''(t)-\tilde{p}(t)G'''(t)\right]>0.$$
Moreover, there is some constant $C_2>0$, such that
$$\sup_{|t-1/2|\leq\kappa}\left[\bar{p}(t)G''(t)-\tilde{p}(t)G'''(t)\right]\leq C_2.$$
For the same reason, we also have
$$0<\inf_{|t-1/2|\leq\kappa}\left[\bar{q}(t)G''(t)-\tilde{q}(t)G'''(t)\right]\leq \sup_{|t-1/2|\leq\kappa}\left[\bar{q}(t)G''(t)-\tilde{q}(t)G'''(t)\right]\leq C_2.$$
Since $|t|\leq\kappa$ implies $|L(t,\delta,Z)-1/2|\leq\kappa$, we have
$$\inf_{|t|\leq\kappa}\frac{\partial^2}{\partial t^2}f(t;\Delta)\geq -C_2\mathbb{E}|\sig(Z-1)|^2-C_2\mathbb{E}|\sig(\Delta Z-1)|^2\geq -2C_2.$$
Therefore,
$$f(\kappa;\Delta)\geq f(0;\Delta) + \kappa\frac{\partial}{\partial t}f(t;\Delta)|_{t=0}-C_2\kappa^2.$$
Together with the bound (\ref{eq:routine}), we have
\begin{eqnarray*}
&& \frac{1}{8}G''(1/2)\mathbb{E}\left(\sig(Z-1)-\sig(\Delta Z-1)\right) \\
&\leq& 2C\left(\sqrt{\frac{p}{n}}+\sqrt{\frac{\log(1/\delta)}{n}}\right)+4C_1\kappa + C_2\kappa,
\end{eqnarray*}
with $\Delta=\sqrt{\frac{v^T\wh{\Sigma}v}{v^T\Sigma v}}$.
A symmetric argument with $w_1=-\kappa$ leads to the same bound for
$$\frac{1}{8}G''(1/2)\mathbb{E}\left(\sig(\Delta Z-1)-\sig(Z-1)\right).$$
Thus, with the choice $\kappa=O\left(\sqrt{\frac{p}{n}}+\epsilon\right)$ and $\delta=e^{-C'(p+n\epsilon^2)}$, we have
$$\left|h\left(\Delta\right)-h(1)\right|\leq C_3\left(\sqrt{\frac{p}{n}}+\epsilon\right),$$
where $h(t)=\int\sig(t z-1)(2\pi)^{-1/2}e^{-z^2/2}dz$. Note that
$$|h'(1)|=\left|\int \sig(z-1)(1-\sig(z-1))z(2\pi)^{-1/2}e^{-z^2/2}dz\right|$$
is a constant. The continuity of $h'(t)$ implies that there are small constants $c_1,c_2>0$, such that $\inf_{|t-1|\leq c_1}|h'(t)|\geq c_2$. Thus, as long as $|h(t)-h(1)|$ is sufficiently small, we have $|h(t)-h(1)|\geq c_2|t-1|$, which implies that $\left|\sqrt{v^T\wh{\Sigma}v}-\sqrt{v^T\Sigma v}\right|\lesssim \sqrt{v^T\Sigma v}\left(\sqrt{\frac{p}{n}}+\epsilon\right)\lesssim \sqrt{\frac{p}{n}}+\epsilon$. Following the last several lines of the proof of Proposition \ref{prop:cov-no-con}, we obtain the desired result.
\end{proof}

\begin{proof}[Proof of Theorem \ref{thm:cov-T4}]
We use the notation $T_{w,v,u}(x) = \sig\left(\sum_{j\geq 1}w_j\sig\left(\sum_{l=1}^Hv_{jl}\relu(u_l^Tx)\right)\right)$, and then define
$$F_{w,v,u}(P,Q)=\mathbb{E}_{X\sim P}S(T_{w,v,u}(X),1)+\mathbb{E}_{X\sim Q}S(T_{w,v,u}(X),0).$$
Using Lemma \ref{lem:complexity-T4} and following the same argument that leads to (\ref{eq:rs7}), we have
$$F_{w,v,u}(N(0,\Sigma),N(0,\wh{\Sigma}))-2G(1/2)\leq 2C\kappa_1\kappa_2\left(\sqrt{\frac{p}{n}}+\sqrt{\frac{\log(1/\delta)}{n}}\right)+4C_1\epsilon\kappa_1,$$
uniformly over all $\|w\|_1\leq\kappa_1$, all $\|v_j\|_1\leq\kappa_2$, and all $\|u_l\|\leq 1$. Choose $w_1=\kappa_1$, $w_j=0$ for all $j\geq 2$, $v_{11}=v_{12}=\frac{1}{2}$, $v_{1l}=0$ for all $l\geq 3$, and $u_1=-u_2=v$ for a unit vector $v$. Then, we have
$$f\left(\kappa_1; \sqrt{v^T\Sigma v},\sqrt{v^T\wh{\Sigma}v}\right)\leq 2C\kappa_1\kappa_2\left(\sqrt{\frac{p}{n}}+\sqrt{\frac{\log(1/\delta)}{n}}\right)+4C_1\epsilon\kappa_1,$$
where
$$f(t;\delta_1,\delta_2)=\mathbb{E}S\left(\frac{1}{1+e^{-t\sig(\delta_1|Z|/2)}},1\right)+\mathbb{E}S\left(\frac{1}{1+e^{-t\sig(\delta_2|Z|/2)}},0\right)-2G(1/2).$$
Then, using the same argument in the proof of Theorem \ref{thm:cov-T3}, we have
$$|h(\delta_1)-h(\delta_2)|\lesssim \sqrt{\frac{p}{n}}+\epsilon,$$
with the choice $\kappa_1=O\left(\sqrt{\frac{p}{n}}+\epsilon\right)$, $\kappa_2\asymp 1$, and $\delta=e^{-C'(p+n\epsilon^2)}$, where $\delta_1=\sqrt{v^T\Sigma v}$, $\delta_2=\sqrt{v^T\wh{\Sigma} v}$, and $h(t)=\int \sig(t|z|/2)\phi(z)dz$. The notation $\phi(\cdot)$ is used for the density function of $N(0,1)$. Since
\begin{eqnarray*}
h'(t) &=& \int\sig(t|z|/2)(1-\sig(t|z|/2))|z|\phi(z)dz \\
&\geq& \int\sig(M^{1/2}|z|/2)(1-\sig(M^{1/2}|z|/2))|z|\phi(z)dz,
\end{eqnarray*}
which means $h'(t)$ is lower bounded by a constant uniformly over $|t|\leq M^{1/2}$, we have $|h(\delta_1)-h(\delta_2)|\geq c|\delta_1-\delta_2|$. Following the last several lines of the proof of Proposition \ref{prop:cov-no-con}, we obtain the desired result.
\end{proof}

\subsection{Proofs of Theorem \ref{thm:mean-cov} and Theorem \ref{thm:ellip}}

\begin{proof}[Proof of Theorem \ref{thm:mean-cov}]
We use the notation $T_{w,g}(x)=\sig\left(\sum_{j\geq 1}w_jg_j(x)\right)$, and write
$$F_{w,g}(P,Q)=\mathbb{E}_{X\sim P}S(T_{w,g}(X),1)+\mathbb{E}_{X\sim Q}S(T_{w,g}(X),0).$$
By Lemma \ref{lem:complexity-deep} and the same argument that leads to (\ref{eq:rs7}), we have
\begin{equation}
F_{w,g}(N(\theta,\Sigma),N(\wh{\theta},\wh{\Sigma})) - 2G(1/2) \leq 2C\kappa(2B)^{L-1}\left(\sqrt{\frac{p}{n}}+\sqrt{\frac{\log(1/\delta)}{n}}\right) + 4C_1B^{L-1}\kappa\epsilon,\label{eq:amg-gtr}
\end{equation}
uniformly over $\|w\|_1\leq\kappa$ and $g_j\in\mathcal{G}^L(B)$. The second term in the above bound is $4C_1B^{L-1}\kappa\epsilon$ instead of $4C_1\kappa\epsilon$ in (\ref{eq:rs7}) because $T\in\mathcal{T}^L(\kappa,B)$ implies that $\sup_x|T(x)-1/2|\leq B^{L-1}\kappa$ according to the proof of Lemma \ref{lem:complexity-deep}.

We need to show that the function $\sig(u^Tx+b)\in\mathcal{G}^L(B)$ for any $u\in\mathbb{R}^p$ and any $b\in\mathbb{R}$. Note that $\sig(u^Tx+b)\in\mathcal{G}^1(B)$ is obvious. This is also true for $\mathcal{G}^2(B)$ by taking $\sig(u^Tx+b)\in\mathcal{G}_{\sig}$, $v_1=1$ and $v_h=0$ for all $h\geq 2$, because $\relu(\sig(\cdot))=\sig(\cdot)$. Suppose $\sig(u^Tx+b)\in\mathcal{G}^l(B)$, we also have $\sig(u^Tx+b)\in\mathcal{G}^{l+1}(B)$, because $\relu(\relu(\cdot))=\relu(\cdot)$, and the claim is proved by an induction argument.

Choose $w_1=\kappa$, $w_j=0$ for all $j\geq 2$, and $g_1(x)=\sig(u^Tx+b)$, and then we have
\begin{eqnarray}
\nonumber && \mathbb{E}_{X\sim N(\theta,\Sigma)}S\left(\frac{1}{1+e^{-\kappa\sig(u^TX+b)}},1\right) + \mathbb{E}_{X\sim N(\wh{\theta},\wh{\Sigma})}S\left(\frac{1}{1+e^{-\kappa\sig(u^TX+b)}},0\right)  \\
\label{eq:gt2-rs} &\leq& 2G(1/2) + 2C\kappa(2B)^{L-1}\left(\sqrt{\frac{p}{n}}+\sqrt{\frac{\log(1/\delta)}{n}}\right) + 4C_1B^{L-1}\kappa\epsilon,
\end{eqnarray}
uniformly over all $u\in\mathbb{R}^p$ and all $b\in\mathbb{R}$, with probability at least $1-\delta$.

We further specify that $u=v$ for some unit vector $v$ and $b=-v^T\theta$. Then, the inequality (\ref{eq:gt2-rs}) becomes
$$f\left(\kappa;\sqrt{v^T\Sigma v},\sqrt{v^T\wh{\Sigma}v},v^T(\wh{\theta}-\theta)\right)\leq 2C\kappa(2B)^{L-1}\left(\sqrt{\frac{p}{n}}+\sqrt{\frac{\log(1/\delta)}{n}}\right) + 4C_1B^{L-1}\kappa\epsilon,$$
where
$$f(t;\delta_1,\delta_2,\Delta) = \mathbb{E}S\left(\frac{1}{1+e^{-t\sig(\delta_1 Z)}},1\right) + \mathbb{E}S\left(\frac{1}{1+e^{-t\sig(\delta_2 Z+\Delta)}},0\right)-2G(1/2),$$
with $Z\sim N(0,1)$. Then, using the same argument in the proof of Theorem \ref{thm:cov-T3}, we have
\begin{equation}
\left|\int \sig(\delta_1 z)\phi(z)dz - \int \sig(\delta_2 z+\Delta)\phi(z)dz\right|\lesssim \sqrt{\frac{p}{n}}+\epsilon,\label{eq:cayman-gts}
\end{equation}
with the choice $\kappa=O\left(\sqrt{\frac{p}{n}}+\epsilon\right)$, $B\asymp 1$, $L\asymp 1$ and $\delta=e^{-C'(p+n\epsilon^2)}$, where $\delta_1=\sqrt{v^T\Sigma v}$, $\delta_2=\sqrt{v^T\wh{\Sigma} v}$, $\Delta=v^T(\wh{\theta}-\theta)$, and $\phi(\cdot)$ is the density function of $N(0,1)$. Since $\int \sig(\delta_1 z)\phi(z)dz=1/2=\int \sig(\delta_2 z)\phi(z)dz$, this is equivalent to the bound $|h(0)-h(\Delta)|\lesssim \sqrt{\frac{p}{n}}+\epsilon$ with $h(t)=\int \sig(\delta_2 z+t)\phi(z)dz$. It is easy to see that $|h'(0)|\geq \inf_{|\delta_2|\leq M^{1/2}}\int \sig(\delta_2z)(1-\sig(\delta_2z))\phi(z)dz$, which is a constant, and the continuity of $h'(t)$ implies that there are small constants $c_1,c_2>0$, such that $\inf_{|t|\leq c_1}|h'(t)|\geq c_2$. Thus, as long as $|h(t)-h(0)|$ is sufficiently small, we have $|h(t)-h(0)|\geq c_2|t|$, which implies that $|v^T(\wh{\theta}-\theta)|\lesssim \sqrt{p/n}+\epsilon$. Taking supreme over all unit vector, we have $\|\wh{\theta}-\theta\|\lesssim\sqrt{p/n}+\epsilon$ with high probability.

To show the error bound for the covariance matrix estimator, we choose $u=v$ and $b=-v^T\theta-1$ for some unit vector $v$ in the inequality (\ref{eq:gt2-rs}). Then, (\ref{eq:cayman-gts}) becomes
$$\left|\int \sig(\delta_1 z-1)\phi(z)dz - \int \sig(\delta_2 z+\Delta-1)\phi(z)dz\right|\lesssim \sqrt{\frac{p}{n}}+\epsilon.$$
Since $|\sig(\delta_2 z+\Delta-1) - \sig(\delta_2 z-1)|\leq |\Delta|\lesssim \sqrt{\frac{p}{n}}+\epsilon$, we have
$$\left|\int \sig(\delta_1 z-1)\phi(z)dz - \int \sig(\delta_2 z-1)\phi(z)dz\right|\lesssim \sqrt{\frac{p}{n}}+\epsilon,$$
which can be written as $|\bar{h}(\delta_1)-\bar{h}(\delta_2)|\lesssim \sqrt{\frac{p}{n}}+\epsilon$, with $\bar{h}(t)=\int \sig(tz-1)\phi(z)dz$. Since $\inf_{0<t\leq M^{1/2}}|\bar{h}'(t)|=\inf_{0<t\leq M^{1/2}}\left|\int\sig(tz-1)(1-\sig(tz-1))z\phi(z)dz\right|$ is lower bounded by some constant, we have $|\delta_1-\delta_2|\lesssim \sqrt{\frac{p}{n}}+\epsilon$, and by following the last several lines of the proof of Proposition \ref{prop:cov-no-con}, we obtain the desired result.
\end{proof}

\begin{proof}[Proof of Theorem \ref{thm:ellip}]
We use the same notation $T_{w,g}(x)$ and $F_{w,g}(P,Q)$ defined in the proof of Theorem \ref{thm:mean-cov}. The same argument that leads to (\ref{eq:amg-gtr}) gives the following inequality,
\begin{eqnarray}
\nonumber && F_{w,g}(E(\theta,\Sigma,H),E(\wh{\theta},\wh{\Sigma},\wh{H})) - 2G(1/2) \\
\label{eq:focus-rs} &\leq& 2C\kappa(2B)^{L-1}\left(\sqrt{\frac{p}{n}}+\sqrt{\frac{\log(1/\delta)}{n}}\right) + 4C_1B^{L-1}\kappa\epsilon,
\end{eqnarray}
uniformly over $\|w\|_1\leq\kappa$ and $g_j\in\mathcal{G}^L(B)$.

Choose $w_1=\kappa$, $w_j=0$ for all $j\geq 2$, and $g_1(x)=\ramp\left(\frac{v^Tx-v^T\wh{\theta}}{\sqrt{v^T\Sigma v}}\right)$, for some unit vector $v$, and then we have
$$f\left(\kappa;\sqrt{v^T\Sigma v},\sqrt{v^T\wh{\Sigma}v},v^T(\theta-\wh{\theta})\right)\leq 2C\kappa(2B)^{L-1}\left(\sqrt{\frac{p}{n}}+\sqrt{\frac{\log(1/\delta)}{n}}\right) + 4C_1B^{L-1}\kappa\epsilon,$$
where
\begin{eqnarray*}
f(t;\delta_1,\delta_2,\Delta) &=& \int S\left(\frac{1}{1+e^{-t\ramp(z+\Delta/\delta_1)}},1\right)dH(t) \\
&& + \int S\left(\frac{1}{1+e^{-t\ramp((\delta_2/\delta_1) z)}},0\right)d\wh{H}(t) -2G(1/2).
\end{eqnarray*}
Note that $\sig(u^Tx+b)\in\mathcal{G}^L(B)$  for any $u\in\mathbb{R}^p$ and any $b\in\mathbb{R}$ has already been proved in the proof of Theorem \ref{thm:mean-cov}. The same argument also leads to the same conclusion for $\ramp(u^Tx+b))$ for any $u\in\mathbb{R}^p$ and any $b\in\mathbb{R}$. Then, using the same argument in the proof of Theorem \ref{thm:cov-T3} ($\ramp(\cdot)$ is bounded between $0$ and $1$ just as $\sig(\cdot)$), we have
$$
\left|\int \ramp(z+\Delta/\delta_1)dH(z) - \int \ramp((\delta_2/\delta_1) z)d\wh{H}(z)\right|\lesssim \sqrt{\frac{p}{n}}+\epsilon,$$
with the choice $\kappa=O\left(\sqrt{\frac{p}{n}}+\epsilon\right)$, $B\asymp 1$, $L\asymp 1$ and $\delta=e^{-C'(p+n\epsilon^2)}$, where $\delta_1=\sqrt{v^T\Sigma v}$, $\delta_2=\sqrt{v^T\wh{\Sigma} v}$, $\Delta=v^T(\theta-\wh{\theta})$. Since $\int \ramp((\delta_2/\delta_1) z)d\wh{H}(z)=1/2=\int \ramp(z)dH(z)$ by $H(t)+H(-t)\equiv1$, the above inequality is equivalent to $|h(\Delta/\delta_1)-h(0)|\lesssim\sqrt{\frac{p}{n}}+\epsilon$, where $h(t)=\int\ramp(z+t)dH(z)$. Note that $h'(t)=\mathbb{P}_{Z\sim H}(|Z+t|\leq 1/2)$. By the condition $H\in\mathcal{H}(M')$, we have $\inf_{|t|\leq 1/6}|h'(t)|\geq 1/M'$. By the monotonicity of $h(t)$, as long as $|h(t)-h(0)|$ is sufficiently small, we have $|h(t)-h(0)|\geq (M')^{-1}|t|$, which implies
\begin{equation}
\frac{|v^T(\wh{\theta}-\theta)|}{\sqrt{v^T\Sigma v}}\lesssim \sqrt{\frac{p}{n}}+\epsilon. \label{eq:civic-type-r}
\end{equation}
Since $\delta_1\leq M^{1/2}$, we have $|v^T(\wh{\theta}-\theta)|\lesssim \sqrt{\frac{p}{n}}+\epsilon$. Taking supreme over all unit vector, we have $\|\wh{\theta}-\theta\|\lesssim\sqrt{p/n}+\epsilon$ with high probability.

To show the error bound for the scatter matrix estimator, we choose $w_1=\kappa/2$, $w_2=-\kappa/2$, $w_j=0$ for all $j\geq 3$, $g_1(x)=\ramp\left(\frac{v^T(x-\theta)}{\sqrt{v^T{\Sigma}v}}-\frac{1}{2}\right)$, and $g_2(x)=\ramp\left(-\frac{v^T(x-\theta)}{\sqrt{v^T{\Sigma}v}}-\frac{1}{2}\right)$ for some unit vector $v$ in (\ref{eq:focus-rs}). Since
$$\sum_{j\geq 1}w_jg_j(x)=\frac{\kappa}{2}R\left(\left|\frac{v^T(x-\theta)}{\sqrt{v^T{\Sigma}v}}\right|\right),$$
the inequality (\ref{eq:focus-rs}) becomes
$$\bar{f}\left(\kappa;\sqrt{v^T\Sigma v},\sqrt{v^T\wh{\Sigma}v},v^T(\theta-\wh{\theta})\right)\leq 2C\kappa(2B)^{L-1}\left(\sqrt{\frac{p}{n}}+\sqrt{\frac{\log(1/\delta)}{n}}\right) + 4C_1B^{L-1}\kappa\epsilon,$$
where
\begin{eqnarray*}
\bar{f}(t;\delta_1,\delta_2,\Delta) &=& \int S\left(\frac{1}{1+e^{-tR(|z|)/2}},1\right)dH(z) \\
&& + \int S\left(\frac{1}{1+e^{-tR(|(\delta_2/\delta_1)z-\Delta/\delta_1|)/2}},0\right)d\wh{H}(z) - 2G(1/2).
\end{eqnarray*}
Then, using the same argument in the proof of Theorem \ref{thm:cov-T3}, we have
$$
\left|\int R(|z|)dH(z) - \int R(|(\delta_2/\delta_1)z-\Delta/\delta_1|)d\wh{H}(z)\right|\lesssim \sqrt{\frac{p}{n}}+\epsilon,$$
with the choice $\kappa=O\left(\sqrt{\frac{p}{n}}+\epsilon\right)$, $B\asymp 1$, $L\asymp 1$ and $\delta=e^{-C'(p+n\epsilon^2)}$, where $\delta_1=\sqrt{v^T\Sigma v}$, $\delta_2=\sqrt{v^T\wh{\Sigma} v}$, $\Delta=v^T(\theta-\wh{\theta})$. Since $\left|R(|(\delta_2/\delta_1)z-\Delta/\delta_1|)-R(|(\delta_2/\delta_1)z|)\right|\lesssim |\Delta/\delta_1|$, we have
$$\left|\int R(|(\delta_2/\delta_1)z|)d\wh{H}(z) -\int R(|(\delta_2/\delta_1)z-\Delta/\delta_1|)d\wh{H}(z)\right|\lesssim |\Delta/\delta_1| \lesssim \sqrt{\frac{p}{n}}+\epsilon$$
by (\ref{eq:civic-type-r}). By triangle inequality, we get
$$
\left|\int R(|z|)dH(z) - \int R(|(\delta_2/\delta_1)z|)d\wh{H}(z)\right|\lesssim \sqrt{\frac{p}{n}}+\epsilon,$$
which can be written as $|\bar{h}(1)-\bar{h}(\delta_2/\delta_1)|\lesssim \sqrt{\frac{p}{n}}+\epsilon$, with $\bar{h}(t)=\int R(|tz|)d\wh{H}(z)$, because $\int R(|z|)dH(z)=\int R(|z|)d\Phi(z)=\int R(|z|)d\wh{H}(z)$ by the condition that $H,\wh{H}\in\mathcal{H}$. Since
\begin{eqnarray*}
\bar{h}'(t) &=& 2\int_0^{1/t}zd\wh{H}(z) \\
&\geq& 2\int_{1/(8t)}^{1/t}zd\wh{H}(z) \\
&\geq& \frac{1}{4t}\mathbb{P}_{Z\sim\wh{H}}\left(\frac{1}{8t}\leq Z\leq \frac{1}{t}\right),
\end{eqnarray*}
we have $\inf_{|t-1|\leq 1/2}|\bar{h}'(t)|\geq (8M')^{-1}$. Since $\bar{h}(t)$ is increasing for all $t>0$, as long as $|\bar{h}(t)-\bar{h}(1)|$ is sufficiently small, we have $|\bar{h}(t)-\bar{h}(1)|\geq (8M')^{-1}|t-1|$, which implies $|\delta_2/\delta_1-1|\lesssim \sqrt{\frac{p}{n}}+\epsilon$. Following the last several lines of the proof of Theorem \ref{thm:cov-T3}, we obtain the desired result.
\end{proof}

\subsection{Proofs of Auxiliary Lemmas}\label{sec:aux}

\begin{proof}[Proof of Lemma \ref{lem:complexity-T1}]
The bound for the class $\mathcal{T}_3$ was proved by Lemma 7.2 of \cite{gao2018robust}. The same bound also holds for the class $\mathcal{T}_1$ because $\mathcal{T}_1\subset\mathcal{T}_3$.
\end{proof}

\begin{proof}[Proof of Lemma \ref{lem:complexity-T2}]
Since $X_i\sim N(0,\Sigma)$, we can write $X_i=\Sigma^{1/2}Z_i$ with $Z_i\sim N(0,I_p)$. Define
$$f(Z_1,...,Z_n)=\sup_{T\in\mathcal{T}_2}\left|\frac{1}{n}\sum_{i=1}^n\log T(\Sigma^{1/2}Z_i)-\mathbb{E}\log T(\Sigma^{1/2}Z)\right|.$$
We show $f(Z_1,...,Z_n)$ is a Lipschitz function. We have
\begin{eqnarray}
\nonumber && |f(Z_1,...,Z_n)-f(Y_1,...,Y_n)| \\
\nonumber &\leq& \frac{1}{n}\sum_{i=1}^n\sup_{\|w\|_1\leq\kappa,\|u_j\|\leq 1}\Bigg|\log\sig\left(\sum_{j\geq 1}w_j\relu(u_j^T\Sigma^{1/2}Z_i)\right) \\
\nonumber && -\log\sig\left(\sum_{j\geq 1}w_j\relu(u_j^T\Sigma^{1/2}Y_i)\right)\Bigg| \\
\label{eq:Lip1} &\leq& \frac{1}{n}\sum_{i=1}^n\sup_{\|w\|_1\leq\kappa,\|u_j\|\leq 1}\left|\sum_{j\geq 1}w_j\relu(u_j^T\Sigma^{1/2}Z_i)-\sum_{j\geq 1}w_j\relu(u_j^T\Sigma^{1/2}Y_i)\right| \\
\nonumber &\leq& \frac{\kappa}{n}\sum_{i=1}^n\max_{j\geq 1}\sup_{\|u_j\|\leq 1}\left|\relu(u_j^T\Sigma^{1/2}Z_i)-\relu(u_j^T\Sigma^{1/2}Y_i)\right| \\
\nonumber &=& \frac{\kappa}{n}\sum_{i=1}^n\sup_{\|u\|\leq 1}\left|\relu(u^T\Sigma^{1/2}Z_i)-\relu(u^T\Sigma^{1/2}Y_i)\right| \\
\label{eq:Lip2} &\leq& \frac{\kappa}{n}\sum_{i=1}^n\sup_{\|u\|\leq 1}\left|u^T\left(\Sigma^{1/2}Z_i-\Sigma^{1/2}Y_i\right)\right| \\
\nonumber &\leq& \frac{M^{1/2}\kappa}{n}\sum_{i=1}^n\|Z_i-Y_i\| \\
\label{eq:Lip2.5} &\leq& \frac{M^{1/2}\kappa}{\sqrt{n}}\sqrt{\sum_{i=1}^n\|Z_i-Y_i\|^2}.
\end{eqnarray}
The inequalities (\ref{eq:Lip1}) and (\ref{eq:Lip2}) are implied by the fact that both the functions $\log\sig(\cdot)$ and $\relu(\cdot)$ have Lipschitz constants bounded by $1$.
Therefore, $f(Z_1,...,Z_n)$ is a Lipschitz function with Lipschitz constant $\frac{M^{1/2}\kappa}{\sqrt{n}}=O\left(\frac{\kappa}{\sqrt{n}}\right)$. By Talagrand's inequality \citep{talagrand1995concentration}, we have
$$f(Z_1,...,Z_n)\leq \mathbb{E}f(Z_1,...,Z_n)+ C\kappa\sqrt{\frac{\log(2/\delta)}{n}},$$
with probability at least $1-\delta$.

To bound $\mathbb{E}f(Z_1,...,Z_n)$, we use a standard symmetrization argument \citep{pollard2012convergence} and obtain the following bound that involves Rademacher complexity,
\begin{equation}
\mathbb{E}f(Z_1,...,Z_n) \leq 2\mathbb{E}\sup_{T\in\mathcal{T}_2}\left|\frac{1}{n}\sum_{i=1}^n\epsilon_i\log T(\Sigma^{1/2}Z_i)\right|,\label{eq:Rademacher-bound}
\end{equation}
where $\epsilon_1,...,\epsilon_n$ are i.i.d. uniform random variables on $\{-1,1\}$.
To bound the Rademacher complexity, we have
\begin{eqnarray}
\nonumber && \mathbb{E}\sup_{T\in\mathcal{T}_2}\left|\frac{1}{n}\sum_{i=1}^n\epsilon_i\log T(\Sigma^{1/2}Z_i)\right| \\
\nonumber &\leq& 2\mathbb{E}\sup_{\|w\|_1\leq\kappa,\|u_j\|\leq 1}\left(\frac{1}{n}\sum_{i=1}^n\epsilon_i\log\sig\left(\sum_{j\geq 1}w_j\relu(u_j^T\Sigma^{1/2}Z_i)\right)\right) \\
\label{eq:Lip3} &\leq& 2\mathbb{E}\sup_{\|w\|_1\leq\kappa,\|u_j\|\leq 1}\left(\frac{1}{n}\sum_{i=1}^n\epsilon_i\left(\sum_{j\geq 1}w_j\relu(u_j^T\Sigma^{1/2}Z_i)\right)\right) \\
\nonumber &\leq& 2\kappa\mathbb{E}\sup_{\|u\|\leq 1}\left|\frac{1}{n}\sum_{i=1}^n\epsilon_i\relu(u^T\Sigma^{1/2}Z_i)\right| \\
\nonumber &\leq& 4\kappa\mathbb{E}\sup_{\|u\|\leq 1}\left(\frac{1}{n}\sum_{i=1}^n\epsilon_i\relu(u^T\Sigma^{1/2}Z_i)\right) \\
\label{eq:Lip4} &\leq& 4\kappa\mathbb{E}\sup_{\|u\|\leq 1}\left(\frac{1}{n}\sum_{i=1}^n\epsilon_iu^T\Sigma^{1/2}Z_i\right) \\
\nonumber &\leq& 4\kappa M^{1/2}\mathbb{E}\left\|\frac{1}{n}\sum_{i=1}^n\epsilon_iZ_i\right\| \\
\label{eq:lucky} &\leq& 4\kappa M^{1/2}\sqrt{\frac{p}{n}}.
\end{eqnarray}
The inequalities (\ref{eq:Lip3}) and (\ref{eq:Lip4}) are by Theorem 7 of \cite{meir2003generalization}. The last inequality (\ref{eq:lucky}) is because $\mathbb{E}\left\|\frac{1}{n}\sum_{i=1}^n\epsilon_iZ_i\right\|\leq \sqrt{\mathbb{E}\left\|\frac{1}{n}\sum_{i=1}^n\epsilon_iZ_i\right\|^2}=\sqrt{\frac{p}{n}}$. The proof is complete by combining the bounds above.
\end{proof}

\begin{proof}[Proof of Lemma \ref{lem:complexity-T3}]
Let $f(X_1,...,X_n)=\sup_{T\in\mathcal{T}_3}\left|\frac{1}{n}\sum_{i=1}^n S(T(X_i),1)-\mathbb{E}S(T(X),1)\right|$. Since
\begin{eqnarray*}
&& \sup_{T\in\mathcal{T}_3}\sup_x\left|S(T(x),1)-G\left(\frac{1}{2}\right)-\frac{1}{2}G'\left(\frac{1}{2}\right)\right| \\
&\leq& \sup_{\left|t-\frac{1}{2}\right|\leq\kappa}\left|S(t,1)-G\left(\frac{1}{2}\right)-\frac{1}{2}G'\left(\frac{1}{2}\right)\right| \\
&\leq& \kappa\sup_{\left|t-\frac{1}{2}\right|\leq\kappa}\left|(1-t)G''(t)\right| \leq C_1\kappa,
\end{eqnarray*}
where we have used $\sup_{\left|t-\frac{1}{2}\right|\leq\kappa}\left|(1-t)G''(t)\right|\leq C_1$ because of the smoothness of $G(t)$ at $t=1/2$ by Condition \ref{cond:G}.
In the second last inequality above, we have used the fact that $\frac{\partial}{\partial t}S(t,1)=(1-t)G''(t)$ and $S(t,1)=G(t)+(1-t)G'(t)$. This implies that
$$\sup_{x_1,...,x_n,x_i'}\left|f(x_1,...,x_n)-f(x_1,...,x_{i-1},x_i',x_{i+1},...,x_n)\right|\leq \frac{2C_1\kappa}{n}.$$
Therefore, by McDiarmid's inequality \citep{mcdiarmid1989method}, we have
\begin{equation}
f(X_1,...,X_n)\leq\mathbb{E}f(X_1,...,X_n)+C_1\kappa\sqrt{\frac{2\log(1/\delta)}{n}},\label{eq:golf-r}
\end{equation}
with probability at least $1-\delta$. By the same argument of (\ref{eq:Rademacher-bound}), it is sufficient to bound the Rademacher complexity $\mathbb{E}\sup_{T\in\mathcal{T}_3}\left|\frac{1}{n}\sum_{i=1}^n\epsilon_iS(T(X_i),1)\right|$. Since $T\in\mathcal{T}_3$ implies $|T(X)-1/2|\leq\kappa$, the function $S(\sig(\cdot),1)$ has a Lipschitz constant bounded by $C_1$ on the domain of interest. By Theorem 7 of \cite{meir2003generalization}, we have
\begin{eqnarray}
\nonumber && \mathbb{E}\sup_{T\in\mathcal{T}_3}\left|\frac{1}{n}\sum_{i=1}^n\epsilon_iS(T(X_i),1)\right| \\
\label{eq:carrera-t} &\leq& 2C_1\mathbb{E}\sup_{\|w\|_1\leq\kappa,u_j\in\mathbb{R}^p,b_j\in\mathbb{R}}\left|\frac{1}{n}\sum_{i=1}^n\epsilon_i\sum_{j\geq 1}w_j\sig(u_j^TX_i+b_j)\right|.
\end{eqnarray}
By H\"{o}lder's inequality, we can further bound the above term by
$$2C_1\kappa\mathbb{E}\sup_{u\in\mathbb{R}^p,b\in\mathbb{R}}\left|\frac{1}{n}\sum_{i=1}^n\epsilon_i\sig(u^TX_i+b)\right|.$$
Define $\mathcal{D}=\left\{D(x)=\sig(u^Tx+b):u\in\mathbb{R}^p,b\in\mathbb{R}\right\}$, and then the Rademacher complexity can be bounded by Dudley's integral entropy, which gives
\begin{equation}
\mathbb{E}\sup_{D\in\mathcal{D}}\left|\frac{1}{n}\epsilon_iD(X_i)\right|\leq\mathbb{E}\frac{1}{\sqrt{n}}\int_0^2\sqrt{\log\mathcal{N}(\delta,\mathcal{D},\|\cdot\|_n)}d\delta,\label{eq:dudley}
\end{equation}
where $\mathcal{N}(\delta,\mathcal{D},\|\cdot\|_n)$ is the $\delta$-covering number of $\mathcal{D}$ with respect to the empirical $\ell_2$ distance $\|f-g\|_n=\sqrt{\frac{1}{n}\sum_{i=1}^n(f(X_i)-g(X_i))^2}$. Since the VC-dimension of $\mathcal{D}$ is $O(p)$, we have $\mathcal{N}(\delta,\mathcal{D},\|\cdot\|_n)\lesssim p(16e/\delta)^{O(p)}$ (see Theorem 2.6.7 of \cite{van1996weak}). This leads to the bound $\frac{1}{\sqrt{n}}\int_0^2\sqrt{\log\mathcal{N}(\delta,\mathcal{D},\|\cdot\|_n)}d\delta\lesssim \sqrt{\frac{p}{n}}$, which gives the desired result. 
\end{proof}

\begin{proof}[Proof of Lemma \ref{lem:complexity-T4}]
Following the proof of Lemma \ref{lem:complexity-T2}, we write $X_i=\Sigma^{1/2}Z_i$ with $Z_i\sim N(0,I_p)$. Define
$$f(Z_1,...,Z_n)=\sup_{T\in\mathcal{T}_4}\left|\frac{1}{n}\sum_{i=1}^nS(T(\Sigma^{1/2}Z_i),1)-\mathbb{E}S(T(\Sigma^{1/2}Z),1)\right|.$$
Since $T\in\mathcal{T}_4$ implies $|T(X)-1/2|\leq\kappa_1$, the function $S(\sig(\cdot),1)$ has a Lipschitz constant bounded by some constant $C_1$ on the domain of interest. Therefore,
\begin{eqnarray*}
&& \left|f(Z_1,...,Z_n)-f(Y_1,...,Y_n)\right| \\
&\leq& C_1\frac{1}{n}\sum_{i=1}^n\sup_{\|w\|_1\leq\kappa_1,\|v_j\|_1\leq\kappa_2,\|u_l\|\leq 1}\Bigg|\sum_{j\geq 1}w_j\sig\left(\sum_{l\geq 1}v_{jl}\relu(u_l^T\Sigma^{1/2}Z_i)\right) \\
&& - \sum_{j\geq 1}w_j\sig\left(\sum_{l\geq 1}v_{jl}\relu(u_l^T\Sigma^{1/2}Y_i)\right)\Bigg|.
\end{eqnarray*}
Then, by a successive argument of H\"{o}lder's inequalities and Lipschitz continuity similar to (\ref{eq:Lip1})-(\ref{eq:Lip2.5}), we have
$$\left|f(Z_1,...,Z_n)-f(Y_1,...,Y_n)\right|\leq C_2\frac{\kappa_1\kappa_2}{\sqrt{n}}\sqrt{\sum_{i=1}^n\|Z_i-Y_i\|^2}.$$
By Talagrand's inequality \citep{talagrand1995concentration}, we have
$$f(Z_1,...,Z_n)\leq \mathbb{E}f(Z_1,...,Z_n)+ C\kappa_1\kappa_2\sqrt{\frac{\log(2/\delta)}{n}},$$
with probability at least $1-\delta$.

To bound $\mathbb{E}f(Z_1,...,Z_n)$, it is sufficient to analyze the Rademacher complexity according to (\ref{eq:Rademacher-bound}). Again, this can be done by following the same argument that leads to (\ref{eq:lucky}), and thus we have $\mathbb{E}f(Z_1,...,Z_n)\lesssim \kappa_1\kappa_2\sqrt{\frac{p}{n}}$, which completes the proof.
\end{proof}

\begin{proof}[Proof of Lemma \ref{lem:complexity-deep}]
We first prove the result for $\mathcal{T}=\mathcal{T}^L(\kappa,B)$.
Let $f(X_1,...,X_n)=\sup_{T\in\mathcal{T}^L(\kappa,B)}\left|\frac{1}{n}\sum_{i=1}^n S(T(X_i),1)-\mathbb{E}S(T(X),1)\right|$.
For any $g\in\mathcal{G}_{\sig}$, we have $\sup_x|g(x)|\leq 1$. Suppose for any $g\in\mathcal{G}^l(B)$, $\sup_x|g(x)|\leq \tau$, then we have $\sup_x|g(x)|\leq B\tau$ for any $g\in\mathcal{G}^{l+1}(B)$ by H\"{o}lder's inequality. A mathematical induction argument then gives $\sup_x|g(x)|\leq B^{L-1}$ for any $g\in\mathcal{G}^L(B)$. Therefore, $T\in\mathcal{T}^L(\kappa,B)$ implies that $\sup_x|T(x)-1/2|\leq B^{L-1}\kappa$. By the same argument that derives (\ref{eq:golf-r}), we then have
$$f(X_1,...,X_n)\leq\mathbb{E}f(X_1,...,X_n)+C_1B^{L-1}\kappa\sqrt{\frac{2\log(1/\delta)}{n}},$$
with probability at least $1-\delta$.

It is sufficient to analyze the Rademacher complexity, and we have
\begin{eqnarray}
\nonumber && \mathbb{E}\sup_{T\in\mathcal{T}^L(\kappa,B)}\left|\frac{1}{n}\sum_{i=1}^n\epsilon_iS(T(X_i),1)\right| \\
\label{eq:carrera} &\leq& 2C_1\mathbb{E}\sup_{\|w\|_1\leq\kappa, g_j\in\mathcal{G}^L(B)}\left|\frac{1}{n}\sum_{i=1}^n\epsilon_i\sum_{j\geq 1}w_jg_j(X_i)\right| \\
\label{eq:carrera-s} &\leq& 2C_1\kappa\mathbb{E}\sup_{g\in\mathcal{G}^L(B)}\left|\frac{1}{n}\sum_{i=1}^n\epsilon_ig(X_i)\right| \\
\label{eq:gts} &\leq& 4C_1\kappa\mathbb{E}\sup_{\|v\|_1\leq B, g_h\in\mathcal{G}^{L-1}(B)}\left|\frac{1}{n}\sum_{i=1}^n\epsilon_i\sum_{h=1}^Hv_hg_h(X_i)\right| \\
\label{eq:gt3} &\leq& 4C_1B\kappa\mathbb{E}\sup_{g\in\mathcal{G}^{L-1}(B)}\left|\frac{1}{n}\sum_{i=1}^n\epsilon_ig(X_i)\right| \\
\label{eq:gt3rs} &\leq& 2C_1\kappa(2B)^{L-1}\mathbb{E}\sup_{g\in\mathcal{G}_{\sig}}\left|\frac{1}{n}\sum_{i=1}^n\epsilon_ig(X_i)\right|.
\end{eqnarray}
We explain each of the inequalities above. The first inequality (\ref{eq:carrera}) follows the same argument that derives (\ref{eq:carrera-t}). The inequalities (\ref{eq:carrera-s}) and (\ref{eq:gt3}) are implied by H\"{o}lder's inequality. We have used Theorem 7 of \cite{meir2003generalization} to derive (\ref{eq:gts}). Finally, (\ref{eq:gt3rs}) is from a mathematical induction argument. Note that $\mathbb{E}\sup_{g\in\mathcal{G}_{\sig}}\left|\frac{1}{n}\sum_{i=1}^n\epsilon_ig(X_i)\right|\lesssim \sqrt{\frac{p}{n}}$ can be derived from Dudley's integral entropy (\ref{eq:dudley}), and then we obtain the desired result.

The class $\bar{\mathcal{T}}^L(\kappa,B)$ only differs from ${\mathcal{T}}^L(\kappa,B)$ in the bottom layer. That is, we use $\sig(\cdot)$ in the bottom layer of ${\mathcal{T}}^L(\kappa,B)$ and $\ramp(\cdot)$ in the bottom layer of $\bar{\mathcal{T}}^L(\kappa,B)$. When we prove the result for ${\mathcal{T}}^L(\kappa,B)$, we use the following properties of $\sig(\cdot)$: it is a function bounded between $0$ and $1$; it is increasing; it has a bounded Lipschitz constant. All of the properties hold for $\ramp(\cdot)$, and thus the desired conclusion holds for $\bar{\mathcal{T}}^L(\kappa,B)$ as well.
\end{proof}

\section*{Acknowledgement}

The authors thank Zhaoran Wang for pointing out references on learning implicit models. The authors also thank Jiantao Jiao for pointing out references on scoring rules and for helpful discussion during this project.

\bibliographystyle{plainnat}
\bibliography{Robust}


\end{document}